\documentclass[10pt,reqno]{article}

\usepackage{newtxtext,newtxmath} 
\usepackage{parskip}
\usepackage{microtype}
\usepackage{subcaption}
\usepackage{booktabs} 
\usepackage{natbib}
\usepackage{tikz}
\usepackage{authblk}
\usepackage{wrapfig}
\usepackage{sidecap} 

\usepackage{url}
\usepackage{enumerate} 
\usepackage{enumitem}
\usepackage{bbm}
\usepackage{amsfonts}
\usepackage{bm}

\usepackage{amsthm}

\usepackage{xcolor}
\definecolor{Amaranth}{rgb}{0.9, 0.17, 0.31}
\usepackage[colorlinks=true]{hyperref}%
\usepackage[capitalize, nameinlink, noabbrev]{cleveref}
\hypersetup{
citecolor=violet,
linkcolor=Amaranth,
urlcolor=cyan
}

\usepackage[margin=1in]{geometry}
\usepackage{mathtools}
\usepackage{relsize}

\newcommand\norm[1]{\|#1\|}
\usepackage{dsfont}
\usepackage{mathrsfs}
\usepackage{comment}
\providecommand{\cW}{\mathcal{W}}

\providecommand{\cF}{\mathcal{F}}
\providecommand{\cL}{\mathcal{L}}

\providecommand{\cB}{\mathcal{B}}
\providecommand{\cH}{\mathcal{H}}

\providecommand{\cD}{\mathcal{D}}
\providecommand{\cI}{\mathcal{I}}

\providecommand{\cA}{\mathcal{A}}
\providecommand{\cP}{\mathcal{P}}
\providecommand{\cG}{\mathcal{G}}

\providecommand{\cE}{\mathcal{E}}

\def\1{\mathbbm{1}}

\newcommand{\f}{\mathbf{f}}

\newcommand{\bias}{\mathbf{b}}
\newcommand{\weights}{\mathbf{W}}

\usepackage{thm-restate}
\declaretheorem[name=Theorem]{theorem}
\declaretheorem[name=Lemma,sibling=theorem]{lemma}
\declaretheorem[name=Definition,sibling=theorem]{definition}
\declaretheorem[name=Remark,sibling=theorem]{remark}

\usepackage{tcolorbox}

\newcommand{\abs}[1]{\left|#1\right|}
\newcounter{RonCounter}

\newcounter{LeviCounter}

\usepackage{times}

\title{Neural Networks With Dense Weights \\ Are Not Universal Approximators}
\author[1]{Levi Rauchwerger}
\author[2,3]{Stefanie Jegelka}
\author[4]{Ron Levie}

\affil[1]{Princeton University, Dept of CS}
\affil[2]{MIT, Dept of EECS and CSAIL}
\affil[3]{TUM, School of CIT, MCML, MDSI}
\affil[4]{Technion – IIT, Faculty of Mathematics}

\date{ }
\begin{document}

\maketitle
\begin{abstract}
We investigate the approximation capabilities of dense neural networks. While universal approximation theorems establish that sufficiently large  architectures can approximate arbitrary continuous functions when there are no restrictions on the weights, we show that dense neural networks do not possess this universality. Our argument is based on a model compression approach, combining the weak regularity lemma with an interpretation of feedforward networks as message passing graph neural networks. We consider ReLU neural networks subject to natural constraints on weights and input and output dimensions, which model a notion of dense connectivity. Within this setting, we demonstrate the existence of Lipschitz continuous functions that cannot be approximated by such networks. This highlights intrinsic limitations of neural networks with dense layers and motivates the use of sparse connectivity as a necessary ingredient for achieving true universality.
\end{abstract}
\begin{keywords}%
 Artificial Neural Networks, Dense Neural Networks,  Neural Network Expressivity, Universal Approximation,  Weak Regularity Lemma, Graph Neural Networks %
\end{keywords}
\section{Introduction}
Deep neural networks are central to modern machine learning, driving advances in vision, language, and scientific applications~\citep{lecun2015deep}. Recently, deep networks have grown increasingly large, enhancing their performance while increasing computational costs~\citep{Thompson2020}, which motivates a focus on efficiency in both training and inference~\citep{Menghani2023}. 
One of the most prominent approaches to improving efficiency is the use of sparse neural networks. Sparse models can match or surpass dense ones in accuracy, using less energy, memory, and training time~\citep{mocanu2018,sparse_training_algo,sparsitySurvey,Peste2023}.
These properties have made sparsification methods such as pruning~\citep{lecun1990optimal,pruning1,han2015learning,Lee2019pruning} key in the use of deep networks on limited-resource devices and for lowering inference cost in large-scale systems \citep{modelcompressionforlowpowerapplications}. 
Strikingly, even pruning 80–90\% of parameters often leaves generalization (measured by test error) unaffected~\citep{Frankle2019lottaryHypothesis}. Beyond efficiency, sparsity can enhance optimization at certain levels, act as a built-in regularizer, and reduce sensitivity to noisy data \citep{sparse_mitigate_overfitting2022,sparse_mitigate_overfitting2022_2}. 

These empirical evidences raise the possibility that denseness itself may impose intrinsic limitations. While neural networks are known to approximate broad function classes~\citep{Yarotsky2017approximations}, it remains unclear whether such guarantees persist under dense connectivity constraints. This motivates the following central question: \textbf{Do dense deep neural networks have inferior expressivity?}

Our work focuses on the $L^\infty$-approximation error for $1$-Lipschitz continuous functions bounded by $1$ on a $d$-dimensional domain. Within this setting, the approximation capabilities of neural networks have been extensively studied since the development of classical universal approximation theorems \citep{Cybenko1989, HORNIK1991}. A central result is that any Lipschitz continuous (or more generally continuous) function on a $d$-dimensional compact domain can be approximated to arbitrary accuracy $\epsilon>0$ by a sufficiently large ReLU network~\citep{Mhaskar1996approximation,Pinkus1999approximation}. \cite{Yarotsky2017approximations} gave explicit error bounds for approximating functions in Sobolev spaces of given smoothness using deep and shallow ReLU networks. Notably, \cite{Yarotsky2017approximations} provides rigorous upper and lower bounds on the number of weights required for a given approximation error.

In contrast to prior work that studies the expressivity of general neural networks, we specifically investigate the expressivity limits of dense networks. We consider dense networks whose weight matrices are uniformly bounded, with both entrywise $\ell_\infty$ control and normalized $\ell_1$ magnitude. These constraints enforce a strong form of denseness: nontrivial outputs require contributions from many neurons, so dense connectivity reflects genuine collective computation rather than a few dominating weights. By leveraging the graph structure of neural computation, we connect deep neural networks to the broader class of graph neural networks~\citep{gilmer2017neural, xu2018powerful}, for which a rich theoretical framework has been developed
~\citep{signal23,signal25,expr23,DIDMs25,herbst2025higherorder,ICG,IBG} using dense graph limit theory~\citep{lovasz2012large}. Our analysis builds on the so-called \emph{weak regularity lemma}~\citep{frieze1999quick,lovas2007} to obtain an implicit compression result, showing that -- even as width increases (and depth is held fixed) -- the expressive power of dense networks saturates at a fixed resolution, thereby revealing intrinsic barriers that scaling alone cannot overcome. Our theoretical framework is fundamentally different from both compression-based generalization analyses \citep{arora2018stronger} and optimization-based perspectives such as in \citep{jacot2018neural}. Instead, we study intrinsic expressivity constraints of dense architectures, independent of training, generalization, or optimization dynamics, by leveraging the precise characterizations provided by \cite{Yarotsky2017approximations} and structural graph-theoretic constraints inherent to dense architectures. 
We are not aware of any prior work that uses compression to derive expressivity limits, particularly for dense neural networks.

\textbf{Contributions.}  We provide a theoretical account of a widely observed phenomenon: dense neural networks are fundamentally less expressive than their sparse counterpart. Our results can be summarized as follows. 
\begin{enumerate}
    \item Classical results achieve universality by allowing width to increase with the desired approximation accuracy. In contrast, we show that, under some conditions on the input dimension, dense networks of any fixed depth cannot exploit width alone to achieve universality; their expressive power saturates, revealing an inherent limitation of dense architectures (\cref{thm:ExpressivityBoundsOne}).
    \item Our proof technique is not only novel but also offers a broadly applicable framework for analyzing neural networks. We identify two primary ways this technique can be adopted:
    \begin{itemize}
        \item \textbf{General Approximation Method:} The method can be generalized to any subclass of networks that admits a compression guarantee -- specifically, where there exists a small, bounded class that  approximates the original subclass.
        \item \textbf{Graph-Theoretic Perspective:} By treating the neural network as a computational graph, our approach utilizes tools from GNN theory -- specifically the regularity lemma -- to study architectural properties. This provides a blueprint for using GNN theory, and specifically regularity theorems, to analyze other aspects of neural networks.
    \end{itemize}
    \item As a byproduct, our analysis yields an implicit model compression scheme (\cref{thm:arch_approx}) showing that every dense ReLU network can be approximated by a bounded-size network.
\end{enumerate}
\section{Preliminaries}\label{sec:background}
  Here, we provide graph-theoretic background, including kernels as a dense analogue of graphs \citep{lovasz2012large}, introduce Euclidean and graph networks, propose a formal “strong” notion of dense networks, and recall definitions of network complexity and function approximation.
\paragraph{Graphs.}
For $n\in\mathbb{N}$, we write $[n]=:\{1,\ldots,n\}$. An \emph{attributed graph} (or \emph{graph-signal}) is a pair $(G,\mathbf{f})$, where $G$ is a weighted directed graph with node set $V=[n]$, edge set $E\subseteq V\times V$, and adjacency matrix $\mathbf{A}:=(A_{i,j})_{i,j\in [n]}$, and $\mathbf{f}:V\to \mathbb{R}$ is a \emph{node feature vector} (\emph{signal}). The signal assigns to each node $v \in V$ an attribute $\f(v) \in \mathbb{R}$. Let $B>0$, define a $[-B,B]$-weighted graph as a graph with edge weights in $[-B,B]$.
\paragraph{Partitions.} Let $\mu$ denote the Lebesgue measure on $[0,1]$. A \emph{partition}\label{p:partition} of $[0,1]$ is a sequence $\mathcal{P}_n = \{P_1, \dots, P_n\}$ of disjoint measurable sets such that $\bigcup_{j=1}^n P_j = [0,1]$. The partition is an \emph{equipartition} if $\mu(P_i) = \mu(P_j)$ for all $i,j \in [n]$. By $\cI_n$, we denote the equipartition of $[0,1]$ into $n$ intervals. A partition $\mathcal{Q}_l$ is called a \emph{refinement} of  $\mathcal{P}_k$ if every $Q\in \mathcal{Q}_l$ is a subset of some $P\in \mathcal{P}_k$. The above definitions extend naturally to general intervals $[a,b]$ instead of $[0,1]$.
\paragraph{Kernels.} Kernels generalize the idea of $[-1,1]$-weighted adjacency matrices to the continuous set of nodes $[0,1]$. An \emph{attributed kernel} (or \emph{kernel-signal}) is a pair $(K,f)$, where $K$ is a \emph{kernel}, i.e. a measurable function $K:[0,1]^2 \to [-1,1]$, and $f:[0,1]\to \mathbb{R}$ is a measurable function called \emph{feature vector} (or \emph{signal}).  By $\mathds{1}_S$, we denote the indicator function of a set $S$. 
We call a kernel $K$ a \emph{step kernel} w.r.t. a partition $\cP_n:=\{P_i\}^n_{i=1}$ if 
$K(x,y):=\sum_{i,j\in [n]} c_{i,j}\mathds{1}_{P_{i}\times P_{j}}(x,y)$ for some choice of coefficients $\{c_{i,j}\in [-1,1]\}_{i,j\in [n]}$. Similarly, we call a signal $f$ a \emph{step signal} w.r.t. a partition $\cP_n:=\{P_i\}^n_{i=1}$ if 
$f(x):=\sum_{i\in [n]} c_{i}\mathds{1}_{P_{i}}(x)$ for some choice of coefficients $\{c_{i}\in \mathbb{R}\}_{i\in [n]}$.
\paragraph{Graph-Induced Kernels.} Any $[-B,B]$-weighted graph can be naturally associated with a kernel. Let $G$ be a $[-B,B]$-weighted graph with adjacency matrix $\mathbf{A}=\{A_{i,j}\}_{i,j\in[n]}$ and signal $\f$. Consider the equipartition $\cI_n=\{I_k\}_{k=1}^{n}$ of $[0,1]$ into $n$ intervals. The kernel $K_G$ induced by $G$ is defined by
\(K_{G}(x,y)=\sum_{i,j=1}^{n} \frac{A_{i,j}}{B} \mathds{1}_{I_i}(x) \mathds{1}_{I_j}(y),\) with the induced signal $f_{\mathbf{f}}(z) = \sum_{i=1}^n \f_i \, \mathds{1}_{I_i}(z)$. This construction allows embedding the space of all graphs of all sizes into the space of kernels,\footnote{More accurately, the mapping from graphs to kernels is not one-to-one, and introduces equivalence classes of graphs that induce the same kernel. We will see that two equivalent computational graphs define the same function.}  where approximation properties are more naturally formulated. 

\paragraph{Cut Norm.} 
The cut norm, a fundamental distance measure in graph theory, was introduced by \cite{frieze1999quick} and underpins the cut distance, which serves as the central notion of convergence in the theory of dense graph limits \citep{lovasz2012large}. The \emph{cut norm} of a measurable \( K: [0,1]^2\to \mathbb{R} \) and a measurable  $\f:[0,1]\to\mathbb{R}$ are respectively defined as
\[
\norm{K}_{\square} =\sup_{U,S\subset [0,1] } \abs{\int_{U\times S} K(x,y)dx dy}\quad\text{
and}\quad\norm{f}_{\square} := \sup_{S\subseteq [0,1]}
  \bigg|\int_{S} f(x) dx \bigg|.
\]
A distance between graphs can be defined as the norm of the difference between their induced kernels.
\paragraph{Neural Networks.}
\emph{Neural networks (NNs)}, also known as \emph{multilayer perceptrons (MLPs)}, process vector-valued inputs in $\mathbb{R}^{d_0}$ using a collection of \emph{learnable parameters} (or \emph{weights}) $(\mathbf{W},\mathbf{b})$. These consist of a sequence of weight matrices
\(
\mathbf{W} = (W^{(\ell)})_{\ell=1}^L,\) where \(W^{(\ell)} \in \mathbb{R}^{d_\ell \times d_{\ell-1}},\) and a sequence of bias vectors \(\mathbf{b} = (b^{(\ell)})_{\ell=1}^L\) where \(b^{(\ell)}\in\mathbb{R}^{d_\ell}\). We call $L\in\mathbb{N}$ the  \emph{depth} of the network and refer to each $0\le \ell\le L$ as a \emph{layer}. We call $d_{\ell}\in\mathbb{N}$ the \emph{width} (or alternately the \emph{dimension}) of $\ell^{\mathrm{th}}$ layer and refer to each $i\in [d_\ell]$ as a \emph{channel}. Specifically, we call $d_{0}$ and $d_L$ the \emph{input} and \emph{output dimensions}, respectively, and refer to each $i\in [d_0]$ as an \emph{input channel} and each $i\in[d_L]$ as an \emph{output channel}. The tuple $(L, d_0,\dots,d_L)$ is called the \emph{network architecture}. The number of network parameters is $W:=\sum^L_{\ell=1}d_\ell\cdot d_{\ell-1}+d_\ell$. The \emph{network size} is $n:=\frac{L+2}{L-1}\sum_{\ell=1}^{L-1}d_{\ell}$, serving as an effective size parameter; the rationale is given in \cref{sec:mlpgraph}.

\paragraph{Forward Propagation.}
We work throughout with a normalized parameterization in which each layer is scaled by a global factor depending on the network architecture. Given parameters $(\mathbf{W},\mathbf{b})$, the associated depth-$L$ ReLU network
\(\Theta_{(\mathbf{W},\mathbf{b})} : \mathbb{R}^{d_0} \to \mathbb{R}^{d_L}\)
is defined as follows. For an input $\mathbf{x}\in\mathbb{R}^{d_0}$, set $\mathbf{h}^{(0)} := \mathbf{x}$. The activation of the first hidden layer in channel $i\in[d_1]$ is
\begin{equation}
\mathbf{h}^{(1)}_i
= \mathrm{ReLU}\!\left(
\frac{1}{d_0(L+2)}
\sum_{j=1}^{d_0} \bigl(W^{(1)}_{ij}\mathbf{h}^{(0)}_j + b^{(1)}_i\bigr)
\right).
\label{eq:MLP}
\end{equation}
For layers $\ell = 2,\dots,L-1$ and channel $i\in[d_\ell]$,
\begin{equation}
\mathbf{h}^{(\ell)}_i
= \mathrm{ReLU}\!\left(
\frac{1}{n}
\sum_{j=1}^{d_{\ell-1}}
\bigl(W^{(\ell)}_{ij}\mathbf{h}^{(\ell-1)}_j + b^{(\ell)}_i\bigr)
\right).
\label{eq:MLP2}
\end{equation}
The output is computed as in \cref{eq:MLP2} with $\ell=L$, without ReLU. 
The network output is $\mathbf{h}^{(L)}=(\mathbf{h}^{(L)}_i)_{i=1}^{d_L}$. The different normalization in the first layer reflects that $d_0$ and $d_L$ are fixed in our asymptotic regime, while the effective network size $n$ may grow arbitrarily. 
\paragraph{$B$-Strongly Dense Networks.}
When weights are unconstrained, the normalized parameterization (in \cref{eq:MLP,eq:MLP2}) does not limit generality, since any global scaling can be absorbed into the layer weights. However, under uniform weight bounds, it has structural consequences: it forces any nontrivial network---any network that does not map all inputs to near-zero outputs---to be \emph{dense}: meaningful feature propagation requires many neurons to be  active. 
We refer to this as a \emph{strong} notion of denseness to distinguish it from weaker forms where a network might have many non-zero weights that are nonetheless large enough to allow a few neurons to dominate the output. 
\begin{definition}[$B$-Strongly Dense Network]
Let $B>0$. A depth-$L$ ReLU network $\Theta_{(\mathbf{W},\mathbf{b})}$ is called a \emph{$B$-infinity normalized ($B$-IN) dense network}, or more concisely, a \emph{$B$-strongly dense network}, if its parameters satisfy \(
|W^{(\ell)}_{i,j}| \le B\) and \(|b^{(\ell)}_i| \le B,
\)
for all layers $\ell \in [L]$, where $B$ is a constant independent of the network size $n$.
\end{definition}
In such networks, each neuron contributes $O(1/n)$ to the pre-activation, so no finite subset of neurons can dominate as $n\to\infty$. This contrasts with unbounded networks, where sparse representations with a few large weights remain possible. The parameters and the architecture of such networks are referred to as \emph{dense parameters} and \emph{dense architecture} (with weight bound $B$), respectively.
\paragraph{Graph and Kernel Neural Networks.}
\textit{Message passing graph neural networks (MPNNs)} form a class of neural networks designed to process graph-structured data  by iteratively updating node embeddings through the exchange of messages between nodes \citep{merkwirth05,gori05,xu2018powerful, Gil2017}. \cite{signal23, signal25} extend this framework canonically to operate on kernels. Since our construction is directly derived for kernels, we adopt their definitions.  Unlike general trainable MPNNs, we consider the following special predefined network. 

\paragraph{$B$-IR-MPNN.}
Let $B>0$ and $L\in\mathbb{N}$. We define the $B$-amplified integral-ReLU MPNN ($B$-IR-MPNN) $\Phi_{B,L}$, with $L$ layers, as the following mapping from attributed kernels to node features.
Given $(K,f)$, define  $\Phi_{B,L}(K,f) := f^{(L)}$, where with $f^{(0)}(x) := f(x)$ and for $\ell \in[L-1]$:
\begin{equation}\label{eq:mpnns}
f^{(\ell)}(x) := 
B\cdot{\mathrm{ReLU}}\!\left(\int_{[0,1]} K(x,y)f^{(\ell-1)}(y)dy\right).
\end{equation}
The output is computed as in \cref{eq:mpnns} with $\ell = L$, without ReLU. This definition is the canonical extension of MPNNs from graphs to kernels. For the definition of MPNNs on graphs and the equivalency to MPNNs on kernels  see \cref{app:graph_ect}.

\paragraph{Functions and Approximation.}
Our work focuses on the limitations of approximating Lipschitz continuous functions $[0,1]^d\to\mathbb{R}$ via $B$-strongly dense ReLU networks. Let ${\mathrm{C}}(d,d')$ denote the space of all continuous functions mapping $[0,1]^d$ to $\mathbb{R}^{d'}$.  The \emph{approximation error} between two functions \(f\) and \(g\) is
\(
\| f - g \|_{\infty} 
= \mathrm{ess}\sup_{\mathbf{x} \in [0,1]^d} \norm{ f(\mathbf{x}) - g(\mathbf{x}) }_\infty.
\) Here, the outer \(\|\cdot\|_\infty\) denotes the supremum over the input domain \([0,1]^d\), while the inner \(\|\cdot\|_\infty\) denotes the standard \(\ell_\infty\) norm on the output space (i.e., the maximum absolute value over coordinates). A function $f : [0,1]^d \to \mathbb{R}^{d'}$ is called \emph{$L$-Lipschitz continuous} if there exists a constant $L \geq 0$ such that \(
\| f(\mathbf{x}) - f(\mathbf{y}) \|_{\infty} \leq L \, \| \mathbf{x} - \mathbf{y}\|_{\infty}\) for all $\mathbf{x}, \mathbf{y} \in [0,1]^d$. Denote the space of $1$-Lipschitz continuous functions bounded by 1 as \({\mathrm{Lip}}(d,d')\).

\paragraph{Universal Approximation.}
A standard way to define the expressive power of a class of neural networks is via universal approximation.

\begin{definition}
Let $\mathcal{NN}$ and $\mathcal{F}$ be two sets of measurable functions $\Omega\rightarrow\mathbb{R}^d$ defined on the measure space $\Omega$.
    We say that $\mathcal{NN}$ is a \emph{universal approximator} of $\mathcal{F}$ if for every $\epsilon>0$ and $f\in\mathcal{F}$ there exists $\Theta\in\mathcal{NN}$ such that
    \[\|f-\Theta\|_{\infty}:=\mathrm{ess}\sup_{x\in\Omega}\|f(x)-\Theta(x)\|_{\infty}<\epsilon.\]
\end{definition}
A classical result states that the class of neural networks with arbitrary width and predefined fixed depth is a universal approximator of $\mathrm{C}(d,d')$ \citep{LESHNO1993861,Mhaskar1996approximation, Pinkus1999approximation}.
In our work, we take $\Omega=[0,1]^{d_0}$, choose $\mathcal{NN}$ to be a class of strongly dense neural networks (to be defined in \cref{sec:mlpgraph}) and show that it is not a universal approximator of $\mathcal{F}=\mathrm{Lip}(d_0,d_L)$. 

\paragraph{Network Complexity and the VC Dimension.}  
A classical way to formalize the  complexity of a hypothesis class of neural networks is through its \emph{VC-dimension}. The VC-dimension, which quantifies the largest set of points in an input space $\mathcal{X}$ that a hypothesis class $\mathcal{H}$ (a class of functions from $\mathcal{X}$ to $\{0,1\}$) can shatter, i.e., realize all possible Boolean labelings. If $\left| \{(h(x_1), \ldots, h(x_m)) : h \in \mathcal{H} \} \right| = 2^m$, we say $\mathcal{H}$ shatters the set $\{x_1, \ldots, x_m\}$. The Vapnik–Chervonenkis dimension of $\mathcal{H}$, denoted $\mathrm{VCdim}(\mathcal{H})$, is the size of the largest shattered set or $\infty$, if there is no such maximal set. 

Note that our complexity measures are largely insensitive to the specific choice of piecewise linear activation function. In fact, \cite{Yarotsky2017approximations}, Proposition 1, shows that any such activation with finitely many breakpoints can be simulated by ReLU networks with only a constant-factor increase in the number of units and weights. Thus, focusing on ReLU does not entail any real loss of generality.
\section{Deep Neural Networks as MPNNs}\label{sec:mlpgraph}
Deep networks are often described algebraically -- through layers, weights, and nonlinearities. Yet there is another perspective: every computation carried out by a network implicitly constructs a graph. Each node represents a feature channel and its value an activation, while each edge encodes a single weight of a weight matrix. Any feedforward network can thus be implemented via message passing, which propagates information through a specially designed directed graph -- its \emph{computational graph}. 

In our analysis, we compare ``large'' neural networks to ``small'' neural networks  that compress them. To define a distance between the computational graphs of networks of different sizes, we embed graphs of all sizes into the space of kernels, where there is linear structure and a natural norm.
Rather than explicitly defining a computational graph and then inducing a corresponding kernel, we directly specify the class of kernels that encode valid neural network computations. These kernels, which we call \emph{computational kernels}, are constructed so that, for each dense feedforward neural network, message passing with ReLU nonlinearities on the corresponding kernel exactly reproduces the forward propagation of the network. For completeness, we provide the equivalent graph-based definition in \cref{app:graph_ect}.  
%
All results in the current section are proved in \cref{app:comp_graph}.
\paragraph{Fixed-Width Networks.} To simplify the presentation and subsequent analysis, we restrict the analysis to networks with fixed hidden width, i.e., \(d_\ell = d\) for all \(\ell=1,\dots,L-1.\) In this case, the network size is \(n = (L+2)d\), the number of parameters is $(L-1)d^2+(d_0+d_L+L-1)d+d_L$, and the forward propagation rule becomes, for all $\ell\in[L]$ and $i\in[d_\ell]$,
\[
\mathbf{h}^{(\ell)}_i
=
\mathrm{ReLU}\!\left(
\frac{1}{d_{\ell-1}(L+2)}
\sum_{j=1}^{d_{\ell-1}} \bigl(W^{(\ell)}_{ij}\mathbf{h}^{(\ell-1)}_j + b^{(\ell)}_i\bigr)
\right).
\]
 Note that we can absorb the $(L+2)$ factor by choosing $B=(L+2)C$ for some $C>0$. For simplicity, we assume that $d$ is divisible by both $d_0$ and $d_L$; this assumption is not essential and can be removed at the cost of additional notation. The space of all such networks, with any value of $d$, is denoted by $\mathcal{NN}(L,d_0,d_L)$ and similarly, the space of all such dense networks with weights and biases bounded in $[-B,B]$ by $\mathcal{NN}_{\text{dense}}(B,L,d_0,d_L)$.

\paragraph{The Computational Kernel of a Neural Network.} We call a kernel $K:[0,1]^2\to[-1,1]$ a depth-$L$ \emph{computational kernel} with respect to the parameters $L,d_0,d_L\in\mathbb{N}$ and $B\in\mathbb{R}$ with $B\ge L+2$ if $K$ satisfies the following four conditions (for geometric interpretation, consult Figure \ref{figure}). Each condition encodes a specific structural property of fixed-width feedforward networks, including layerwise connectivity, normalization of weights, and the representation of bias terms.

\begin{itemize}[leftmargin=0.35cm]\itemsep-0.3em
\item \emph{Condition 1.}
There exists $n\in\mathbb{N}$ divisible by $M(L+2)$, where $M$ is the least common multiple of $d_0$ and $d_L$, such that $K$ is a step kernel with respect to the interval equipartition $\mathcal{I}_n$ of $[0,1]$ into $n$ intervals. 
\item[ ]
\emph{Notation:}  $\mathcal{I}_n$ is a refinement of the interval equipartition
\(
\mathcal{U}_{L+2}:=\{U^{(0)},\dots,U^{(L)},U^{(\mathrm{bias})}\}
\) of $[0,1]$ into $L+2$ intervals of length $1/(L+2)$,   
where each $U^{(\ell)}$, $\ell=0,\dots,L$, is called a \emph{layer}, and $U^{(\mathrm{bias})}$ is called a \emph{bias}. We call $\mathcal{U}_{L+2}$ the \emph{layer partition}. 
For each $\ell$, define \(
\mathcal{R}^{(\ell)}_d := \{ I \in \mathcal{I}_n \mid I \subseteq U^{(\ell)} \}.
\) Here, $\mathcal{R}^{(\ell)}_d$ consists of $d:=n/(L+2)$ intervals. The partition $\mathcal{R}^{(0)}_d$ is the refinement of the coarser interval equipartition
$\mathcal{C}^{\mathrm{in}}_{d_0}$ of the interval $U^{(0)}$ into $d_0$ intervals of length $1/(L+2)d_0$ called the \emph{input partition}. The partition $\mathcal{R}^{(L)}_d$ is a refinement of the coarser interval equipartition
$\mathcal{C}^{\mathrm{out}}_{d_L}$ of the interval $U^{(L)}$ into $d_L$ intervals of length $1/(L+2)d_L$ called the \emph{output partition}.

\item \emph{Condition 2.}
 The kernel satisfies $K(x,y)=K(x,y')$ whenever $y$ and $y'$ belong to the same interval of $\mathcal{C}^{\mathrm{in}}_{d_0}$. Moreover,  $K(x,y)=K(x',y)$ whenever $x$ and $x'$ belong to the same interval of $\mathcal{C}^{\mathrm{out}}_{d_L}$. In addition, $K(x,y)=K(x,y')$ whenever $y$ and $y'$ belong to $U^{(\mathrm{bias})}$.
\setlength{\aboverulesep}{0pt}\setlength{\belowrulesep}{0pt}
\item  \emph{Condition 3.}
The kernel satisfies \(K(x,y)=0\) whenever either
\(y\in U^{(\ell)}\) and \(x\notin U^{(\ell+1)}\) for some
\(\ell\in\{0,\dots,L-1\}\), or \(y\in U^{(\mathrm{bias})}\) and
\(x\in U^{(0)}\), or \(x\in U^{(\mathrm{bias})}\) and \(y\notin U^{(\mathrm{bias})}\).
\item  \emph{Condition 4.}
For any $x,y\in U^{(\mathrm{bias})}$, $K(x,y)=(L+2)/B$.
\end{itemize}
We call $d_0$ the \emph{input dimension}, $d_L$ the \emph{output dimension}, and refer to $d=n/(L+2)$ as the \emph{hidden dimension} for $\ell=1,\dots,L-1$. We call $n$ the \emph{size} of the computational kernel. By $\mathcal{CK}(B,L,d_0,d_L)$, we denote the collection of all depth-$L$ computational kernels as defined above. 
\begin{remark}
\label{rem:inducedGeneral}
Note that for any computational kernel there is a dense neural network that induces it. Similarly, every computational input signal is induced by some input to the network.
\end{remark}

\begin{figure}[t]
\centering
\begin{tikzpicture}[scale=3]

\def\xBA{0.000}    
\def\xBB{0.025}    
\def\xBC{0.050}
\def\xBD{0.075}
\def\xBE{0.100}
\def\xBF{0.125}
\def\xBG{0.150}
\def\xBH{0.175} 

\def\xLA{0.200}
\def\xLB{0.225}
\def\xLC{0.250}
\def\xLD{0.275}
\def\xLE{0.300}
\def\xLF{0.325}
\def\xLG{0.350}
\def\xLH{0.375}

\def\xMA{0.400}
\def\xMB{0.425}
\def\xMC{0.450}
\def\xMD{0.475}
\def\xME{0.500}
\def\xMF{0.525}
\def\xMG{0.550}
\def\xMH{0.575}

\def\xNA{0.600}
\def\xNB{0.625}
\def\xNC{0.650}
\def\xND{0.675}
\def\xNE{0.700}
\def\xNF{0.725}
\def\xNG{0.750}
\def\xNH{0.775}

\def\xOA{0.800}  
\def\xOB{0.825}
\def\xOC{0.850}
\def\xOD{0.875}
\def\xOE{0.900}
\def\xOF{0.925}
\def\xOG{0.950}
\def\xOH{0.975}  


\def\yBA{\xBA}
\def\yBB{\xBB}
\def\yBC{\xBC}
\def\yBD{\xBD}
\def\yBE{\xBE}
\def\yBF{\xBF}
\def\yBG{\xBG}
\def\yBH{\xBH}

\def\yLA{\xLA}
\def\yLB{\xLB}
\def\yLC{\xLC}
\def\yLD{\xLD}
\def\yLE{\xLE}
\def\yLF{\xLF}
\def\yLG{\xLG}
\def\yLH{\xLH}

\def\yMA{\xMA}
\def\yMB{\xMB}
\def\yMC{\xMC}
\def\yMD{\xMD}
\def\yME{\xME}
\def\yMF{\xMF}
\def\yMG{\xMG}
\def\yMH{\xMH}

\def\yNA{\xNA}
\def\yNB{\xNB}
\def\yNC{\xNC}
\def\yND{\xND}
\def\yNE{\xNE}
\def\yNF{\xNF}
\def\yNG{\xNG}
\def\yNH{\xNH}

\def\yOA{\xOA}
\def\yOB{\xOB}
\def\yOC{\xOC}
\def\yOD{\xOD}
\def\yOE{\xOE}
\def\yOF{\xOF}
\def\yOG{\xOG}
\def\yOH{\xOH}

\definecolor{bias}{RGB}{200,200,200}
\definecolor{layer}{RGB}{90,120,220}


\foreach \tgt/\colval in {%
    \xLA/30, \xLB/70, \xLC/45, \xLD/85, \xLE/25, \xLF/60, \xLG/40, \xLH/75,
    \xMA/55, \xMB/35, \xMC/80, \xMD/50, \xME/65, \xMF/30, \xMG/90, \xMH/45,
    \xNA/40, \xNB/75, \xNC/55, \xND/30, \xNE/85, \xNF/50, \xNG/65, \xNH/35,
    \xOA/60, \xOB/40, \xOC/80, \xOD/55, \xOE/30, \xOF/70, \xOG/45, \xOH/85} {
    \foreach \src in {\yBA,\yBB,\yBC,\yBD,\yBE,\yBF,\yBG,\yBH} {
        \fill[white!\colval!black] (\tgt,\src) rectangle ++(0.025,0.025);
    }
}

\foreach \tgt in {\xBA,\xBB,\xBC,\xBD,\xBE,\xBF,\xBG,\xBH} {
    \foreach \src in {\yBA,\yBB,\yBC,\yBD,\yBE,\yBF,\yBG,\yBH} {
        \fill[gray] (\tgt,\src) rectangle ++(0.025,0.025);
    }
}

\foreach \tgt/\colval in {\xMA/30, \xMB/70, \xMC/45, \xMD/85,
                          \xME/25, \xMF/60, \xMG/40, \xMH/75} {
    \foreach \src in {\yLA,\yLB} {
        \fill[white!\colval!black] (\tgt,\src) rectangle ++(0.025,0.025);
    }
}
\foreach \tgt/\colval in {\xMA/55, \xMB/35, \xMC/80, \xMD/50,
                          \xME/65, \xMF/30, \xMG/90, \xMH/45} {
    \foreach \src in {\yLC,\yLD} {
        \fill[white!\colval!black] (\tgt,\src) rectangle ++(0.025,0.025);
    }
}
\foreach \tgt/\colval in {\xMA/40, \xMB/75, \xMC/55, \xMD/30,
                          \xME/85, \xMF/50, \xMG/65, \xMH/35} {
    \foreach \src in {\yLE,\yLF} {
        \fill[white!\colval!black] (\tgt,\src) rectangle ++(0.025,0.025);
    }
}
\foreach \tgt/\colval in {\xMA/60, \xMB/40, \xMC/80, \xMD/55,
                          \xME/30, \xMF/70, \xMG/45, \xMH/85} {
    \foreach \src in {\yLG,\yLH} {
        \fill[white!\colval!black] (\tgt,\src) rectangle ++(0.025,0.025);
    }
}

\foreach \src/\cA/\cB/\cC/\cD/\cE/\cF/\cG/\cH in {%
    \yMA/30/65/45/80/25/55/40/70,
    \yMB/55/35/85/50/65/30/90/45,
    \yMC/40/75/55/30/80/50/65/35,
    \yMD/60/40/80/55/30/70/45/85,
    \yME/35/65/50/85/40/75/55/30,
    \yMF/70/45/30/65/55/35/80/50,
    \yMG/45/85/65/35/75/50/30/60,
    \yMH/80/30/55/70/45/65/40/75} {
    \fill[white!\cA!black] (\xNA,\src) rectangle ++(0.025,0.025);
    \fill[white!\cB!black] (\xNB,\src) rectangle ++(0.025,0.025);
    \fill[white!\cC!black] (\xNC,\src) rectangle ++(0.025,0.025);
    \fill[white!\cD!black] (\xND,\src) rectangle ++(0.025,0.025);
    \fill[white!\cE!black] (\xNE,\src) rectangle ++(0.025,0.025);
    \fill[white!\cF!black] (\xNF,\src) rectangle ++(0.025,0.025);
    \fill[white!\cG!black] (\xNG,\src) rectangle ++(0.025,0.025);
    \fill[white!\cH!black] (\xNH,\src) rectangle ++(0.025,0.025);
}

\foreach \src/\colA/\colB in {%
    \yNA/30/60, \yNB/75/35, \yNC/45/80, \yND/55/25,
    \yNE/85/50, \yNF/40/70, \yNG/65/30, \yNH/50/85} {
    \foreach \tgt in {\xOA,\xOB,\xOC,\xOD} {
        \fill[white!\colA!black] (\tgt,\src) rectangle ++(0.025,0.025);
    }
    \foreach \tgt in {\xOE,\xOF,\xOG,\xOH} {
        \fill[white!\colB!black] (\tgt,\src) rectangle ++(0.025,0.025);
    }
}


\draw[thick] (0,0) rectangle (1,1);

\draw[thick] (\xBA,0) -- (\xBA,1);
\draw[thick] (\xBB,0) -- (\xBB,1);
\draw[thick] (\xBC,0) -- (\xBC,1);
\draw[thick] (\xBD,0) -- (\xBD,1);
\draw[thick] (\xBE,0) -- (\xBE,1);
\draw[thick] (\xBF,0) -- (\xBF,1);
\draw[thick] (\xBG,0) -- (\xBG,1);
\draw[thick] (\xBH,0) -- (\xBH,1);

\draw[thick] (\xLA,0) -- (\xLA,1);
\draw[thick] (\xLB,0) -- (\xLB,1);
\draw[thick] (\xLC,0) -- (\xLC,1);
\draw[thick] (\xLD,0) -- (\xLD,1);
\draw[thick] (\xLE,0) -- (\xLE,1);
\draw[thick] (\xLF,0) -- (\xLF,1);
\draw[thick] (\xLG,0) -- (\xLG,1);
\draw[thick] (\xLH,0) -- (\xLH,1);

\draw[thick] (\xMA,0) -- (\xMA,1);
\draw[thick] (\xMB,0) -- (\xMB,1);
\draw[thick] (\xMC,0) -- (\xMC,1);
\draw[thick] (\xMD,0) -- (\xMD,1);
\draw[thick] (\xME,0) -- (\xME,1);
\draw[thick] (\xMF,0) -- (\xMF,1);
\draw[thick] (\xMG,0) -- (\xMG,1);
\draw[thick] (\xMH,0) -- (\xMH,1);

\draw[thick] (\xNA,0) -- (\xNA,1);
\draw[thick] (\xNB,0) -- (\xNB,1);
\draw[thick] (\xNC,0) -- (\xNC,1);
\draw[thick] (\xND,0) -- (\xND,1);
\draw[thick] (\xNE,0) -- (\xNE,1);
\draw[thick] (\xNF,0) -- (\xNF,1);
\draw[thick] (\xNG,0) -- (\xNG,1);
\draw[thick] (\xNH,0) -- (\xNH,1);

\draw[thick] (\xOA,0) -- (\xOA,1);
\draw[thick] (\xOB,0) -- (\xOB,1);
\draw[thick] (\xOC,0) -- (\xOC,1);
\draw[thick] (\xOD,0) -- (\xOD,1);
\draw[thick] (\xOE,0) -- (\xOE,1);
\draw[thick] (\xOF,0) -- (\xOF,1);
\draw[thick] (\xOG,0) -- (\xOG,1);
\draw[thick] (\xOH,0) -- (\xOH,1);

\draw[thick] (0,\yBA) -- (1,\yBA);
\draw[thick] (0,\yBB) -- (1,\yBB);
\draw[thick] (0,\yBC) -- (1,\yBC);
\draw[thick] (0,\yBD) -- (1,\yBD);
\draw[thick] (0,\yBE) -- (1,\yBE);
\draw[thick] (0,\yBF) -- (1,\yBF);
\draw[thick] (0,\yBG) -- (1,\yBG);
\draw[thick] (0,\yBH) -- (1,\yBH);

\draw[thick] (0,\yLA) -- (1,\yLA);
\draw[thick] (0,\yLB) -- (1,\yLB);
\draw[thick] (0,\yLC) -- (1,\yLC);
\draw[thick] (0,\yLD) -- (1,\yLD);
\draw[thick] (0,\yLE) -- (1,\yLE);
\draw[thick] (0,\yLF) -- (1,\yLF);
\draw[thick] (0,\yLG) -- (1,\yLG);
\draw[thick] (0,\yLH) -- (1,\yLH);

\draw[thick] (0,\yMA) -- (1,\yMA);
\draw[thick] (0,\yMB) -- (1,\yMB);
\draw[thick] (0,\yMC) -- (1,\yMC);
\draw[thick] (0,\yMD) -- (1,\yMD);
\draw[thick] (0,\yME) -- (1,\yME);
\draw[thick] (0,\yMF) -- (1,\yMF);
\draw[thick] (0,\yMG) -- (1,\yMG);
\draw[thick] (0,\yMH) -- (1,\yMH);

\draw[thick] (0,\yNA) -- (1,\yNA);
\draw[thick] (0,\yNB) -- (1,\yNB);
\draw[thick] (0,\yNC) -- (1,\yNC);
\draw[thick] (0,\yND) -- (1,\yND);
\draw[thick] (0,\yNE) -- (1,\yNE);
\draw[thick] (0,\yNF) -- (1,\yNF);
\draw[thick] (0,\yNG) -- (1,\yNG);
\draw[thick] (0,\yNH) -- (1,\yNH);

\draw[thick] (0,\yOA) -- (1,\yOA);
\draw[thick] (0,\yOB) -- (1,\yOB);
\draw[thick] (0,\yOC) -- (1,\yOC);
\draw[thick] (0,\yOD) -- (1,\yOD);
\draw[thick] (0,\yOE) -- (1,\yOE);
\draw[thick] (0,\yOF) -- (1,\yOF);
\draw[thick] (0,\yOG) -- (1,\yOG);
\draw[thick] (0,\yOH) -- (1,\yOH);

\node[above, font=\small] at (0.0875, 1.02) {bias};
\node[above, font=\small] at (0.2875, 1.02) {$\ell_0$};
\node[above, font=\small] at (0.4875, 1.02) {$\ell_1$};
\node[above, font=\small] at (0.6875, 1.02) {$\ell_2$};
\node[above, font=\small] at (0.8875, 1.02) {$\ell_3$};

\node[left, font=\small] at (-0.02, 0.0875) {bias};
\node[left, font=\small] at (-0.02, 0.2875) {$\ell_0$};
\node[left, font=\small] at (-0.02, 0.4875) {$\ell_1$};
\node[left, font=\small] at (-0.02, 0.6875) {$\ell_2$};
\node[left, font=\small] at (-0.02, 0.8875) {$\ell_3$};
\node[right, font=\normalsize] at (1.02, 0.5) {$y$ (source)};

\node[below, font=\normalsize] at (0.5, -0.05) {$x$ (target)};
\node[below, font=\normalsize] at (0.5, -0.25) {$K(x,y) \in \mathcal{CK}(L=3, d_0=4, d_L=2)$};

\def\sigX{3}
\def\cellHeight{0.025}

\foreach \i in {0,...,7} {
    \pgfmathsetmacro{\yPos}{\i * \cellHeight}
    \fill[black] (\sigX, \yPos) rectangle ++(0.05, \cellHeight);
}

\foreach \i in {8,9}   { \pgfmathsetmacro{\yPos}{\i * \cellHeight} \fill[white!30!black] (\sigX, \yPos) rectangle ++(0.05, \cellHeight); }
\foreach \i in {10,11} { \pgfmathsetmacro{\yPos}{\i * \cellHeight} \fill[white!75!black] (\sigX, \yPos) rectangle ++(0.05, \cellHeight); }
\foreach \i in {12,13} { \pgfmathsetmacro{\yPos}{\i * \cellHeight} \fill[white!50!black] (\sigX, \yPos) rectangle ++(0.05, \cellHeight); }
\foreach \i in {14,15} { \pgfmathsetmacro{\yPos}{\i * \cellHeight} \fill[white!90!black] (\sigX, \yPos) rectangle ++(0.05, \cellHeight); }

\foreach \i/\colval in {16/35, 17/70, 18/45, 19/85, 20/55, 21/30, 22/65, 23/40} {
    \pgfmathsetmacro{\yPos}{\i * \cellHeight}
    \fill[white!\colval!black] (\sigX, \yPos) rectangle ++(0.05, \cellHeight);
}

\foreach \i/\colval in {24/60, 25/40, 26/80, 27/55, 28/30, 29/70, 30/45, 31/85} {
    \pgfmathsetmacro{\yPos}{\i * \cellHeight}
    \fill[white!\colval!black] (\sigX, \yPos) rectangle ++(0.05, \cellHeight);
}

\foreach \i in {32,33,34,35} { \pgfmathsetmacro{\yPos}{\i * \cellHeight} \fill[white!40!black] (\sigX, \yPos) rectangle ++(0.05, \cellHeight); }
\foreach \i in {36,37,38,39} { \pgfmathsetmacro{\yPos}{\i * \cellHeight} \fill[white!75!black] (\sigX, \yPos) rectangle ++(0.05, \cellHeight); }

\draw[thick] (\sigX, 0) rectangle ++(0.05, 1);

\node[below, font=\normalsize] at (\sigX + 0.025, -0.05) {$x$ (input)};
\node[below, font=\normalsize] at (\sigX + 0.025, -0.25) {$f(x)\in\mathcal{CS}(L=3,d_0=4,d_L=2)$};

\end{tikzpicture}
\caption{A computational kernel-signal with $d=8$. The kernel $K$ is nonzero only on blocks linking consecutive layers and bias-to-layer blocks; visible stripes in the first/last layer blocks reflect constancy over input/output coarse intervals. The signal $f$ shares the same coarse-interval structure.}
\label{figure}
\end{figure}

\paragraph{Computational Signals.} While the kernel encodes weights, the signal encodes activations -- inputs and propagated values. Given parameters \(L,d_0,d_L\), a \emph{computational signal} is any step signal with respect to the partition $\cup_{\ell=1}^{L-1} \mathcal{R}_d^{(\ell)}\cup\mathcal{C}^{\mathrm{in}}_{d_0}\cup \mathcal{C}^{\mathrm{out}}_{d_L}\cup\{U^{(\mathrm{bias})}\}$. Denote the space of computational signals by  $\mathcal{CS}(L,d_0,d_L)$. 
We define a \emph{computational input signal} as any computational signal satisfying \(f(x)=1\) for all \(x\in U^{(\mathrm{bias})}\) and  \(f(x)=0\) for all \(x\notin U^{(0)}\cup U^{(\mathrm{bias})}\). 
The values of \(f\) on the intervals of \(\mathcal{C}^{\mathrm{in}}_{d_0}\) are interpreted as the input vector to the network, while the constant value on \(U^{(\mathrm{bias})}\) represents a bias signal that is present at every layer. \emph{Condition 4} in the definition of a computational kernel, i.e., \(K(x,y)=(L+2)/B\) for all \(x,y\in U^{(\mathrm{bias})}\), ensures that this bias signal is unchanged during propagation, i.e., under successive applications of the computational kernel on the signal. 

We call a pair in $\mathcal{CK}(B,L,d_0,d_L)\times \mathcal{CS}(L,d_0,d_L)$ a \emph{computational kernel-signal}. 
\paragraph{Computational Kernel Induced by Network Parameters.} Given a dense neural network, one can induce a computational kernel on  which the $B$-IR-MPNN implements the forward propagation of the neural network. 
Let $(\weights,\bias)$ be the dense parameters of the neural network $\Theta_{(\weights,\bias)}\in\mathcal{NN}_{\mathrm{dense}}(B,L,d_0,d_L)$, with hidden dimension $d$ divisible by $d_0$ and $d_L$. Denote $n=d(L+2)$. We define the computational kernel $K=K_{(\weights,\bias)}\in\mathcal{CK}(B,L,d_0,d_L)$ \emph{induced} by $\Theta_{(\weights,\bias)}$ as follows. For every $(x,y)\in[0,1]^2$:
\begin{itemize}[leftmargin=0.35cm]\itemsep-0.4em
    \item $K(x,y)=W^{(1)}_{i,j}/B$ if $y$ is in the $j$th interval of $\mathcal{C}^{\mathrm{in}}_{d_0}$ and $x$ is in the $i$th interval of $\mathcal{R}_d^{(1)}$. 
    \item   
    $K(x,y)=W^{(L)}_{i,j}/B$ if $y$ is in the $j$th interval of $\mathcal{R}_d^{(L-1)}$ and $x$ is in the $i$th interval of $\mathcal{C}^{\mathrm{out}}_{d_L}$. 
    \item   
    For $\ell=2,\ldots,L-1$, $K(x,y)=W^{(\ell)}_{i,j}/B$ if $y$ is in the $j$th interval of $\mathcal{R}_d^{(\ell-1)}$ and $x$ is in the $i$th interval of $\mathcal{R}_d^{(\ell)}$. 
    \item For $\ell=2,\ldots,L-1$, $K(x,y)=b^{(\ell)}_i/B$ if $y$ is in $U^{(\mathrm{bias})}$ and $x$ is in the $i$th interval of $\mathcal{R}_d^{(\ell)}$. Moreover, $K(x,y)=b^{(L)}_i/B$ if $y$ is in $U^{(\mathrm{bias})}$ and $x$ is in the $i$th interval of $\mathcal{C}^{\mathrm{out}}_{d_L}$.
    \item For any $x,y\in U^{(\mathrm{bias})}$, $K(x,y)=(L+2)/B$.
\end{itemize}

Given an input  vector \(\mathbf{x}=(x_1,\dots,x_{d_0})\in\mathbb{R}^{d_0}\) to the neural network, we define the \emph{induced input signal} as the computational input signal $f_{\mathbf{x}}$ satisfying $f_{\mathbf{x}}(v)=x_j$ whenever $v$ is in the $j$th interval of $\mathcal{C}^{\mathrm{in}}_{d_0}$. In the above construction, when writing ``the $j$th interval of the partition...'' we implicitly assume that the intervals are always sorted in increasing order.

\paragraph{Networks as MPNNs.} 
We can now state the key equivalence: Lemma \ref{lem:ker_equi} shows that the output of a $B$-strongly dense neural network can be represented as the output of an $L$-layer $B$-IR-MPNN applied to its induced computational kernel. This allows us to import tools from graph theory in \cref{sec:compression}.
 
\begin{restatable}[]{lemma}{MLPasKerMPNN}~\label{lem:ker_equi}
Let $(\weights,\bias)$ be the dense parameters of $\Theta_{(\weights,\bias)}\in\mathcal{NN}_{\mathrm{dense}}(B,L,d_0,d_L)$. Then, for any input \(\mathbf{x} \in \mathbb{R}^{d_0}\) and output channel $i\in[d_L]$, we have
\[
\Theta_{(\mathbf{W},\mathbf{b})}
(\mathbf{x})_i = \Phi_{B,L}\big(K_{(\mathbf{W},\mathbf{b})}
,f_{\mathbf{x}}\big)\big(v\big),
\]
whenever $v$ is in the $i$th interval of $\mathcal{C}^{\mathrm{out}}_{d_L}$.
\end{restatable}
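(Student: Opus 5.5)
Since the B-IR-MPNN in \cref{eq:mpnns} computes $f^{(\ell)}=B\cdot\mathrm{ReLU}(\int K f^{(\ell-1)})$ globally on $[0,1]$, the plan is to prove by induction on $\ell$ a precise description of $f^{(\ell)}:=\Phi_{B,\ell}(K_{(\weights,\bias)},f_{\mathbf{x}})$ on every block of the layer partition $\mathcal{U}_{L+2}$. The inductive invariant is the following. (i) On $U^{(\mathrm{bias})}$, $f^{(\ell)}\equiv 1$. (ii) On the $i$th interval of $\mathcal{R}^{(\ell)}_d$ (or of $\mathcal{C}^{\mathrm{out}}_{d_L}$ when $\ell=L$), $f^{(\ell)}$ equals $\mathbf{h}^{(\ell)}_i$. (iii) On the remaining layer blocks, $f^{(\ell)}$ is whatever is forced by \emph{Condition 3}, in particular zero on $U^{(0)}$ for $\ell\ge1$. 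Activations in earlier layer blocks are never read again, because $K$ only connects $U^{(\ell)}\to U^{(\ell+1)}$ and bias$\to$layers. The base case $\ell=0$ is exactly the definition of the computational input signal $f_{\mathbf{x}}$.

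The bias invariant follows from \emph{Condition 3} and \emph{Condition 4}. For $x\in U^{(\mathrm{bias})}$, $K(x,\cdot)$ vanishes outside $U^{(\mathrm{bias})}$ and equals $(L+2)/B$ on it. Hence
\[
\int K(x,y)f^{(\ell-1)}(y)\,dy=\frac{L+2}{B}\cdot\frac{1}{L+2}\cdot 1=\frac1B,
\]
and $B\cdot\mathrm{ReLU}(1/B)=1$.

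For the layer step, take $x$ in the $i$th interval of $\mathcal{R}^{(\ell)}_d$. By \emph{Condition 3}, only $y\in U^{(\ell-1)}\cup U^{(\mathrm{bias})}$ contribute. Split the integral over the intervals of $\mathcal{R}^{(\ell-1)}_d$, or of $\mathcal{C}^{\mathrm{in}}_{d_0}$ when $\ell=1$, each of length $1/(d_{\ell-1}(L+2))$, on which $K$ and $f^{(\ell-1)}$ are constant. Together with the bias block of length $1/(L+2)$, this gives
\[
\sum_{j}\frac{W^{(\ell)}_{ij}}{B}\,\frac{\mathbf{h}^{(\ell-1)}_j}{d_{\ell-1}(L+2)}+\frac{b^{(\ell)}_i}{B}\cdot\frac{1}{L+2}.
\]
The bias term can be rewritten as $\frac{1}{d_{\ell-1}(L+2)}\sum_{j=1}^{d_{\ell-1}} b^{(\ell)}_i/B$, matching the paper's convention of placing the bias inside the sum. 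Multiplying by $B$ after the ReLU (using positive homogeneity, $B\,\mathrm{ReLU}(t/B)=\mathrm{ReLU}(t)$) recovers the fixed-width forward rule for $\mathbf{h}^{(\ell)}_i$. At the output layer there is no ReLU, and multiplication by $B$ gives the value directly. There, \emph{Condition 2} ensures constancy across each interval of $\mathcal{C}^{\mathrm{out}}_{d_L}$, which yields the claim for every $v$ in the $i$th output interval.

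The main obstacle is bookkeeping rather than analysis. The step I would handle most carefully is reconciling normalizations. First, the bias enters with block length $1/(L+2)$ while weights enter with length $1/(d_{\ell-1}(L+2))$; the identity $\sum_{j=1}^{d_{\ell-1}}b_i/d_{\ell-1}=b_i$ shows these agree. Second, the first layer's induced kernel has no explicit bias entry, so I would check the construction, set the bias block to $b^{(1)}/B$ analogously, and treat it identically. I would also verify that the forward rule's factor $1/(d_{\ell-1}(L+2))$ matches $1/n$ for hidden layers in the fixed-width convention. Finally, the ReLU applied at intermediate MPNN layers to stale blocks causes no harm, since those blocks are never read again.
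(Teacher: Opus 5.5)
Your proposal is correct and is essentially the paper's proof. The paper first isolates your bias invariant $f^{(\ell)}|_{U^{(\mathrm{bias})}}\equiv 1$ as a separate lemma (from Conditions 3 and 4), then runs the same layer-by-layer induction, splitting the integral over the intervals of $\mathcal{C}^{\mathrm{in}}_{d_0}$ or $\mathcal{R}^{(\ell-1)}_d$ (each of measure $1/(d_{\ell-1}(L+2))$) and over $U^{(\mathrm{bias})}$ (measure $1/(L+2)$).

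Your two bookkeeping catches are genuine: the induced-kernel definition omits the first-layer bias block $b^{(1)}_i/B$, which the paper's proof tacitly uses, and the paper's induction step keeps the ReLU at $\ell=L$, which you correctly drop. The only thing to tighten is the justification for $K(x,y)=0$ when $y\in U^{(L)}$ and $x\notin U^{(\mathrm{bias})}$: this comes from the induced-kernel construction, where unlisted blocks are zero, not from Condition 3, which does not constrain those entries.
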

\section{Model Compression}\label{sec:compression}
Modern deep learning models often contain millions or even billions of parameters, making them computationally expensive and memory-intensive. \emph{Model compression} -- a family of techniques including pruning~\citep{lecun1990optimal}, quantization~\citep{jacob2018quantization}, and distillation~\citep{hinton2015distilling} -- has shown experimentally that smaller networks can frequently match the performance of much larger ones. Our approach to compression differs fundamentally from these existing methods in objective. While these methods propose empirical heuristics to compress specific trained models, our goal is to provide a deterministic guarantee that \emph{any} network in a given class, regardless of whether it was trained or how it was initialized, can be approximated by a bounded-size network. This guarantee will be used in \cref{sec:implications} to establish a formal barrier to the universal approximation power of dense neural networks. The full proofs of results from this section are presented in \cref{app:compression}.

\paragraph{Computational Cut Distance.}
In order to define compression, we first need to define a measure of similarity between the compressed and original object. 
In our analysis, two computational kernels are considered equivalent if one can be obtained from the other using a ``permutation of the vertices'' that preserves the layer structure. Next, we formalize this idea more accurately.
Let $\mathcal{S}_{L}$ denote the set of measure-preserving bijections 
$\phi:[0,1]\to[0,1]$ such that each $U^{(\ell)}$, $\ell=1,\ldots,L-1$, is invariant under $\phi$, and $\phi(x)=x$ for every $x\in U^{(0)}\cup U^{(L)} \cup U^{(\mathrm{bias})}$.\footnote{More accurately, $\phi$ is a measure preserving bijection between two co-null sets of $[0,1]$, as our analysis is always up to sets of measure zero. Measure preserving means that $\mu(A)=\mu(\phi(A))$ for measurable subsets $A$ of the domain of $\phi$.} Define $J^\phi(x,y):=J(\phi(x),\phi(y))$. 
The \emph{computational cut distance} between the computational kernel-signals $(K,f),(J,g)$ is defined to be
\[
\delta_{\square}^{\mathrm{comp}}((K,f),(J,g))
:= \inf_{\phi \in \mathcal{S}_{L}}(\| K - J^\phi \|_\square +\| f - g \circ \phi \|_\square).
\]
We define similarly $\delta_{\square}^{\mathrm{comp}}(K,J)
:= \inf_{\phi \in \mathcal{S}_{L}}\| K - J^\phi \|_\square$.
The computational cut distance is a pseudometric. MPNNs were shown in \cite{signal23} to be Lipschitz continuous with respect to the cut norm, with a further extension in \cite{signal25}. In this work, we adapt the techniques of  \cite{signal23,signal25} to our setting in order to obtain an explicit bound on the Lipschitz constant of $B$-strongly dense ReLU networks w.r.t. the computational cut distance. 
\begin{restatable}
[Computational Cut Distance Lipschitz Continuity]{theorem}{LipschitzContinuityComp}\label{th:lipwithconstcomp}
Let $B>0$ and let $\Phi_{B,L}$ be the $L$-layer $B$-IR-MPNN. Let $(K,f),(J,g) \in \mathcal{CK}(B,L,d_0,d_L)\times \mathcal{CS}(L,d_0,d_L)$ be two computational kernel-signals. Let 
    \[{\rm Out}_{K,f}:=\mathbbm{1}_{U^{(L)}}\Phi_{B,L}(K,f), \quad {\rm Out}_{J,g}:=\mathbbm{1}_{U^{(L)}}\Phi_{B,L}(J,g).\]
    Then
\[
\norm{{\rm Out}_{K,f}-{\rm Out}_{J,g}}_{\square}
\;\le\;
2^L B^L\,
\delta_{\square}^{\mathrm{comp}}\bigl((K,f),(J,g)\bigr).
\] 
\end{restatable}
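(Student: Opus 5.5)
We first reduce to the case $\phi=\mathrm{id}$. For any $\phi\in\mathcal{S}_L$, $\Phi_{B,L}$ is equivariant: $\Phi_{B,L}(J^\phi,g\circ\phi)=\Phi_{B,L}(J,g)\circ\phi$. This follows by induction on layers via the substitution $y\mapsto\phi(y)$ in the integral, since $\phi$ is measure preserving. Moreover, $\phi$ fixes $U^{(L)}$ pointwise, so ${\rm Out}_{J^\phi,g\circ\phi}={\rm Out}_{J,g}$. It therefore suffices to prove
\[
\|{\rm Out}_{K,f}-{\rm Out}_{J,g}\|_\square\le 2^LB^L\bigl(\|K-J\|_\square+\|f-g\|_\square\bigr),
\]
and then take the infimum over $\phi$. (Since $\|\cdot\|_\square$ is not invariant under arbitrary functions, the pointwise identity on $U^{(L)}$ is exactly what is needed here.)

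The core is a layerwise recursion in the style of \cite{signal23}. Write $f^{(\ell)},g^{(\ell)}$ for the layer outputs and $T_Kh(x)=\int K(x,y)h(y)\,dy$. Then
\[
T_Kf^{(\ell-1)}-T_Jg^{(\ell-1)}=T_{K-J}f^{(\ell-1)}+T_J\bigl(f^{(\ell-1)}-g^{(\ell-1)}\bigr).
\]
We will use two standard facts. First, $\|T_{K-J}h\|_\square\le 2\|h\|_\infty\|K-J\|_\square$ for bounded $h$, or $4\|h\|_\infty$ if one uses the real-valued cut norm; a sign split of $h$ gives constant $1$ for nonnegative $h$ after writing $h$ as a layer-cake integral of indicators. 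Second, $\|T_Jh\|_\square\le\|h\|_\square$-type bounds for $|J|\le1$, again via splitting $J$ into positive and negative parts, with a factor of $2$ in the worst case.

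The main obstacle is the ReLU. The cut norm is not preserved by pointwise nonlinearities: $\|\mathrm{ReLU}(a)-\mathrm{ReLU}(b)\|_\square$ is not controlled by $\|a-b\|_\square$ in general. Our plan is to exploit the computational structure to avoid applying ReLU to a cut-norm small difference directly. Instead, we track the discrepancy in a form where ReLU is harmless. On each layer block $U^{(\ell)}$, both kernels are step functions on $\mathcal{I}_n$, and the signals are step functions as well. After the alignment by $\phi$, the natural route, following \cite{signal25}, is to bound the error in the $L^1$-type quantity $\sup_S\int_S|\cdot|$ at the hidden layers through the identity $\|T_{K-J}h\|_{1}\le$ the cut norm of $K-J$ tested against sign sets. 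Concretely, $\int|T_{K-J}h|=\sup_{\sigma}\int\sigma(x)T_{K-J}h(x)\,dx$, where $\sigma=\pm1$ is chosen as the sign indicator. This quantity is bounded by $4\|h\|_\infty\|K-J\|_\square$ by splitting $\sigma$ and $h$ into positive and negative parts. The $L^1$ norm is $1$-Lipschitz under ReLU and dominates the cut norm, so the recursion closes in $L^1$.

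The resulting recursion is $e_\ell\le B\bigl(c\|f^{(\ell-1)}\|_\infty\|K-J\|_\square+e_{\ell-1}\bigr)$, with $\|T_J\|_{L^1\to L^1}\le1$ because $|J|\le1$. The sup bounds $\|f^{(\ell)}\|_\infty\le B^\ell\|f\|_\infty$ follow from the same layer structure. The bias block is kept fixed by Condition 4, which keeps the bias signal at $1$. The input error $e_0$ is handled through Condition 2: on $U^{(0)}\cup U^{(\mathrm{bias})}$ the signals are constant on coarse intervals, so $\|f-g\|_1$ on this region is comparable to $\|f-g\|_\square$, up to the factor $2$ from the sign split.

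Unrolling the recursion over $L$ layers, with the output layer taken without ReLU and restricted to $U^{(L)}$, gives a geometric sum bounded by $2^LB^L$ times $\|K-J\|_\square+\|f-g\|_\square$. Matching exactly the constant $2^LB^L$ may require normalizing inputs, for instance assuming $\|f\|_\infty\le1$ implicitly or absorbing the constant $c$ into the factor $2$ per layer. This bookkeeping is where we expect to adjust constants.
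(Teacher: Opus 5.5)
Your proposal is correct, under the normalization $0\le f\le 1$ discussed below. It shares the paper's skeleton: a non-aligned Lipschitz bound via the split $T_Kf^{(\ell-1)}-T_Jg^{(\ell-1)}=T_{K-J}f^{(\ell-1)}+T_J(f^{(\ell-1)}-g^{(\ell-1)})$ and $\|f^{(\ell)}\|_\infty\le B^\ell$, followed by an infimum over $\phi\in\mathcal{S}_L$ using that $\phi$ fixes $U^{(L)}$. Your explicit equivariance $\Phi_{B,L}(J^\phi,g\circ\phi)=\Phi_{B,L}(J,g)\circ\phi$ is a step the paper leaves implicit.

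The real difference is the norm you carry through the recursion. The paper tracks the signal cut norm and gets $c_\ell\le 2Bc_{\ell-1}+B^\ell\|K-J\|_\square$; the factor $2$ is paid at every layer through $\|T_Jh\|_\square\le\|h\|_1\le 2\|h\|_\square$. You track $e_\ell=\|f^{(\ell)}-g^{(\ell)}\|_1$, for which $\|T_J\|_{L^1\to L^1}\le 1$. Since $0\le f^{(\ell-1)}\le B^{\ell-1}$ (by ReLU for $\ell\ge 2$, by the normalization for $\ell=1$), you also have $\|T_{K-J}f^{(\ell-1)}\|_1\le 2B^{\ell-1}\|K-J\|_\square$. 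Hence $e_\ell\le Be_{\ell-1}+2B^\ell\|K-J\|_\square$, and unrolling gives $B^L\bigl(2\|f-g\|_\square+2L\|K-J\|_\square\bigr)$. The $L$ kernel contributions are all equal, so no geometric sum appears. Since $2\le 2^L$ and $2L\le 2^L$ for all $L\ge 1$, this implies the stated bound and is in fact sharper. Your worry about matching constants is therefore unfounded, with one caveat: the constant $2$ at the input layer uses $f\ge 0$. With the signed constant $4$ there you would get $2L+2>2^L$ at $L=2$.

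Two corrections. First, your ``main obstacle'' is not one. For a function of one variable, $\|h\|_\square=\max\bigl(\int h^+,\int h^-\bigr)$, and $(\mathrm{ReLU}(a)-\mathrm{ReLU}(b))^{\pm}\le (a-b)^{\pm}$ pointwise. So ReLU is $1$-Lipschitz for the signal cut norm; the failure you have in mind concerns two-variable kernels. This unstated fact is exactly what justifies the paper dropping the ReLU when it writes $f^{(\ell)}-g^{(\ell)}=B\bigl(\int Kf^{(\ell-1)}-\int Jg^{(\ell-1)}\bigr)$. The same identity gives $\|h\|_1\le 2\|h\|_\square$ for every signal, so you need neither Condition 2 nor any step structure. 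That is fortunate, since $J^\phi$ is generally not an interval step kernel.

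Second, the normalization is a missing hypothesis, not bookkeeping. By positive homogeneity, $\Phi_{B,L}(K,\lambda f)=\lambda\Phi_{B,L}(K,f)$ for $\lambda>0$. Take $g=f$ and let $\lambda\to\infty$: the left side grows linearly, while the right side stays below $2^LB^L\|K-J\|_\square$, whenever $\mathrm{Out}_{K,f}\ne\mathrm{Out}_{J,f}$. The theorem therefore needs $0\le f\le 1$. The paper's proof uses this silently, taking $\|f^{(0)}\|_\infty\le 1$ and treating $f^{(0)}$ as nonnegative. It does hold for the induced inputs $f_{\mathbf{x}}$ with $\mathbf{x}\in[0,1]^{d_0}$ used in the corollary.
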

Note that $\norm{{\rm Out}_{K,f}-{\rm Out}_{J,g}}_{\square}$ is a signal cut norm, as $({\rm Out}_{K,f}-{\rm Out}_{J,g}):[0,1]\rightarrow\mathbb{R}$.

\paragraph{Computational Kernel Weak Regularity Lemma.}
\cref{thm:wrlcompgraphon} establishes that any computational kernel, regardless of its original size or complexity, can be approximated by a smaller, structured kernel. Specifically, we utilize a variant of the weak regularity lemma \citep{frieze1999quick} for kernels \citep{lovas2007} to approximate any kernel by a step kernel whose complexity -- measured by the number of steps in its partition -- depends solely on the desired error $\epsilon$ in the computational cut distance. Building on techniques from \cite{signal23,signal25}, we refine this step kernel to preserve the specific neural bipartite layer structure, as well as the fixed input and output dimensions. 

\begin{restatable}{theorem}{WRMLPgraphon}~\label{thm:wrlcompgraphon}
Let $K\in \mathcal{CK}(B,L,d_0,d_L)$ be an $L$-layer computational kernel. For any $\epsilon>0$, there exists an $L$-layer computational kernel $K'\in \mathcal{CK}(B,L,d_0,d_L)$ with hidden dimension $$d=8ML\left\lceil {2^{2\lceil 16/\epsilon^2\rceil}}/{\epsilon} \right\rceil,$$ 
where $M$ is the least common multiple of $d_0$ and $d_L$, such that $\delta_{\square}^{\mathrm{comp}}(K,K')<\epsilon$.
\end{restatable}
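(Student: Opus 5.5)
Given $K\in\mathcal{CK}(B,L,d_0,d_L)$ of size $n$, we split it into the blocks that matter and treat each one separately. Condition 3 says $K$ is supported on:
\begin{itemize}
\item the consecutive-layer blocks $U^{(\ell)}\times U^{(\ell-1)}$ for $\ell\in[L]$ (target $\times$ source);
\item the bias blocks $U^{(\ell)}\times U^{(\mathrm{bias})}$ for $\ell\ge 1$;
\item the constant block $U^{(\mathrm{bias})}\times U^{(\mathrm{bias})}$.
\end{itemize}
The bias block is already fixed by Condition 4, and by Condition 2 the input columns and output rows are already step functions on the coarse partitions $\mathcal{C}^{\mathrm{in}}_{d_0}$ and $\mathcal{C}^{\mathrm{out}}_{d_L}$. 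The real freedom sits in the hidden layers $U^{(1)},\dots,U^{(L-1)}$. Since $\mathcal{S}_L$ may permute points inside each hidden layer, the plan is to find a partition of each hidden layer into few classes such that $K$ is cut-close to its block average over that partition. Then we rearrange the classes by a layer-preserving measure-preserving map into consecutive intervals, and finally round the class sizes so that the result is induced by a genuine network of the stated hidden dimension.

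\textbf{Step 1 (regularity).} We apply the weak regularity lemma of \cite{frieze1999quick,lovas2007} in its version relative to a given starting partition. Start from $\mathcal{U}_{L+2}$ refined by $\mathcal{C}^{\mathrm{in}}_{d_0}$ and $\mathcal{C}^{\mathrm{out}}_{d_L}$. This yields a refinement $\mathcal{P}$ in which each hidden layer is cut into at most $2^{2\lceil 16/\epsilon^2\rceil}$ pieces, such that the step kernel $K_{\mathcal{P}}=\mathbb{E}[K\mid\mathcal{P}\times\mathcal{P}]$ satisfies $\|K-K_{\mathcal{P}}\|_\square<\epsilon/2$. The refinement is obtained by the usual energy increment: apply $\lceil 16/\epsilon^2\rceil$ rounds, each splitting every class by two sets $S,T$, and intersect with the layer partition.

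Averaging does not disturb the constraints:
\begin{itemize}
\item the zero pattern of Condition 3 is preserved, because $\mathcal{P}$ refines the layer partition;
\item Condition 4 is preserved;
\item Condition 2 is preserved because the input, output and bias pieces need not be split further; we keep them unrefined. Indeed, $K$ is already constant along those directions, so refining them does not change the conditional expectation in that coordinate.
\end{itemize}
Values stay in $[-1,1]$.

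\textbf{Step 2 (rounding to an equipartition).} The pieces of $\mathcal{P}$ inside a hidden layer have arbitrary measures, whereas a computational kernel must be a step kernel on $\mathcal{I}_{n'}$ with $n'=d(L+2)$. Choose $d$ as in the statement, and fix one hidden layer with pieces $P_1,\dots,P_k$, where $k\le 2^{2\lceil16/\epsilon^2\rceil}$. Assign to each $P_i$ a number $m_i$ of fine intervals of length $1/n'$ with $|m_i/n'-\mu(P_i)|\le 1/n'$ and $\sum_i m_i=d$. Build $K'$ by placing the blocks consecutively with these new sizes and copying the block values.

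We then need a $\phi\in\mathcal{S}_L$ mapping each $P_i$ onto its new interval up to a symmetric-difference error of total measure at most $k/n'$ per layer; such a map exists because any two measurable sets of equal measure are isomorphic. With this $\phi$,
\[
\|K_{\mathcal{P}}-K'^{\phi}\|_\square\le \|K_{\mathcal{P}}-K'^{\phi}\|_1\le 2(L-1)\frac{2k}{n'}.
\]
The choice $d\gtrsim 8ML k/\epsilon$ makes this at most $\epsilon/2$. The factor $M$ guarantees the divisibility required in Condition 1.

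\textbf{Step 3 (conclusion).} The triangle inequality gives $\delta_\square^{\mathrm{comp}}(K,K')<\epsilon$, and $K'\in\mathcal{CK}$ with the stated $d$.

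The main obstacle I expect is Step 1. The regularity lemma must be applied so that the partition refines the layers while leaving the input, output and bias pieces unsplit. At the same time, the measure-preserving map in Step 2 must fix $U^{(0)}\cup U^{(L)}\cup U^{(\mathrm{bias})}$ pointwise. To handle this, I would run the energy-increment argument only on the hidden coordinates. For the input and output pieces, I would use the fact that $K$ is already constant in the relevant variable, so that $S\cap U^{(0)}$ can be replaced by a union of whole coarse intervals. I would also check that the constant in the exponent survives the bookkeeping.
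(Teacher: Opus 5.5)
Your proposal is correct and follows essentially the same route as the paper. The paper also obtains a weak-regularity partition that respects the layers, with at most $2^{2\lceil 16/\epsilon^2\rceil}$ pieces per hidden layer, and merges the input/output/bias pieces using the constancy in Condition 2. It then equalizes the hidden pieces to multiples of $1/n$ and transports them onto interval blocks by a layer-preserving measure-preserving map, at an $L^1$ cost of order $Lk/n$ that the choice of $d$ makes smaller than $\epsilon/2$.

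The differences are only technical. You run the energy increment relative to the layer partition, whereas the paper cites the lemma and intersects with $\mathcal{L}_r$. You also round piece sizes directly and absorb the mismatch into $\phi$, whereas the paper uses the equitizing lemma with mixed remainder parts. Your version has two small advantages: your $\phi$ genuinely fixes $U^{(\mathrm{bias})}$, and it is clear that the block values you copy are those of $K_{\mathcal{P}}$ rather than of $K$.
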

This structural reduction allows us to represent the information of an arbitrarily wide network using a much smaller hidden dimension. When combined with the stability analysis of the forward pass presented in \cref{th:lipwithconstcomp} -- showing that $B$-IR-MPNNs are Lipschitz continuous with respect to the computational cut distance -- this leads directly to our main compression result. Corollary \ref{thm:arch_approx} provides a uniform bound on the size of the approximating network that is independent of the original network's width.

\begin{restatable}[Network Architecture Approximation]{corollaryy}{ArchApproxTwo}~\label{thm:arch_approx}
 Let $L,d_0,d_L\in\mathbb{N}$ with $L\ge 2$, let $B\ge L+2$, and $\epsilon > 0$. Then, for any network  $\Theta\in\mathcal{NN}_{\mathrm{dense}}(B,L,d_0,d_L)$ there is a network  $\Theta'\in\mathcal{NN}_{\mathrm{dense}}(B,L,d_0,d_L)$ with hidden dimension 
 \begin{align*}
&d= 8ML\left\lceil \frac{1}{\epsilon}\cdot2^{2\lceil (16((L+2)d_L)^2(2B)^{2L})/\epsilon^2\rceil}(L+2)d_L(2B)^L\right\rceil,
 \end{align*}
such that $\|\Theta-\Theta'\|_{\infty}<\epsilon$.
\end{restatable}
\paragraph{Why Not Monte Carlo Compression?}
A natural alternative to our approach would be to approximate dense networks by randomly subsampling hidden layer neurons. This approach is often called \emph{randomized numerical linear algebra}. However, Monte Carlo sampling does not yield uniform guarantees: the approximation is only successful in high probability, and the event of success depends on the input to the network. Hence, different inputs would require different events of subsampled networks, and the intersection of all events in general can be the empty set,  preventing simultaneous control of the approximation error on all inputs. In contrast, the weak regularity lemma provides deterministic and input-independent compression guarantees.

\section{Implications for Approximation}\label{sec:implications}
The compression phenomenon established in \cref{sec:compression} reveals that any dense ReLU network -- no matter how large -- can be approximated by a bounded-size network in the computational cut distance. This finding hints at a fundamental limitation: if expressive power saturates at a finite level of architectural complexity, then sufficiently large networks cannot continue to gain representational capacity merely by adding width. In this section, we make this intuition precise by identifying explicit conditions under which dense deep ReLU networks \emph{fail to be universal approximators}. 
 
\paragraph{VC-Dimension Based Lower Bounds on Expressivity.}
The first step is to relate functional approximation capacity to combinatorial complexity. Recall that for a hypothesis class $\mathcal{H}$ of Boolean functions on $[0,1]^d$, the \emph{VC-dimension} is the largest number of points that $\mathcal{H}$ can shatter. When $\mathcal{H}$ consists of thresholded outputs of a ReLU network of fixed architecture but variable weights, Theorem 8.7 in \cite{Bartlett2009} provides the bound \(\mathrm{VCdim}(\mathcal{H}) \leq c \cdot W^2,\) where $W$ is the total number of weights and $c>0$ is a universal constant independent of the architecture. This implies that representational capacity grows at most quadratically in the number of parameters.
Theorem 7 of \cite{tightVC2017} presents an explicit bound that improves upon the classical VC-dimension bound, showing that the universal constant $c$ is in fact small.

The next theorem provides a quantitative lower bound on the number of computational units necessary for universal approximation, building on \cite[Theorem~4(a)]{Yarotsky2017approximations}. Our contribution lies in adapting this bound to our setting, and expressing it in terms of a single implicit constant. Recall that ${\mathrm{Lip}}(d_0,d_L)$ is the space of $1$-Lipschitz continuous functions $[0,1]^{d_0} \to \mathbb{R}^{d_L}$ bounded by 1.

\begin{restatable}[]{theorem}{Yarotsky}
\label{thm:VCbounds}
For any \(\epsilon \in (0,1/3)\), a ReLU network architecture capable of approximating every function \( f \in {\mathrm{Lip}}(d_0,d_L) \) with error bounded by \(\epsilon\) must have at least \(c^{-1/2} (6\epsilon)^{-d_0/2}\) parameters, where $c$ is the absolute constant defined above.
\end{restatable}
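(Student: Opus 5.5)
We follow the proof of \cite[Theorem~4(a)]{Yarotsky2017approximations}: show that an architecture approximating all of ${\mathrm{Lip}}(d_0,d_L)$ to accuracy $\epsilon$ yields, after thresholding, a Boolean class that shatters a large grid. The VC bound $\mathrm{VCdim}\le cW^2$ then forces $W$ to be large. First we reduce to scalar outputs. Given $g\in{\mathrm{Lip}}(d_0,1)$, the map $(g,0,\dots,0)$ lies in ${\mathrm{Lip}}(d_0,d_L)$, since the $\ell_\infty$ Lipschitz constant and sup bound are unchanged. If the architecture approximates it within $\epsilon$, then its first output coordinate approximates $g$ within $\epsilon$. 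Dropping the other output channels only removes weights, so the scalar subnetwork has at most $W$ parameters.

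Next we build the shattered set. Let $N:=\lfloor 1/(6\epsilon)\rfloor\ge 1$, which holds because $\epsilon<1/3$ gives $1/(6\epsilon)>1/2$; if needed, take $N=\lceil\cdot\rceil$ and adjust constants. Consider the grid $\{x_{\mathbf k}\}$ of $N^{d_0}$ points with spacing $1/N$ in $[0,1]^{d_0}$, with $\ell_\infty$-separation $1/N\ge 6\epsilon$. For each labeling $y\in\{0,1\}^{N^{d_0}}$, define $g_y=\sum_{\mathbf k}(2y_{\mathbf k}-1)\,\phi_{\mathbf k}$, where $\phi_{\mathbf k}(x)=\max(0,\,3\epsilon-\|x-x_{\mathbf k}\|_\infty)$. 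The bumps have disjoint supports, since their radius is $3\epsilon\le 1/(2N)$. So $g_y$ is $1$-Lipschitz, is bounded by $3\epsilon<1$, and satisfies $g_y(x_{\mathbf k})=\pm 3\epsilon$. Let $\Theta_y$ be the architecture's $\epsilon$-approximant. Then $\Theta_y(x_{\mathbf k})>2\epsilon>0$ when $y_{\mathbf k}=1$, and $\Theta_y(x_{\mathbf k})<-2\epsilon<0$ when $y_{\mathbf k}=0$. Thus $\mathbbm{1}[\Theta_y>0]$ realizes $y$, and the class of thresholded networks of this fixed architecture shatters the grid.

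Finally we combine the bounds. We get $N^{d_0}\le\mathrm{VCdim}\le cW^2$, so $W\ge c^{-1/2}N^{d_0/2}$. Replacing $N$ by $(6\epsilon)^{-1}$ gives the stated bound $c^{-1/2}(6\epsilon)^{-d_0/2}$.

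\textbf{Main obstacle.} The step needing care is the floor in $N=\lfloor1/(6\epsilon)\rfloor$, because $N^{d_0}$ may be smaller than $(6\epsilon)^{-d_0}$. I would fix this by choosing the grid spacing so that it lies exactly inside $[0,1]^{d_0}$, with bump radius half the spacing. Concretely, use spacing $\delta=6\epsilon$ and radius $3\epsilon$, and center the points at $x_{\mathbf k}=(k_i+\tfrac12)\delta$ coordinatewise. This gives $\lfloor 1/(6\epsilon)\rfloor^{d_0}$ points. To absorb the rounding loss, I would rescale $\epsilon$ slightly, or use the fact that $c$ is only defined up to an absolute constant. Since the statement uses ``the absolute constant defined above,'' I will assume the intended reading allows this standard rounding. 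I will also state explicitly that the VC bound applies to the fixed-depth, fixed-width architecture with the normalization factors, since those are just constant rescalings of the weights.
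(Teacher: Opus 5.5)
Your route is the paper's: Yarotsky's shattering argument combined with $\mathrm{VCdim}\le cW^2$. Your variations are sound and slightly cleaner. Signed bumps thresholded at $0$ avoid a threshold parameter. The bump $\max(0,3\epsilon-\|x-x_{\mathbf k}\|_\infty)$ is genuinely $1$-Lipschitz for the paper's $\ell_\infty$ input metric, whereas the paper's Euclidean bump $1-2\|x\|_2$ is only $\sqrt{d_0}$-Lipschitz there. The zero-padding reduction to $d_L=1$ is also fine.

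\textbf{The gap is the final step.}
\begin{itemize}
\item With $N=\lfloor 1/(6\epsilon)\rfloor$ you only get $W\ge c^{-1/2}N^{d_0/2}$. Since $N\le 1/(6\epsilon)$, ``replacing $N$ by $(6\epsilon)^{-1}$'' goes in the wrong direction.
\item Your claim $N\ge 1$ is false. For $\epsilon\in(1/6,1/3)$ one has $1/(6\epsilon)\in(1/2,1)$, so $N=0$.
\item Absorbing the loss into $c$ does not work. The ratio $\bigl(1/(6\epsilon N)\bigr)^{d_0}$ can be nearly $2^{d_0}$ (take $1/(6\epsilon)$ just below $2$). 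That factor depends on $d_0$, so it cannot go into the absolute constant $c$ of the VC bound.
\item ``Rescaling $\epsilon$'' changes the bound being claimed.
\item The centered grid $(k_i+\tfrac12)\delta$ from your last paragraph is exactly what produces the lossy count $\lfloor 1/(6\epsilon)\rfloor^{d_0}$, so it does not help.
\end{itemize}

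\textbf{The fix.} The missing observation is that the bumps need not lie inside the cube. Membership in $\mathrm{Lip}(d_0,d_L)$ only concerns the restriction to $[0,1]^{d_0}$. So use the uncentered grid
\[
\{0,\,6\epsilon,\,12\epsilon,\dots,\lfloor 1/(6\epsilon)\rfloor\, 6\epsilon\}^{d_0}\subset[0,1]^{d_0}.
\]
It has $(\lfloor 1/(6\epsilon)\rfloor+1)^{d_0}>(6\epsilon)^{-d_0}$ points, with pairwise $\ell_\infty$-separation $6\epsilon$. Your signed bumps of radius $3\epsilon$ still have disjoint interiors, so they give a $1$-Lipschitz function bounded by $3\epsilon<1$. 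The shattering argument then yields $cW^2>(6\epsilon)^{-d_0}$ for every $\epsilon\in(0,1/3)$, with no rounding loss. For $\epsilon>1/6$ this is the trivial one-point case.

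For what it's worth, the paper's own proof has the same hole. It simply sets $N=1/(6\epsilon)$ and places $N^{d_0}$ points as if this were an integer.
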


%
%
\paragraph{Approximation Limits of Dense Neural Networks.}
We now connect the above expressivity lower bound with the compression result. Theorem \ref{thm:VCbounds} gives a lower bound on the number of parameters required for an $\epsilon$ approximation, while Corollary \ref{thm:arch_approx} shows that neural networks that achieve $\epsilon$ approximation effectively have a uniformly bounded number of parameters. As it turns out, in certain settings, the lower bound in Theorem \ref{thm:VCbounds} is higher than the upper bound entailed by Corollary \ref{thm:arch_approx}. This shows that dense neural networks are not universal approximators of ${\mathrm{Lip}}(d_0,d_L)$.
\begin{restatable}[Expressivity Bound of Strongly Dense Neural Networks]{theorem}{ExpressivityBound}
\label{thm:ExpressivityBoundsOne}
Let $L,d_L\in\mathbb{N}$ with $L\ge 2$, let $B\ge L+2$, and let $c$ be the constant from \cref{thm:VCbounds}. Let $d_0$ satisfy
\begin{equation*}
d_0 \geq 17\log_2\!\bigl(
c^{1/2}L^3(L+2)^2d_L^4(2B)^{2L}
\bigr)
+ 17\cdot 2^{14}(L+2)^2d_L^2(2B)^{2L} + 306.
\end{equation*}
Then $\mathcal{NN}_{\mathrm{dense}}(B,L,d_0,d_L)$ is not a universal approximator of ${\mathrm{Lip}}(d_0,d_L)$.
\end{restatable}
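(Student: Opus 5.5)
We argue by contradiction, combining the compression result (\cref{thm:arch_approx}) with the parameter lower bound (\cref{thm:VCbounds}). Suppose $\mathcal{NN}_{\mathrm{dense}}(B,L,d_0,d_L)$ were a universal approximator of ${\mathrm{Lip}}(d_0,d_L)$. We fix a single accuracy $\epsilon_0$ depending only on $L,d_L,B$ (not on $d_0$). The natural choice is $\epsilon_0=1/12$, or something of that order, so that $2\epsilon_0<1/3$ and $6\cdot 2\epsilon_0\le 1$ up to a constant factor.

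\textbf{Step 1: reduction to one fixed architecture.} For every $f\in{\mathrm{Lip}}(d_0,d_L)$, universality gives some $\Theta_f$ in the dense class with $\|f-\Theta_f\|_\infty<\epsilon_0$, of arbitrary hidden width. Applying \cref{thm:arch_approx} with accuracy $\epsilon_0$ yields $\Theta'_f$ of the fixed hidden dimension $d^\ast$ stated there, with $\|\Theta_f-\Theta'_f\|_\infty<\epsilon_0$. Hence every $f$ is approximated within $2\epsilon_0$ by a network of the single architecture $(L,d_0,d^\ast,\dots,d^\ast,d_L)$. Crucially, $d^\ast$ depends on $d_0$ only through the factor $M=\mathrm{lcm}(d_0,d_L)\le d_0d_L$.

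\textbf{Step 2: comparing the bounds.} This architecture has
\[
W=(L-1)(d^\ast)^2+(d_0+d_L+L-1)d^\ast+d_L
\]
parameters. By \cref{thm:VCbounds} with accuracy $2\epsilon_0\in(0,1/3)$, we need $W\ge c^{-1/2}(12\epsilon_0)^{-d_0/2}$. To get a strictly exponential lower bound we take $\epsilon_0$ somewhat smaller, say $\epsilon_0=1/24$, which gives $W\ge c^{-1/2}2^{d_0/2}$. On the other side, insert $\epsilon_0$ into the formula for $d^\ast$, writing $A=(L+2)d_L(2B)^L$. This gives
\[
d^\ast\le 8d_0d_LL\cdot\bigl(24A\cdot 2^{2\lceil 16\cdot 24^2A^2\rceil}+1\bigr).
\]
Thus $W\le C_1 d_0^2\,2^{C_2A^2}$, with $C_2\approx 2\cdot 16\cdot 576$ and polynomial factors $L^3d_L^2A^2$. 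Note that $16\cdot 576\cdot 2=2^{14}\cdot 1.125$, which matches the $2^{14}(L+2)^2d_L^2(2B)^{2L}$ term in the hypothesis.

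\textbf{Step 3: the contradiction.} Taking $\log_2$, the contradiction $W< c^{-1/2}2^{d_0/2}$ (or the analogous inequality with the paper's chosen $\epsilon_0$) reduces to
\[
d_0/2>2\log_2 d_0+\log_2\bigl(c^{1/2}C_1\bigr)+C_2A^2.
\]
We absorb $2\log_2 d_0$ using $d_0\ge 306$-type conditions: for $d_0$ large, $2\log_2 d_0\le d_0/\kappa$ for a suitable $\kappa$, which explains the factor $17$ and the constant $306$. The hypothesis on $d_0$ should then give exactly this inequality, contradicting \cref{thm:VCbounds}.

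The main obstacle is the bookkeeping in Steps 2 and 3: choosing $\epsilon_0$ and bounding the ceilings and the $d_0$-dependence of $M$ so that the resulting inequality matches the stated constants ($17$, $2^{14}$, $306$, and the polynomial $L^3(L+2)^2d_L^4(2B)^{2L}$). I would first carry out the argument with generic constants, then tune $\epsilon_0$ and the estimate $2\log_2 d_0\le d_0/17$-style absorption to recover the stated threshold.
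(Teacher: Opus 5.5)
Your argument is the paper's: argue by contradiction, compress with \cref{thm:arch_approx} to a single hidden width whose only dependence on $d_0$ is through $M\le d_0d_L$, apply \cref{thm:VCbounds} to that one architecture, and compare parameter counts. Your quantifier order matches what that per-architecture theorem actually provides: you apply \cref{thm:VCbounds} to the width-$d^\ast$ architecture, which under universality approximates every $f\in\mathrm{Lip}(d_0,d_L)$ within $2\epsilon_0$. The paper instead invokes a single $f$ that defeats the whole family $\mathcal{H}_{\epsilon_0}$, which is more than \cref{thm:VCbounds} states and more than is needed.

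The paper takes VC accuracy $1/8$ and compression accuracy $1/16$. With $A=(L+2)d_L(2B)^L$ this gives $16A^2/(1/16)^2=2^{12}A^2$, so $d$ carries the factor $2^{2^{13}A^2}$ and $W$ the factor $2^{2^{14}A^2+18}$. Hence the $2^{14}$ in the hypothesis is the exponent of $W$ after squaring $d$. The $17$ comes from $\frac{d_0}{17}\le\frac{d_0}{2}\log_2\frac43-2\log_2 d_0$ for $d_0>1800$. The constant $306=17\cdot 18$ comes from the $2^{18}$, not from a threshold on $d_0$.

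The bookkeeping you leave open contains a slip. Since $W\approx(L-1)(d^\ast)^2$, the exponent doubles: with compression accuracy $1/24$ you get $2^{4\lceil 9216A^2\rceil}\approx 2^{2.25\cdot 2^{14}A^2}$, not $2^{1.125\cdot 2^{14}A^2}$. So nothing ``matches'' the hypothesis, and the stated constants cannot be attributed to your $\epsilon_0$. The argument still closes, because the factor $17$ leaves ample slack. Concretely,
$d^\ast\le 8d_0d_LL\bigl(96A\,2^{18432A^2}+1\bigr)\le 776\,d_0d_LLA\,2^{18432A^2}$,
and
$W\le L(d^\ast)^2<2^{20}d_0^2P\,2^{36864A^2}$ with $P=L^3(L+2)^2d_L^4(2B)^{2L}$.
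So contradicting $W\ge c^{-1/2}2^{d_0/2}$ requires
\[
\frac{d_0}{2}-2\log_2 d_0\ \ge\ \log_2\bigl(c^{1/2}P\bigr)+2.25\cdot 2^{14}A^2+20 .
\]
Assume the hypothesis $d_0\ge 17\bigl(\log_2(c^{1/2}P)+2^{14}A^2+18\bigr)$, together with, say, $\log_2(c^{1/2}P)\ge 0$, so that $d_0$ is astronomically large. Then $2\log_2 d_0\le d_0/17$, so the left side is at least
$\frac{15}{34}d_0\ge\frac{15}{2}\bigl(\log_2(c^{1/2}P)+2^{14}A^2+18\bigr)$,
which dominates the right side.

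So $\epsilon_0=1/24$ does prove the theorem as stated. It needs roughly $d_0\ge 4.5\cdot 2^{14}A^2$, against roughly $4.8\cdot 2^{14}A^2$ for the paper's choice, both well under the stated $17\cdot 2^{14}A^2$. As a small bonus, your VC accuracy $1/12$ makes $N=1/(6\cdot\frac{1}{12})=2$ an integer in the grid construction behind \cref{thm:VCbounds}, whereas $1/8$ gives $N=4/3$.
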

We interpret \cref{thm:ExpressivityBoundsOne} as follows. Since general neural networks of a fixed depth are universal approximators of $\mathrm{C}(d_0,d_L)$, they are also universal approximators of $\mathrm{Lip}(d_0,d_L)$. One might expect that neural networks with linear layers bounded by some universal constant $C>1$ in their induced infinity norm, i.e. $\|W^{(\ell)}\|_{\infty}<C$, are also universal approximators of $\mathrm{Lip}(d_0,d_L)$.\footnote{In fact, as far as we know, the following is an open question: There exists a constant $C>1$, such that the space of neural networks with linear layers $W^{(\ell)}$ having induced infinity norm $\|W^{(\ell)}\|_{\infty}<C$ is a universal approximator of $\mathrm{Lip}(d_0,d_L)$. We note that networks with sort activations are known to satisfy this property \citep{Anil2018SortingOL,Tanielian2020ApproximatingLC}.
} In our definition of strongly dense neural networks, if one chooses $B=(L+2)C$ then the infinity norms of the linear layers are bounded by $C$. Hence, one might naively anticipate that $\mathcal{NN}_{\mathrm{dense}}(B,L,d_0,d_L)$ is a universal approximator of ${\mathrm{Lip}}(d_0,d_L)$. However, \cref{thm:ExpressivityBoundsOne} shows that this is false, given that the input dimension is large enough (but still independent of $n$).  
%

\paragraph{Connection to Double Descent and Modern Deep Learning.}
Our results in \cref{thm:ExpressivityBoundsOne} offer, partially, a theoretical lens through which to view the double descent phenomenon. Classical bias-variance logic suggests that as the parameter \begin{wrapfigure}{r}{0.36\textwidth}
\vspace{-14pt}
\centering
\includegraphics[width=0.36\textwidth]{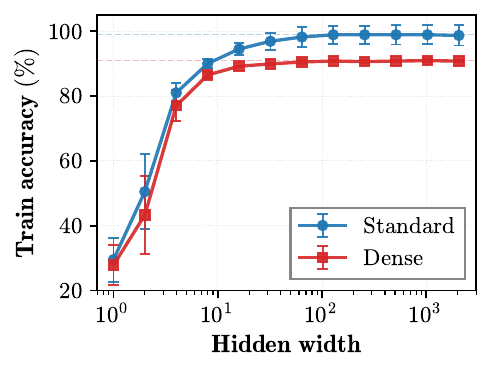}
\vspace{-20pt}
\label{fig:exp}
\vspace{-12pt}
\end{wrapfigure}count surpasses the sample size, the model's variance should diverge, causing the model to overfit the data and impair the generalization capacity. However, we show that for strongly dense networks, the realizable capacity actually plateaus, which supports the empirical observation that there is no unbounded growth of variance. We note that in practical deep learning, the density of the neural network is often implicitly  enforced through techniques such as weight decay or layer normalization.

\paragraph{Experiments.
}
%
We empirically validate our findings on MNIST using one-hidden-layer ReLU networks trained with Adam; full details are in \cref{appendix:experiments}. Standard networks closely approximate the target 
function, with training accuracy stabilizing around $98$--$99\%$. Dense 
networks---where weights are clamped to $[-10/d_{i-1}, 10/d_{i-1}]$ after 
each update---fail to do so, plateauing at $90$--$91\%$ across all widths, 
supporting that density constraints create a hard barrier to approximation.
\section{Conclusion}
We presented a framework linking graph-regularity-based compression of dense ReLU networks with VC-dimension lower bounds on expressivity. The combination yields explicit conditions under which depth- and weight-bounded dense networks cease to be universal, revealing a saturation effect: beyond a certain scale, width no longer increases representational capacity. 
These findings suggest that expressivity is often governed more by architectural structure than by size alone. While our present results apply to fully connected feedforward architectures, extending our compression-based analysis to structured families -- such as convolutional or attention-based networks -- offers a promising direction for developing a unified theory for expressivity saturation in deep learning.
\section*{Acknowledgement}

RL is supported by a grant from the United States-Israel Binational Science Foundation (BSF), Jerusalem, Israel, and the United States National Science Foundation (NSF), (NSF-BSF, grant No. 2024660), and by the Israel Science Foundation (ISF grant No. 1937/23). SJ acknowledges support from the Humboldt Foundation.
\bibliographystyle{plainnat}
\bibliography{yourbibfile}
\appendix
\crefalias{section}{appendix}
\section{Proof of Deep Neural Networks as MPNNs}\label{app:comp_graph}
Lemma \ref{lem:ker_equi} shows that the output of a $B$-strongly dense neural network can be represented as the output of an $L$-layer $B$-IR-MPNN applied to its induced computational kernel. To prove Lemma \ref{lem:ker_equi}, we first show Lemma \ref{lem:const_bias_sig}, which states that the $B$-IR-MPNN applied to a computational kernel-signal is stationary on $U^{(\mathrm{bias})}$.
\begin{lemma}\label{lem:const_bias_sig}
Let $(K_{(\weights,\bias)},f)\in\mathcal{CK}(B,L,d_0,d_L)\times \mathcal{CS}(L,d_0,d_L)$ be a computational kernel-signal. 
 Then, the hidden representations of the $B$-IR-MPNN satisfy:
\[
f^{(\ell)}|_{U^{(\mathrm{bias})} }\equiv 1,\quad \mathrm{for} \quad \ell=0,\ldots,L.
\]
That is, the hidden representations are constant and share the same value on $U^{({\mathrm{bias}})}$.    
\end{lemma}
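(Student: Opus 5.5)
The plan is to argue by induction on $\ell$, evaluating the $B$-IR-MPNN update \cref{eq:mpnns} at a point $x\in U^{(\mathrm{bias})}$ and using only Conditions 3 and 4 of the computational kernel. Since $f$ is a computational input signal here (the statement is meant for the signal fed into the network), the base case $\ell=0$ holds by definition: $f^{(0)}=f\equiv 1$ on $U^{(\mathrm{bias})}$. I will make this assumption explicit, since a general computational signal need not be $1$ on the bias interval.

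For the inductive step, I fix $x\in U^{(\mathrm{bias})}$ and split the integral $\int_{[0,1]}K(x,y)f^{(\ell-1)}(y)\,dy$ over the layer partition $\mathcal{U}_{L+2}$.

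\begin{itemize}
\item By the third clause of Condition 3, $K(x,y)=0$ whenever $x\in U^{(\mathrm{bias})}$ and $y\notin U^{(\mathrm{bias})}$, so every contribution from $U^{(0)},\dots,U^{(L)}$ vanishes.
\item On the bias block, Condition 4 gives $K(x,y)=(L+2)/B$, and the inductive hypothesis gives $f^{(\ell-1)}(y)=1$.
\end{itemize}

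Since $\mu(U^{(\mathrm{bias})})=1/(L+2)$, the integral equals $\frac{L+2}{B}\cdot\frac{1}{L+2}=1/B$.

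For $\ell\le L-1$, the update gives $f^{(\ell)}(x)=B\cdot\mathrm{ReLU}(1/B)=1$, using $B>0$ so that $1/B\ge 0$. For $\ell=L$ there is no ReLU, but the value is again $B\cdot(1/B)=1$. This closes the induction on $U^{(\mathrm{bias})}$, up to null sets, which is all that matters because signals are defined almost everywhere.

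There is no real obstacle here. The only care points are checking that Condition 3 kills exactly the off-bias columns for rows in $U^{(\mathrm{bias})}$, and noting that the constraint $B\ge L+2$ keeps $(L+2)/B\in[-1,1]$, so the kernel is admissible.
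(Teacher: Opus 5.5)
Your proof is correct and follows essentially the same induction as the paper: Condition 3 kills the off-bias columns for rows in $U^{(\mathrm{bias})}$, and Condition 4 with the inductive hypothesis gives the integral $\frac{L+2}{B}\,\mu(U^{(\mathrm{bias})})=1/B$, so $B>0$ yields $f^{(\ell)}\equiv 1$ there. Your remark that the base case needs $f$ to be a computational \emph{input} signal, not an arbitrary computational signal, makes explicit an assumption the paper's proof uses only implicitly.
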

\begin{proof}
We prove the claim using induction. Recall that by the definition of computational kernel for any $v\in U^{(\mathrm{bias})}$:
$$K(v,u)=\begin{cases}
     (L+2)/B \quad {\mathrm{if}} \quad u\in U^{(\mathrm{bias})},\\
    0\qquad\qquad\quad {\mathrm{otherwise}}. 
\end{cases}$$
\paragraph{Induction Base.} By the definitions of computational signals and of the $B$-IR-MPNN, we have
$$f^{(0)}|_{U^{(\mathrm{bias})} }\equiv 1.$$
\paragraph{Induction Assumption.} We assume that $f^{(\ell-1)}|_{U^{(\mathrm{bias})} }\equiv 1$ for $0<\ell\le L$.
\paragraph{Induction Step.} 
By the definition of the $B$-IR-MPNN and the induction assumption, for any $v\in U^{(\mathrm{bias})}$:
\begin{align*}
f^{(\ell)}(v)& := 
B\cdot{\mathrm{ReLU}}\!\left(\int_{[0,1]} K(v,u)f^{(\ell-1)}(u)du\right)\\&=B\cdot{\mathrm{ReLU}}\!\left(\int_{U^{(\mathrm{bias})}} (L+2)/Bf^{(\ell-1)}(u)du\right)\\&=(L+2)\cdot{\mathrm{ReLU}}\!\left(\int_{U^{(\mathrm{bias})}}f^{(\ell-1)}(u)du\right)
\\&=(L+2)\cdot{\mathrm{ReLU}}\!\left(\int_{U^{(\mathrm{bias})}}du\right)
\\&=(L+2)\cdot\mu\left(U^{(\mathrm{bias})}\right)=1.
\end{align*} 
This concludes the proof.
\end{proof}
Now, we use Lemma \ref{lem:const_bias_sig} to show Lemma \ref{lem:ker_equi}.
\MLPasKerMPNN*
\begin{proof}  We now show that for $\ell=0,\ldots,L$ the activation of the $\ell$th hidden layer $\mathbf{h}^{(\ell)}_i$ in channel $i\in[d_\ell]$ of the network $\Theta_{(\weights,\bias)}$ is equal to the hidden $B$-IR-MPNN representation $f^{(\ell)}(v)$ whenever $v$ is in the $i$th interval of
$\cP^{(\ell)}$, when 
$$\cP^{(\ell)}:=\begin{cases}
 \mathcal{C}^{\mathrm{in}}_{d_0},\quad \mathrm{if} \quad \ell=0,\\
 \mathcal{R}_d^{(\ell)} \quad \mathrm{if} \quad 0<\ell<L ,\\
 \mathcal{C}^{\mathrm{out}}_{d_L} \quad \mathrm{if} \quad \ell=L.  \\
\end{cases}$$
Recall that by the definition of the computational kernel, for any $v$ in the $i$th interval of $\cP^{(\ell)}$, $\ell\in [L]$:
$$K(v,u)=\begin{cases}
     b_i^{(\ell)}/B \quad {\text{ if }} \quad u\in U^{(\mathrm{bias})}\text{ and }v \text{ is in the }i\text{th interval of } \mathcal{P}^{(\ell)},\\
     W_{i,j}^{(\ell)}/B {\text{ }\text{  if }} \quad u \text{ is in the $j$th interval of }\cP^{(\ell-1)}\text{ and }v \text{ is in the }i\text{th interval of } \mathcal{P}^{(\ell)},\\
    0\qquad\quad{\text{ otherwise}}. 
\end{cases}$$
We prove the statement inductively.
\paragraph{Induction Base.}
Let  $\ell=0 $ and $ i\in[d_0]$, then, for any $v$ in the $i$th interval of $\cP^{(0)}$ and $\mathbf{x}=(x_1,\ldots,x_{d_0})\in\mathbb{R}^{d_0}$
\[
f^{(0)}(v)= f_\mathbf{x}(v) = x_i=\mathbf{h}_i^{(0)} .
\]

\paragraph{Induction Assumption.} 
For layers $\ell = 1,\dots,L$, channels $i\in[d_{\ell-1}]$ and for any $v$ in the $i$th interval of $\cP^{(\ell-1)}$, assume that
\[
f^{(\ell-1)}(v)=\mathbf{h}_i^{(\ell-1)} .
\]
\paragraph{Induction Step.}
Denote the $j$th interval of the partition $\cP^{(\ell)}$ by $I^{(\ell)}_j$. Then, for any $v$ in the $i$th interval of the partition $\cP^{(\ell)}$:
\begin{align*}
f^{(\ell)}(v) &:= 
B\cdot{\mathrm{ReLU}}\!\left(\int_{[0,1]} K(v,u)f^{(\ell-1)}(u)du\right)
\\&
=B\cdot{\mathrm{ReLU}}\!\left(\sum_{I_{j}^{(\ell-1)}\in\cP^{(\ell-1)}}\int_{I_{j}^{(\ell-1)}} (W^{(\ell)}_{i,j}/B)f^{(\ell-1)}(u)du+\int_{U^{(\mathrm{bias})}}(b_i^{(\ell)}/B)f^{(\ell-1)}(u)du\right)
\\&
={\mathrm{ReLU}}\!\left(\sum_{I_{j}^{(\ell-1)}\in\cP^{(\ell-1)}}\int_{I_{j}^{(\ell-1)}} W^{(\ell)}_{i,j}f^{(\ell-1)}(u)du+\int_{U^{(\mathrm{bias})}}b_i^{(\ell)}f^{(\ell-1)}(u)du\right)=:(*)
\end{align*}
By the induction assumption $f^{(\ell-1)}(v)=\mathbf{h}^{(\ell-1)}_i$ on any $v$ in the $i$th interval of $\cP^{(\ell-1)}$ and since by definition $f_\mathbf{x}|_{U^{(\mathrm{bias})}}\equiv 1$, then, $f^{(\ell-1)}|_{U^{(\mathrm{bias})}}\equiv 1$ by Lemma \ref{lem:const_bias_sig} and. Thus
\begin{align*}
(*) &
={\mathrm{ReLU}}\!\left(\sum_{j=1}^{d_{\ell-1}}\int_{I_{j}^{({\ell-1})}} W^{({\ell})}_{i,j}\mathbf{h}^{(\ell-1)}_jdu+\int_{U^{(\mathrm{bias})}}b_i^{({\ell})}du\right)
\\&={\mathrm{ReLU}}\!\left(\sum_{j=1}^{d_{\ell-1}}W^{(\ell)}_{i,j}\mathbf{h}^{(\ell-1)}_j\cdot\mu\left(I_{j}^{({\ell-1})}\right)+b_i^{(\ell)}\mu\left(U^{(\mathrm{bias})}\right)\right)=:(**)
\end{align*}
Recall that $\mu\left(U^{(\mathrm{bias})}\right)=1/(L+2)$ and $\mu\left(I_{j}^{({\ell-1})}\right)=1/(d_{\ell-1}(L+2))$ by the definition of the computational kernel. Therefore
\begin{align*}
(**)&
={\mathrm{ReLU}}\!\left(\sum_{j=1}^{d_{\ell-1}}W^{({\ell})}_{i,j}\mathbf{h}^{(\ell-1)}_j\int_{I_{j}^{({\ell-1})}} du+b_i^{({\ell})}\int_{U^{(\mathrm{bias})}}du\right)
\\&
={\mathrm{ReLU}}\!\left(\sum_{j=1}^{d_{\ell-1}}W^{({\ell})}_{i,j}\mathbf{h}^{(\ell-1)}_j\int_{I_{j}^{({\ell-1})}} du+\frac{1}{L+2}b_i^{({\ell})}\right)
\\&
=\mathrm{ReLU}\left(\frac{1}{d_{\ell-1}(L+2)}
\sum_{j=1}^{d_{\ell-1}}\left(W^{(\ell)}_{ij}\mathbf{h}^{({\ell-1})}_j + b^{(\ell)}_i\right)\right)=\mathbf{h}^{(\ell)}_i.
\end{align*}
This concludes the proof since
$$\Theta_{(\mathbf{W},\mathbf{b})}
(\mathbf{x})_i =\mathbf{h}^{(L)}_i=f^{(L)}(v)= \Phi_{B,L}\big(K_{(\mathbf{W},\mathbf{b})}
,f_{\mathbf{x}}\big)\big(v\big),$$
whenever $v$ is in the $i$th interval of $\mathcal{C}_{d_L}^{\mathrm{out}}$ 

\end{proof}
\section{Proofs of Model Compression}\label{app:compression}
Here we prove our Lipschitz continuity of $B$-IR-MPNN (Theorem \ref{th:lipwithconstcomp}), our weak regularity lemma (Theorem \ref{thm:wrlcompgraphon}), and our compression result (Corollary \ref{thm:arch_approx}).
\paragraph{Lipschitz Continuity of MPNNs.}
MPNNs were shown to be Lipschitz continuous with respect to the cut norm in \cite{signal23}, with a further extension in \cite{signal25}. In this work, we adapt the techniques developed in \cite{signal23} to our setting in order to obtain an explicit bound on the Lipschitz constant.

We first recall some useful lemmas.

\begin{lemma} [\cite{signal23}, Equation (9)]
    \label{lem:cut1}
    Let $f:[0,1]\rightarrow \mathbb{R}$ be measurable. Then
    \[\frac{1}{2}\norm{f}_1\leq \norm{f}_{\square} \leq \norm{f}_1.\]
\end{lemma}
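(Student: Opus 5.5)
The plan is to prove both inequalities directly from the definition $\norm{f}_{\square}=\sup_{S\subseteq[0,1]}\bigl|\int_S f(x)\,dx\bigr|$. The idea is to split $f$ into its positive and negative parts. No earlier result of the paper is needed.

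For the upper bound, fix any measurable $S\subseteq[0,1]$. By the triangle inequality for integrals,
\[
\Bigl|\int_S f(x)\,dx\Bigr|\le \int_S |f(x)|\,dx\le \int_{[0,1]}|f(x)|\,dx=\norm{f}_1 .
\]
Taking the supremum over $S$ gives $\norm{f}_{\square}\le\norm{f}_1$. If $\norm{f}_1=\infty$, this bound holds trivially.

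For the lower bound, I would use the measurable sets $S_+:=\{x: f(x)>0\}$ and $S_-:=\{x: f(x)<0\}$. Then
\[
\norm{f}_1=\int_{S_+}f(x)\,dx-\int_{S_-}f(x)\,dx
\le \Bigl|\int_{S_+}f\Bigr|+\Bigl|\int_{S_-}f\Bigr|\le 2\norm{f}_{\square},
\]
which is $\tfrac12\norm{f}_1\le\norm{f}_{\square}$.

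The only delicate point is integrability. If $\norm{f}_1=\infty$, then at least one of $\int_{S_+}f$ and $-\int_{S_-}f$ is $+\infty$, because each is a well-defined integral of a function of fixed sign. In that case $\norm{f}_{\square}=\infty$ as well, so the inequality still holds in the extended reals. Apart from this, every step is routine.
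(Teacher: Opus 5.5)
Your proof is correct and is the standard argument. The upper bound follows from the triangle inequality, and the lower bound from testing the cut norm on $\{f>0\}$ and $\{f<0\}$; the paper gives no proof of its own (it cites \cite{signal23}), but it uses this same positive/negative-support idea in the proof of \cref{th:lipwithconstcomp}, so your route matches the intended one.
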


Denote by $L^+[0,1]$ the space of measurable functions $f:[0,1]\rightarrow [0,1]$. 

\begin{lemma} [\cite{signal23}, Lemma F.5/\cite{lovasz2012large}, Lemma 8.10]
    \label{lem:cutfg}
    Let $Q:[0,1]^2\rightarrow\mathbb{R}$ be measurable. Then
    \[\norm{Q}_{\square}=\sup_{f,g\in L^+[0,1]}\abs{\int_{[0,1]^2}f(x)Q(x,y)g(y)\mathrm{d}x\mathrm{d}y},\]
    where the supremum is attained for some $f,g\in L^+[0,1]$ with values in $\{0,1\}$.
\end{lemma}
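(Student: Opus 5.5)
The plan is to prove the two inequalities separately, using bilinearity of the form $\mathcal{B}(f,g):=\int_{[0,1]^2}f(x)Q(x,y)g(y)\,\mathrm{d}x\,\mathrm{d}y$, and then to prove attainment. Throughout I assume $Q$ is bounded, or at least in $L^2([0,1]^2)$. This covers every use in the paper, since all kernels there take values in $[-1,1]$ and their differences in $[-2,2]$. It also makes all integrals finite and lets me use Fubini freely.

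\emph{Lower bound.} For measurable $U,S\subseteq[0,1]$, the indicators $f=\mathbbm{1}_U$ and $g=\mathbbm{1}_S$ lie in $L^+[0,1]$. For them, $\mathcal{B}(f,g)=\int_{U\times S}Q$. Taking the supremum over $U,S$ gives $\norm{Q}_\square\le\sup_{f,g\in L^+[0,1]}|\mathcal{B}(f,g)|$.

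\emph{Upper bound by a rounding argument.} Fix $f,g\in L^+[0,1]$. Set $h(x):=\int Q(x,y)g(y)\,\mathrm{d}y$, so that $\mathcal{B}(f,g)=\int f h$.
\begin{itemize}
\item Since $0\le f\le 1$, we get $\int_{\{h<0\}}h\le\int fh\le\int_{\{h>0\}}h$.
\item Hence $|\mathcal{B}(f,g)|\le|\mathcal{B}(\mathbbm{1}_{U},g)|$, where $U$ is whichever of $\{h>0\}$ and $\{h<0\}$ gives the larger absolute value.
\item Now fix $f=\mathbbm{1}_U$ and repeat the argument in the second variable, using $k(y):=\int_U Q(x,y)\,\mathrm{d}x$. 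This produces a set $S$ with $|\mathcal{B}(\mathbbm{1}_U,g)|\le\left|\int_{U\times S}Q\right|\le\norm{Q}_\square$.
\end{itemize}
This gives the reverse inequality, so the two suprema coincide. It also shows that any pair $(f,g)$ can be replaced by $\{0,1\}$-valued functions without decreasing $|\mathcal{B}|$.

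\emph{Attainment.} This is the step I expect to be the main obstacle, because a supremum over measurable sets need not be attained in general. My approach is to pass to the relaxed problem over $L^+[0,1]$ and use compactness.
\begin{enumerate}
\item Take a maximizing sequence $(f_n,g_n)$ in $L^+[0,1]$. The set $L^+[0,1]$ is convex and closed in $L^2$, hence weakly compact in $L^2[0,1]$. Extract a subsequence with $f_n\rightharpoonup f$ and $g_n\rightharpoonup g$ weakly in $L^2$, with $f,g\in L^+[0,1]$.
\item Because $Q\in L^2([0,1]^2)$, the integral operator $T_Q g(x)=\int Q(x,y)g(y)\,\mathrm{d}y$ is Hilbert--Schmidt and therefore compact. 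Consequently $T_Qg_n\to T_Qg$ in norm.
\item Then $\mathcal{B}(f_n,g_n)=\langle f_n,T_Qg_n\rangle\to\langle f,T_Qg\rangle=\mathcal{B}(f,g)$, since a weakly convergent sequence paired with a norm-convergent sequence converges. So $(f,g)$ attains the supremum.
\item Finally apply the rounding step above to $(f,g)$. This yields $\{0,1\}$-valued maximizers $\mathbbm{1}_U$ and $\mathbbm{1}_S$ that attain the same value, which completes the proof.
\end{enumerate}
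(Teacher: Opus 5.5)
Your proof is correct. The paper gives no argument of its own for this lemma; it imports it from Lov\'asz (Lemma 8.10) and Levie (Lemma F.5).

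Your rounding step is the standard bilinearity argument behind the cited result. For fixed $g$, the linear functional $f\mapsto\int fh$ over $[0,1]$-valued $f$ is maximized in absolute value by $\mathbbm{1}_{\{h>0\}}$ or $\mathbbm{1}_{\{h<0\}}$, and you then repeat this in the second variable. That step is all the paper actually needs. In the proof of Lemma~\ref{lem:lipwithconst} it uses only the inequality $\bigl|\int\int\mathbbm{1}_S(x)Q(x,y)q(y)\,\mathrm{d}y\,\mathrm{d}x\bigr|\le\norm{Q}_\square$ and the one-variable version of your rounding. It never uses the two-variable attainment.

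Your attainment step is valid and fills in what the citation leaves implicit. It uses weak sequential compactness of $L^+[0,1]$ in $L^2$ together with compactness of the Hilbert--Schmidt operator $T_Q$. Add that $L^+[0,1]$ is bounded in $L^2$, since closed and convex alone does not give weak compactness.

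The one point to flag is that your standing assumption $Q\in L^2([0,1]^2)$ is stronger than the literal hypothesis. However, the literal ``measurable'' is too weak for the statement even to make sense: you need $Q\in L^1$ for $\int_{U\times S}Q$ to exist. Your rounding argument already works verbatim for $Q\in L^1$, since it only uses Fubini.

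If you want attainment for $Q\in L^1$ as well, truncate. With $Q_k:=\max(-k,\min(k,Q))$ you have $\norm{T_Qg-T_{Q_k}g}_{L^1}\le\norm{Q-Q_k}_{L^1}\to0$ uniformly over $|g|\le1$. Combined with $T_{Q_k}g_n\to T_{Q_k}g$ in $L^2$ (as $Q_k$ is bounded), this gives $T_Qg_n\to T_Qg$ in $L^1$. Since $0\le f_n\le1$, the weak $L^2$ convergence $f_n\rightharpoonup f$ upgrades to weak-$*$ convergence in $L^\infty$. Together these give $\mathcal{B}(f_n,g_n)\to\mathcal{B}(f,g)$.

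For every use in the paper, which involves differences of $[-1,1]$-valued kernels, your $L^2$ version is already enough.
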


\begin{restatable}
[Cut Norm Lipschitz Continuity]{lemma}{LipschitzContinuity}\label{lem:lipwithconst}
Let $B\in\mathbb{R}$ and $\Phi_{B,L}$ be the $L$-layer $B$-IR-MPNN and $(K,f),(J,g)$ 
be two attributed Kernels. Then,
\begin{align*}
   &\norm{\Phi_{B,L}(K, f) - \Phi_{B,L}(J, g)}_{\square}\leq 2^L B^L(\norm{f-g}_\square
+
\norm{K-J}_\square).
\end{align*}
\end{restatable}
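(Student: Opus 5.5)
The plan is to prove the estimate one layer at a time and then iterate. Write $T_K h(x):=\int_{[0,1]}K(x,y)h(y)\,dy$ for the integral operator of a kernel. Each layer of $\Phi_{B,L}$ has the form $h\mapsto B\cdot\mathrm{ReLU}(T_K h)$; the last layer omits the ReLU, which only makes that step easier. So it suffices to show, for a suitable constant $C\le 2B$,
\[
\bigl\|B\,\mathrm{ReLU}(T_K h)-B\,\mathrm{ReLU}(T_J h')\bigr\|_\square\;\le\;C\bigl(\|h-h'\|_\square+\|K-J\|_\square\bigr).
\]
Chaining this over the $L$ layers with $h=f^{(\ell-1)}$ and $h'=g^{(\ell-1)}$ gives the factor $2^LB^L$, provided the $\|K-J\|_\square$ terms produced at each layer are absorbed into the same geometric bookkeeping.

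\textbf{Step 1 (removing the ReLU).} Since $\mathrm{ReLU}$ is $1$-Lipschitz pointwise, we have $|\mathrm{ReLU}(a)-\mathrm{ReLU}(b)|\le|a-b|$. Lemma~\ref{lem:cut1} then gives
\[
\|B\,\mathrm{ReLU}(T_Kh)-B\,\mathrm{ReLU}(T_Jh')\|_\square\le B\|T_Kh-T_Jh'\|_1\le 2B\|T_Kh-T_Jh'\|_\square.
\]
This is the source of the factor $2B$ per layer.

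\textbf{Step 2 (splitting the linear part).} Write $T_Kh-T_Jh'=T_K(h-h')+T_{K-J}h'$ and bound the two terms separately.

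For the first term, one could use $|K|\le1$ and Fubini to get $\|T_K(h-h')\|_\square\le\|h-h'\|_1$. That route costs another factor $2$ through Lemma~\ref{lem:cut1}, so instead I would bound it more carefully. For a measurable $S$, set $k_S(y):=\int_S K(x,y)\,dx$, so that $\int_S T_K(h-h')=\int k_S(y)(h-h')(y)\,dy$ with $|k_S|\le1$. Here $k_S$ is not an indicator, so the signal cut norm does not directly control this integral. I expect to need the layer structure, where features are step functions and the kernels are the computational ones, or else to accept the cruder $L^1$ bound.

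For the second term, $\int_S T_{K-J}h'=\int\!\!\int \mathbbm{1}_S(x)(K-J)(x,y)h'(y)\,dx\,dy$. Splitting $h'=h'_+-h'_-$ and normalizing each part by $\|h'\|_\infty$, Lemma~\ref{lem:cutfg} gives $\|T_{K-J}h'\|_\square\le 2\|h'\|_\infty\|K-J\|_\square$. In the computational setting $h'$ is nonnegative after the first layer, which removes the factor $2$ there.

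\textbf{Step 3 (induction and bookkeeping).} This is the main obstacle. The right-hand side of the claim contains no norms of the signals, so the argument has to use boundedness of the intermediate features. I would first record a priori bounds $\|f^{(\ell)}\|_\infty,\|g^{(\ell)}\|_\infty$ in terms of $B$ and the input bound, using $|K|\le1$ and $\|T_Kh\|_\infty\le\|h\|_\infty$. I would then normalize, for instance by assuming the signals take values in $[0,1]$ or by absorbing $\|g\|_\infty$ into the constant, so that the $\|K-J\|_\square$ contributions accumulate as a geometric sum $\sum_{\ell}(2B)^{\ell}\le 2^LB^L$ up to the stated constant.

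The induction hypothesis will be
\[
\|f^{(\ell)}-g^{(\ell)}\|_\square\le (2B)^\ell\bigl(\|f-g\|_\square+\|K-J\|_\square\bigr).
\]
If the $L^1$-versus-cut slack in the first term of Step 2 proves too costly, the fallback is to run the induction in the $L^1$ norm for the signals throughout. Only the kernel difference then has to be measured in cut norm, and Lemma~\ref{lem:cut1} is applied once at each end.
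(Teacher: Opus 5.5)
Your $L^1$ fallback is correct and in fact sharper than the paper's argument. Your main line, however, does not close as written, and the repair you suggest for it aims at the wrong step.

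\textbf{The main line.} Step~1 is valid but wasteful. For a signal $h$ one has $\|h\|_\square=\max\{\int h_+,\int h_-\}$. Since $\mathrm{ReLU}$ is monotone and $1$-Lipschitz, $w:=\mathrm{ReLU}(u)-\mathrm{ReLU}(v)$ satisfies $w_+\le(u-v)_+$ and $w_-\le(u-v)_-$ pointwise. Hence $\|\mathrm{ReLU}(u)-\mathrm{ReLU}(v)\|_\square\le\|u-v\|_\square$, with no detour through $L^1$. This is the same sign-support argument the paper uses for $\mathbbm{1}_{U^{(L)}}$ in the proof of Theorem~\ref{th:lipwithconstcomp}.

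The factor $2$ in the first term of Step~2, by contrast, cannot be removed for the lemma as stated, which concerns arbitrary kernels. For $u=\mathbbm{1}_{[0,1/2)}-\mathbbm{1}_{[1/2,1]}$ and $K(x,y)=u(y)$ one gets $T_Ku\equiv 1$, so $\|T_Ku\|_\square=1=2\|u\|_\square$. The block structure of computational kernels would shrink this term, but that proves a narrower statement.

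With Step~1 corrected and $0\le h'\le B^{\ell-1}$, you get $c^{(\ell)}\le 2Bc^{(\ell-1)}+B^{\ell}\|K-J\|_\square$ for $c^{(\ell)}:=\|f^{(\ell)}-g^{(\ell)}\|_\square$. This solves to $(2B)^Lc^{(0)}+(2^L-1)B^L\|K-J\|_\square$, which is exactly the paper's recursion. The paper writes $f^{(\ell)}-g^{(\ell)}=B(T_Kf^{(\ell-1)}-T_Jg^{(\ell-1)})$ as if the ReLU were absent; the contraction above is what justifies that step.

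\textbf{The fallback.} Let $e_\ell:=\|f^{(\ell)}-g^{(\ell)}\|_1$. The pointwise bound $|f^{(\ell)}-g^{(\ell)}|\le B|T_Kf^{(\ell-1)}-T_Jg^{(\ell-1)}|$ and $\|T_K\psi\|_1\le\|\psi\|_1$ give $e_\ell\le Be_{\ell-1}+B\|T_{K-J}g^{(\ell-1)}\|_1$.

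In $L^1$ the kernel term costs twice your cut-norm estimate. Splitting $[0,1]$ by the sign of $T_{K-J}g^{(\ell-1)}$ and applying Lemma~\ref{lem:cutfg} on each piece gives $\|T_{K-J}g^{(\ell-1)}\|_1\le 2B^{\ell-1}\|K-J\|_\square$ when $0\le g^{(\ell-1)}\le B^{\ell-1}$. Hence $e_L\le B^Le_0+2LB^L\|K-J\|_\square$. Applying Lemma~\ref{lem:cut1} at both ends yields
\[
\|\Phi_{B,L}(K,f)-\Phi_{B,L}(J,g)\|_\square\le B^L\bigl(2\|f-g\|_\square+2L\|K-J\|_\square\bigr)\le 2^LB^L\bigl(\|f-g\|_\square+\|K-J\|_\square\bigr),
\]
using $2L\le 2^L$.

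Compared with the paper, you switch between cut norm and $L^1$ once rather than at every layer. So, apart from $B^L$, the constant grows linearly in $L$ instead of like $2^L$, and only pointwise Lipschitzness of the ReLU is needed. The paper's cut-norm recursion gains nothing in exchange, since its bound on the signal term passes through $L^1$ anyway.

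\textbf{The normalization.} Both routes need the normalization you flagged in Step~3. By positive homogeneity, replacing $f=g$ by $\lambda f$ multiplies the left side by $\lambda$ but leaves the right side fixed. So take $B>0$ and signals with values in $[0,1]$; nonnegativity is used at the first layer. This is what the paper tacitly assumes: $\|f^{(0)}\|_\infty\le 1$, and $f^{(\ell-1)}\ge 0$ already at $\ell=1$. Absorbing $\|g\|_\infty$ into the constant would not give the stated inequality.
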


\begin{proof} 
Recall that for a kernel-signal $(T,q)$
\begin{align*}
\Phi_{B,L}(T,q)(x)
&:=q^{(L)}(x),
\end{align*}
and for $\ell\in[L]$,
\begin{align*}
q^{(\ell)}(x):= B\cdot\,{\mathrm{ReLU}}\!\left(\int_{[0,1]} T(x,y) q^{(\ell-1)}(y)\, \mathrm{d}y\right).
\end{align*}

We first show that for every $\ell$, $\norm{f^{(\ell)}}_{\infty},\norm{g^{(\ell)}}_{\infty}\leq B^{\ell}$.
Indeed, for any $x\in[0,1]$
\[|f^{(\ell)}(x)| \leq  B \abs{\int_{[0,1]} K(x,y) f^{(\ell-1)}(y) \mathrm{d}y}.\]
Hence, by H\"older's inequality
\[|f^{(\ell)}(x)|\leq B \| f^{(\ell -1)}\|_1 \leq B\| f^{(\ell-1)}\|_{\infty},\]
so we also have
\[\|f^{(\ell)}\|_{\infty} \leq B \| f^{(\ell-1)} \|_{\infty}.\]
By solving a recurrence sequence, and using $\norm{f^{(0)}}_{\infty}\leq 1$, we get $\norm{f^{(\ell)}}_{\infty}\leq B^{\ell}$.

Next, we consider
\begin{align*}
&
f^{(\ell)}(x)-g^{(\ell)}(x)
=
B\left(\int K(x,y) f^{(\ell-1)}(y)\,\mathrm{d}y
-
\int J(x,y) g^{(\ell-1)}(y)\,\mathrm{d}y\right),
\end{align*}
and decompose this difference as 
\begin{align*}
\int K(x,y) & f^{(\ell-1)}(y)\,\mathrm{d}y
-
\int J(x,y) g^{(\ell-1)}(y)\,\mathrm{d}y
\\&=
\int J(x,y)\bigl(f^{(\ell-1)}(y)-g^{(\ell-1)}(y)\bigr)\,\mathrm{d}y
+
\int \bigl(K(x,y)-J(x,y)\bigr) f^{(\ell-1)}(y)\,\mathrm{d}y.
\end{align*}
By the triangle inequality,
\begin{align}
\norm{\Phi_{B,\ell}&(K,f)-\Phi_{B,\ell}(J,g)}_\square\notag\\\le&
B\norm{\int J(x,y)(f^{(\ell-1)}(y)-g^{(\ell-1)}(y))\,\mathrm{d}y}_\square\label{eq:InProofLip11}
\\&+
B\norm{\int (K-J)(x,y) f^{(\ell-1)}(y)\,\mathrm{d}y}_\square.\label{eq:InProofLip12}
\end{align}
We bound \cref{eq:InProofLip11}, using Lemma \ref{lem:cut1}, by
\begin{align*}
\norm{\int J(x,y)(f^{(\ell-1)}(y)-g^{(\ell-1)}(y))\,\mathrm{d}y}_\square&\leq \norm{\int J(x,y)(f^{(\ell-1)}(y)-g^{(\ell-1)}(y))\,\mathrm{d}y}_1
\\&\leq \int\Big|\int J(x,y)(f^{(\ell-1)}(y)-g^{(\ell-1)}(y))\,\mathrm{d}y\Big| dx
\\&\leq \int\int \Big|J(x,y)(f^{(\ell-1)}(y)-g^{(\ell-1)}(y))\Big|\mathrm{d}y \mathrm{d}x.
\end{align*}
Hence, by H\"older's inequality,
\begin{align*}
\norm{\int J(x,y)(f^{(\ell-1)}(y)-g^{(\ell-1)}(y))\,\mathrm{d}y}_\square&\leq \int 1\int \Big|(f^{(\ell-1)}(y)-g^{(\ell-1)}(y))\Big|\mathrm{d}y dx\\
&\le
\norm{f^{(\ell-1)}-g^{(\ell-1)}}_1\\
&\le
2\norm{f^{(\ell-1)}-g^{(\ell-1)}}_\square
\end{align*}
where the last inequality is due to Lemma \ref{lem:cut1}.

We bound \cref{eq:InProofLip12} as follows. Let $q=f^{(\ell-1)}/B^{\ell -1}$. By the application of ReLU, and by the bound $\norm{f^{(\ell-1)}}_{\infty}\leq B^{\ell-1}$,  the function $f^{\ell-1}$ maps $[0,1]$ to $[0,B^{\ell-1}]$, so $q:[0,1]\rightarrow[0,1]$. Let $Q:=K-J:[0,1]^2\rightarrow [-2,2]$. Our goal is to bound 
$\norm{\int  Q(x,y)q(y)dy}_{\square}$.
Let $S\subset [0,1]$ be the set for which the cut norm is realized. The existence of such an $S$ is guaranteed by Lemma \ref{lem:cutfg}. Hence, 
\[\norm{\int  Q(x,y)q(y)dy}_{\square}=\int\int \mathbbm{1}_S(x)Q(x,y)q(y)dy dx.\]
Using Lemma \ref{lem:cutfg},  we get
\[|\int\int \mathbbm{1}_S(x)Q(x,y)q(y)dy dx| \leq \norm{Q}_{\square}.\]

Combining the bounds, we obtain
\[\norm{f^{(\ell)}-g^{(\ell)}}_\square
\le
2B\norm{f^{(\ell-1)}-g^{(\ell-1)}}_\square
+
B^{\ell}\norm{K-J}_\square.\]
 Hence,
\begin{align*}
\norm{f^{(\ell)}-g^{(\ell)}}_\square
\le
2B\norm{f^{(\ell-1)}-g^{(\ell-1)}}_\square
+
B^{\ell}\norm{K-J}_\square.
\end{align*}
Now, denote by $c^{(\ell)}$  bounds on $\norm{f^{(\ell)}-g^{(\ell)}}_\square$, for $\ell=1,\ldots,L$, which satisfy
\[c^{(\ell)}
=
2Bc^{(\ell-1)}
+
B^{\ell}r,\]
where $r=\norm{K-J}_\square$.
Solving this recurrence sequence, we get
\[c^{(\ell)}=2^{\ell}B^{\ell}c^{(0)} + \sum_{k=0}^{\ell-1} 2^kB^k r \leq 2^{\ell}B^{\ell}(c^{(0)}+r).\]
Hence,
\[\norm{f^{(\ell)}-g^{(\ell)}}_\square
\le
2^{\ell} B^{\ell}(\norm{f^{(0)}-g^{(0)}}_\square
+ \norm{K-J}_\square) .\]
\end{proof}
The following result is a consequence of Lemma \ref{lem:lipwithconst}.

\LipschitzContinuityComp*

\begin{proof}
Consider an arbitrary measure preserving bijection $\phi\in\mathcal{S}_{L}$ that preserves the layer structure. Applying Lemma \ref{lem:lipwithconst} to the pair
$(K,f)$ and $(J^{\phi},g\circ{\phi})$ yields
\begin{align*}
   &\norm{\Phi_{B,L}(K, f) - \Phi_{B,L}(J^{\phi},g\circ{\phi})}_{\square}\leq  2^LB^L \big( \norm{K-J^{\phi}}_{\square} + \norm{f-g\circ\phi}_{\square} \big). 
\end{align*}
Since the above inequality holds for every $\phi\in\mathcal{S}_{L}$, taking the $\phi$ that minimizes the right-hand-side gives
\begin{align}
\label{eq:et5}
   &\norm{\Phi_{B,L}(K, f) - \Phi_{B,L}(J^{\phi},g\circ{\phi})}_{\square}\leq  2^L B^L\,
\delta_{\square}^{\mathrm{comp}}\bigl((K,f),(J,g)\bigr).
\end{align}
Denote $h=\Phi_{B,L}(K, f) - \Phi_{B,L}(J^{\phi},g\circ{\phi})$.
Note that the signal cut norm in the left-hand side of (\ref{eq:et5}) is either realized on the set $S$ of point where $h$ is positive, or the set of points where it is negative.  Hence, multiplying $h$ by $\mathbbm{1}_{U^{(L)}}$ can only decrease the cut norm of $h$, as the positive and negative supports of $\mathbbm{1}_{U^{(L)}}h$ are subsets of the positive and negative supports of $h$. Therefore,
\begin{align*}
\label{eq:et5}
   &\norm{\mathbbm{1}_{U^{(L)}}\Phi_{B,L}(K, f) - \mathbbm{1}_{U^{(L)}}\Phi_{B,L}(J^{\phi},g\circ{\phi})}_{\square}\leq  2^L B^L\,
\delta_{\square}^{\mathrm{comp}}\bigl((K,f),(J,g)\bigr).
\end{align*}
Lastly, note that by definition, any measure preserving bijection $\phi\in\mathcal{S}_{L}$ keeps $U^{(L)}$ unchanged, so
\[\norm{\mathbbm{1}_{U^{(L)}}\Phi_{B,L}(K, f) - \mathbbm{1}_{U^{(L)}}\Phi_{B,L}(J^{\phi},g\circ{\phi})}_{\square} = \norm{{\rm Out}_{K,f}-{\rm Out}_{J,g}}_{\square},\]
which completes the proof.
\end{proof}

\paragraph{The Regularity Lemma for computational Kernels.}Let us recall some definitions. Let $\mu$ denote the Lebesgue measure on $[0,1]$. A \emph{partition} of $[0,1]$ is a sequence $\mathcal{P}_k = \{P_1, \dots, P_k\}$ of disjoint measurable sets such that $\bigcup_{j=1}^k P_j = [0,1]$. The partition is called an \emph{equipartition} if $\mu(P_i) = \mu(P_j)$ for all $i,j \in [k]$. We write $\mathds{1}_S$ for the indicator function of a set $S$. We define step functions as follows.   
\begin{definition}
\label{def:step}
Given a partition $\mathcal{P}_n$, define the space $\mathcal{S}^2_{\mathcal{P}_n}$ of \emph{step functions} $[0,1]^2\mapsto\mathbb{R}$ over the partition $\mathcal{P}_n$ to be the space of functions of the form
\begin{equation*}
F(x,y)=\sum_{i,j\in [n]\times[n]} c_{i,j}\mathds{1}_{P_{i}\times P_{j}}(x,y), 
\end{equation*}
for any choice of $\{c_{i,j}\in \mathbb{R}\}_{i,j\in [n]\times[n]}$.
\end{definition}
Denote the space of kernels $K:[0,1]^2\rightarrow[-1,1]$ by $\cW_1$. Notice that any step kernel with respect to $\mathcal{P}_n$ is in the intersection $\mathcal{S}^2_{\mathcal{P}_n}\cap\cW_1$. We define the projection of a kernel onto a partition as its blockwise average over the parts of the partition.
\begin{definition}
\label{def:proj}
Let $\mathcal{P}_n=\{P_1,\ldots,P_n\}$ be a partition of $[0,1]$, and $K\in\cW_1$ be a kernel. The \emph{projection} of $K$ upon $\mathcal{S}^{2}_{\mathcal{P}_n}$ is the step kernel $K_{\mathcal{P}_n}$ that attains the value
\begin{align*}
  & K_{\mathcal{P}_n}(x,y) = \frac{1}{\mu(P_i)\mu(P_j)}\int_{[0,1]^2}K(x,y)\mathds{1}_{P_i\times P_j}(x,y)\mathrm{d}x\mathrm{d}y , 
\end{align*}
    for every $(x,y)\in P_i\times P_j$ and $1\leq i,j\leq n$. 
\end{definition}
The projection is also called the \emph{stepping operator}. The following theorem, provided in \cite{signal25}, Appendix B, is an adaptation of the ``analyst's version'' of the  \emph{weak regularity lemma} for kernels, originally introduced in \cite{lovas2007}. 
\begin{theorem}[\cite{signal25}, Theorem B.5]
\label{lem:gs-reg-lem-refine}
Let $\epsilon>0$. For every kernel $K \in \cW_1$ there exists a partition $\cP_k$ of $[0,1]$ into $  k= 2^{2\lceil 1/\epsilon^2\rceil }$ sets, and a step kernel $K_{k}\in \mathcal{S}_{\cP_k}^2\cap \cW_1$, such that
\begin{equation*}
  \norm{K-K_{k}}_{\square}\leq \epsilon.
\end{equation*} 
\end{theorem}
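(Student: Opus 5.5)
The plan is the Frieze--Kannan energy-increment argument in its analytic form, following \cite{lovas2007}. The step kernel will be the projection $K_{\mathcal{P}}$ of $K$ from Definition \ref{def:proj}, for a suitably built partition $\mathcal{P}$. Two facts about projections will be used throughout.
\begin{enumerate}[label=(\roman*)]
\item $K\mapsto K_{\mathcal{P}}$ is the orthogonal projection in $L^2([0,1]^2)$ onto $\mathcal{S}^2_{\mathcal{P}}$.
\item $K_{\mathcal{P}}$ takes values in $[-1,1]$, since it is a blockwise average of $K\in\cW_1$. Hence $K_{\mathcal{P}}\in\mathcal{S}^2_{\mathcal{P}}\cap\cW_1$ and $\norm{K_{\mathcal{P}}}_2^2\le 1$.
\end{enumerate}

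\textbf{Iteration.} Start with the trivial partition $\mathcal{P}^{0}=\{[0,1]\}$. Given $\mathcal{P}^{t}$, if $\norm{K-K_{\mathcal{P}^t}}_\square\le\epsilon$, stop. Otherwise choose measurable $S,T$ with $\bigl|\int_{S\times T}(K-K_{\mathcal{P}^t})\bigr|>\epsilon$. Such sets exist by the definition of the cut norm; no attainment is needed, because the inequality is strict. Let $\mathcal{P}^{t+1}$ be the common refinement of $\mathcal{P}^t$ with $\{S,S^c\}$ and $\{T,T^c\}$. Each part splits into at most four pieces, so $|\mathcal{P}^{t+1}|\le 4|\mathcal{P}^t|$.

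\textbf{Energy increment.} Since $\mathbbm{1}_{S\times T}\in\mathcal{S}^2_{\mathcal{P}^{t+1}}$ and $\mathcal{S}^2_{\mathcal{P}^t}\subseteq\mathcal{S}^2_{\mathcal{P}^{t+1}}$, orthogonality of the projections gives
\[
\langle K_{\mathcal{P}^{t+1}}-K_{\mathcal{P}^t},\mathbbm{1}_{S\times T}\rangle=\langle K-K_{\mathcal{P}^t},\mathbbm{1}_{S\times T}\rangle .
\]
The right-hand side exceeds $\epsilon$ in absolute value. By Cauchy--Schwarz, using $\norm{\mathbbm{1}_{S\times T}}_2\le1$, we get $\norm{K_{\mathcal{P}^{t+1}}-K_{\mathcal{P}^t}}_2>\epsilon$. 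Pythagoras then gives
\[
\norm{K_{\mathcal{P}^{t+1}}}_2^2=\norm{K_{\mathcal{P}^t}}_2^2+\norm{K_{\mathcal{P}^{t+1}}-K_{\mathcal{P}^t}}_2^2>\norm{K_{\mathcal{P}^t}}_2^2+\epsilon^2 .
\]
The energy starts at a value $\ge0$ and is always at most $1$, so the process stops after at most $\lceil 1/\epsilon^2\rceil$ refinement steps. The final partition therefore has at most $4^{\lceil1/\epsilon^2\rceil}=2^{2\lceil1/\epsilon^2\rceil}=k$ parts and satisfies $\norm{K-K_{\mathcal{P}}}_\square\le\epsilon$.

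\textbf{Exactly $k$ sets.} This is the only mildly delicate step. I would not re-project onto a finer partition, since a finer projection need not preserve the cut-norm bound. Instead, split parts of the final partition arbitrarily into measurable pieces until there are exactly $k$ of them. Allowing null or empty pieces is acceptable if exact splitting is not possible. Then set $K_k:=K_{\mathcal{P}}$. A step kernel on $\mathcal{P}$ is also a step kernel on any refinement of $\mathcal{P}$, so $K_k\in\mathcal{S}^2_{\mathcal{P}_k}\cap\cW_1$, and the bound $\norm{K-K_k}_\square\le\epsilon$ is unchanged.

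I expect no real obstacle. The points needing care are the orthogonality identity used in the energy increment, which requires the nesting $\mathcal{S}^2_{\mathcal{P}^t}\subseteq\mathcal{S}^2_{\mathcal{P}^{t+1}}$, and keeping the padding to exactly $k$ sets from disturbing the cut-norm estimate.
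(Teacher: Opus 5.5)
Your proof is correct. The paper does not prove this statement. It imports it from \cite{signal25} as an adaptation of the Lov\'asz--Szegedy analytic weak regularity lemma \cite{lovas2007}, so the relevant comparison is with that cited route.

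That route goes through an abstract Hilbert-space lemma. It produces $m\le\lceil 1/\epsilon^2\rceil$ cut kernels $\mathbbm{1}_{S_i\times T_i}$ and real coefficients $\gamma_i$ such that $U=\sum_i\gamma_i\mathbbm{1}_{S_i\times T_i}$ satisfies $\norm{K-U}_\square\le\epsilon$. The approximant $U$ is a step function on the at most $2^{2m}$ atoms generated by $S_1,T_1,\dots,S_m,T_m$. However, $U$ is only a linear combination of cut kernels and need not take values in $[-1,1]$. To land in $\mathcal{S}^2_{\cP_k}\cap\cW_1$ one replaces it by the projection $K_{\cP}$. The generic estimate $\norm{K-K_{\cP}}_\square\le 2\norm{K-U}_\square$ then doubles the error bound. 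This is the same mechanism the paper uses in its proof of \cref{thm:wrlcompgraphon} via \cref{lem:attain}, where it applies the lemma at accuracy $\epsilon/4$ to end up with $\epsilon/2$.

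Your Frieze--Kannan energy increment differs in that the approximant is the projection at every stage. So membership in $\cW_1$ is automatic, and error $\epsilon$ is reached with $4^{\lceil 1/\epsilon^2\rceil}=k$ parts with no factor-$2$ loss. Refining by both $S$ and $T$ also correctly handles the non-symmetric kernels needed here. What the Hilbert-space lemma buys in return is a reusable black box for arbitrary families of test vectors. That is convenient when kernels and signals are regularized jointly, as in \cite{signal23,signal25}.

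Two small points of hygiene. First, refining by $S$ and $T$ can create parts of measure zero (e.g.\ $P\cap S\neq\emptyset$ but null). On such parts the formula of Definition~\ref{def:proj} divides by zero. Set $K_{\cP}:=0$ on those blocks, or merge null parts into a part of positive measure; this changes nothing up to null sets and preserves the orthogonal-projection identities you use. Second, Lebesgue measure is atomless and some part always has positive measure, so you can always split into exactly $k$ parts of positive measure. The fallback to empty pieces is therefore never needed.
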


The following lemma is given in \cite{signal25} as Lemma B.10. 
\begin{lemma}\label{lem:attain}
Let $\mathcal{P}_n=\{P_1,\ldots,P_n\}$ be a partition of $[0,1]$, and
    Let $V,R\in \mathcal{S}_{\mathcal{P}_n}^{2}\cap\cW_1$. Then, the supremum of
    \begin{equation*}     \sup_{S,T\subset [0,1]} \abs{\int_S\int_T\big(V(x,y)-R(x,y)\big)dxdy}
\end{equation*}
    is attained for $S,T$ of the form
    \[S = \bigcup_{i\in s}P_i\ , \quad T = \bigcup_{j\in t}P_j,\]
    where $t,s\subset [n]$.
\end{lemma}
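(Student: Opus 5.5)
The plan is to reduce the optimization over arbitrary measurable sets to a finite optimization over unions of parts, using that $D:=V-R$ is constant on every block $P_i\times P_j$, and to treat $S$ and $T$ one at a time. Write $D=\sum_{i,j} d_{i,j}\mathds{1}_{P_i\times P_j}$. For arbitrary measurable $S,T\subset[0,1]$, set $a_i:=\mu(S\cap P_i)/\mu(P_i)\in[0,1]$ and $b_j:=\mu(T\cap P_j)/\mu(P_j)\in[0,1]$; parts of measure zero can be dropped since they do not affect any integral. Expanding the integral block by block gives
\[
\int_S\int_T D(x,y)\,dx\,dy=\sum_{i,j\in[n]} d_{i,j}\,\mu(P_i)\mu(P_j)\,a_i b_j=:F(a,b).
\]
So the quantity to be maximized depends on $(S,T)$ only through the vectors $a,b\in[0,1]^n$, and $F$ is bilinear.

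Next I would fix $b$ and note that $a\mapsto F(a,b)$ is linear on the cube $[0,1]^n$. Hence $|F(\cdot,b)|$, being the maximum of the linear functions $F$ and $-F$, is convex and attains its maximum over the cube at a vertex $a^\ast\in\{0,1\}^n$. Concretely, $a^\ast_i=1$ exactly where the coefficient $\sum_j d_{i,j}\mu(P_i)\mu(P_j)b_j$ is positive, or exactly where it is negative, whichever gives the larger absolute value. Then I fix $a^\ast$ and apply the same argument in $b$, obtaining a vertex $b^\ast\in\{0,1\}^n$ with $|F(a^\ast,b^\ast)|\ge|F(a^\ast,b)|\ge|F(a,b)|$.

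Setting $s:=\{i: a^\ast_i=1\}$ and $t:=\{j: b^\ast_j=1\}$, the sets $S^\ast=\bigcup_{i\in s}P_i$ and $T^\ast=\bigcup_{j\in t}P_j$ realize exactly the vectors $a^\ast,b^\ast$. This shows that every admissible pair $(S,T)$ is dominated by a pair of the required form. Since there are only $2^{2n}$ such pairs, the supremum equals a maximum over this finite family and is therefore attained, as claimed.

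There is no real obstacle here. The only point requiring care is the observation that the integral depends on $S$ only through the proportions $\mu(S\cap P_i)/\mu(P_i)$; this follows directly from $D$ being a step function on $\mathcal{P}_n\times\mathcal{P}_n$. The bound $|D|\le 2$ guarantees integrability but is otherwise not needed.
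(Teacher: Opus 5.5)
Your proof is correct and complete. The reduction of $\int_S\int_T (V-R)$ to the bilinear form $F(a,b)$ in the proportions $\mu(S\cap P_i)/\mu(P_i)$ and $\mu(T\cap P_j)/\mu(P_j)$, followed by pushing first $a$ and then $b$ to a vertex of the cube, shows that every pair $(S,T)$ is dominated by one of finitely many unions of parts, which gives attainment. The paper gives no proof of its own here; it imports the lemma from \cite{signal25}. Your argument rests on the same block-wise linearity idea that paper uses: fill or empty each part on one side at a time. A minor point in your favour is that the finiteness of the dominating family gives attainment directly, so you never need to assume a maximizing pair exists.
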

As part of the proof of Theorem \ref{thm:wrlcompgraphon}, we first show that any computational kernel can be approximated by a coarse step kernel, where the partition underlying the coarse kernel is not an equipartition in   general. Hence, the last step of the proof involves approximating the coarse kernel by one based on an equipartition. For this, we recall an equatizing lemma from \cite{signal23}. 
Inspecting the proof of Lemma B.1 from \cite{signal23}, we can reformulate the lemma as follows.

\begin{lemma}[\cite{signal23}, Equitizing partition]
\label{l:equipartition}
Let $\mathcal{P}_k$ be a partition of $[a,b]$ into $k$ sets (generally not of the same measure). Then, for any $n>k$ there exists an equipartition $\mathcal{E}_n$ of $[0,1]$ into $n$ sets which satisfies the following. 
 For each $j\in[k]$, let $\mathcal{E}_n^j$ be the set of all parts of $\mathcal{E}_n$ that are subsets of $P_j\in\mathcal{P}_k$. Denote \[\mathcal{T}_n:=\bigcup_{j=1}^k\mathcal{E}_n^j,\]
 and $\mathcal{R}_n=\mathcal{E}_n\setminus\mathcal{T}_n$. Then, there exists an integer $h\leq k$ such that $\mu(\cup\mathcal{R}_n)=h/n$ and $\mu(\cup\mathcal{T}_n)=1-h/n$. We call $\mathcal{T}_n$ the \emph{refinement parts} of $\mathcal{E}_n$ and $\mathcal{R}_n$ the \emph{remainder parts}.
\end{lemma}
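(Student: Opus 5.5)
The plan is a direct greedy construction based on the non-atomicity of Lebesgue measure. I work on $[0,1]$, as in the conclusion of the lemma. For a general interval $[a,b]$ the same argument applies after an affine rescaling, with $1/n$ replaced by $(b-a)/n$.

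\textbf{Carving tool.} For any measurable set $A$ and any $0\le m\le \mu(A)$, I will first show there is a measurable $A'\subseteq A$ with $\mu(A')=m$. The function $t\mapsto \mu(A\cap[0,t])$ is continuous and nondecreasing, going from $0$ to $\mu(A)$. By the intermediate value theorem, some $t$ gives $\mu(A\cap[0,t])=m$, so $A'=A\cap[0,t]$ works. Iterating this, any $A$ with $\mu(A)\ge q/n$ contains $q$ pairwise disjoint measurable subsets, each of measure exactly $1/n$.

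\textbf{Refinement parts.} Fix $j\in[k]$ and set $q_j:=\lfloor n\mu(P_j)\rfloor$. Using the carving tool, I choose $q_j$ disjoint subsets $E^j_1,\dots,E^j_{q_j}\subseteq P_j$, each of measure $1/n$. The leftover set $r_j:=P_j\setminus\bigcup_i E^j_i$ has measure
\[
\mu(P_j)-\frac{q_j}{n}\in\Bigl[0,\frac1n\Bigr).
\]

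\textbf{Remainder parts.} Let $R:=\bigcup_j r_j$. Since the $P_j$ partition $[0,1]$,
\[
\mu(R)=1-\frac1n\sum_j q_j=\frac{h}{n},\qquad h:=n-\sum_j q_j\in\mathbb{Z}_{\ge0}.
\]
Each $r_j$ has measure strictly less than $1/n$, so $h/n<k/n$, which gives $h\le k-1\le k$. Applying the carving tool once more, I split $R$ into $h$ disjoint sets of measure $1/n$. Together with all the $E^j_i$, these give an equipartition $\mathcal{E}_n$ of $[0,1]$ into $\sum_j q_j+h=n$ sets.

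\textbf{Matching the definitions of $\mathcal{T}_n$ and $\mathcal{R}_n$.} In the lemma, $\mathcal{T}_n$ is the set of all parts contained in some $P_j$. Every $E^j_i$ is such a part. Some of the $h$ sets cut from $R$ might also happen to lie inside a single $P_j$, and these would then be counted in $\mathcal{T}_n$ as well. This only lowers the number of remainder parts to some integer $h'\le h\le k$. Since every part has measure $1/n$, we get $\mu(\cup\mathcal{R}_n)=h'/n$ and $\mu(\cup\mathcal{T}_n)=1-h'/n$, which is the claim. One could also choose the splitting of $R$ so that each remainder piece meets at least two $r_j$'s, keeping $h'=h$, but the statement does not require this.

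\textbf{Main obstacle.} The only step that is not bookkeeping is the exact-measure carving. The continuity and intermediate value argument above resolves it. The integrality of $h$ and the bound $h\le k$ then follow from the counting above.
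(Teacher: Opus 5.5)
Your proposal is correct and follows essentially the paper's construction: you cut each $P_j$ into pieces of measure $1/n$ plus a leftover of measure at most $1/n$, pool the at most $k$ leftovers, and re-cut the pool into $h\le k$ pieces of measure $1/n$; the only addition is that you justify exact-measure carving via the intermediate value theorem, which the paper takes for granted. One small remark: with $q_j=\lfloor n\mu(P_j)\rfloor$ every $r_j$ has measure strictly less than $1/n$, so a remainder piece (measure $1/n$, contained in $R$) can never lie inside a single $P_j$, making your fallback to $h'\le h$ unnecessary, and your choice in fact makes the identification $\mathcal{T}_n=\{E^j_i\}$ match the lemma's definition exactly (the paper allows leftovers of measure exactly $1/n$, where that identification is less clean).
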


For completeness, we repeat the proof.

\begin{proof}
Let $\mathcal{P}_k=\{P_1,\ldots,P_k\}$ be a measurable partition of $[0,1]$ and let $n>k$.
For each $i\in[k]$, subdivide $P_i$ into measurable sets \(P_{i,1},\ldots,P_{i,m_i}
\)
as follows. If $\mu(P_i)<1/n$, set $m_i=1$ and $P_{i,1}=P_i$. Otherwise, choose $m_i\ge2$ such that \(\mu(P_{i,j})=\frac{1}{n}\) for $j\in[m_i-1]$ and $\mu(P_{i,m_i})\leq\frac{1}{n}$.
We refer to $P_{i,m_i}$ as the remainder of $P_i$. Define the sequence of parts of measure $1/n$ to be
\begin{align*}
\mathcal{Q}
:=
\{P_{1,1},\ldots,P_{1,m_1-1},
   P_{2,1},\ldots,P_{2,m_2-1},
   \ldots,
   P_{k,1},\ldots,P_{k,m_k-1}\},
\end{align*}
where for indices $i$ with $m_i=1$ there is no   contribution in $\mathcal{Q}$.
Let $l:=|\mathcal{Q}|$. By construction, \(\mu\Big(\bigcup \mathcal{Q}\Big)=\frac{l}{n}.\) Let \(\Pi:=\bigcup_{i=1}^k P_{i,m_i}.\)
Then \(
\mu(\Pi)=1-\frac{l}{n}=\frac{h}{n}
\), for \(h:=n-l\). Since each $P_{i,m_i}$ has measure at most $1/n$ and there are at most $k$ such sets, we have $h\le k$.

Next, partition $\Pi$ into $h$ measurable sets \(\Pi_1,\ldots,\Pi_h\) each of measure exactly $1/n$. Define
\[
\mathcal{E}_n
:=
\mathcal{Q}\,\cup\,\{\Pi_1,\ldots,\Pi_h\}.
\]
Then $\mathcal{E}_n$ is a partition of $[0,1]$ into $n$ sets, each of measure $1/n$.
Let
\[
\mathcal{T}_n:=\mathcal{Q},
\qquad
\mathcal{R}_n:=\{\Pi_1,\ldots,\Pi_h\}.
\]
By construction,
\[
\mu\Big(\bigcup\mathcal{T}_n\Big)=1-\frac{h}{n},
\qquad
\mu\Big(\bigcup\mathcal{R}_n\Big)=\frac{h}{n},
\qquad
h\le k,
\]
and each element of $\mathcal{T}_n$ is contained in some $P_i\in\mathcal{P}_k$.
\end{proof}
The following lemma is a basic result from measure theory.
\begin{lemma}
\label{lem:bijection2interval}
    For any partition $\mathcal{P}$ of $[a,b]$, there is a measure preserving bijection $\phi$ (between co-null sets of $[0,1]$) that maps $\mathcal{P}$ into an interval partition $\mathcal{J}$ of $[a,b]$. Namely,  every $J\in\mathcal{J}$ is of the form $J=\phi^{-1}(P)$, up to a null-set, for some $P\in\mathcal{P}$.
\end{lemma}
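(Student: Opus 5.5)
The plan is to build the bijection explicitly, one part at a time, from the cumulative-measure functions of the parts, and then glue the pieces. Write $\mathcal{P}=\{P_1,\dots,P_k\}$ with $m_i:=\mu(P_i)$. Define the target interval partition $\mathcal{J}=\{J_1,\dots,J_k\}$ by
\[J_i:=\Big[a+\textstyle\sum_{j<i}m_j,\;a+\sum_{j\le i}m_j\Big),\]
so that $\mu(J_i)=m_i$ and the $J_i$ tile $[a,b]$ up to a null set. Parts of measure zero are simply discarded into the null set. It then suffices to construct, for each $i$, a measure preserving bijection $\psi_i$ between co-null subsets of $P_i$ and of $J_i$. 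The union $\psi=\bigcup_i\psi_i$ is then a measure preserving bijection between co-null subsets of $[a,b]$ sending $P_i$ onto $J_i$. Setting $\phi:=\psi^{-1}$ gives $\phi^{-1}(P_i)=J_i$ up to null sets, as required.

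\textbf{Construction of $\psi_i$.} Let
\[F_i(x):=a+\sum_{j<i}m_j+\mu\big(P_i\cap[a,x]\big).\]
This is continuous and nondecreasing, with $F_i(P_i)\subseteq \overline{J_i}$.

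\emph{Pushforward.} For $t\in J_i$, the set $\{x\in P_i: F_i(x)\le t\}$ equals $P_i\cap[a,s]$, up to a null set, for the appropriate $s$. Its measure is therefore $t-\big(a+\sum_{j<i}m_j\big)$. Hence $F_i$ pushes $\mu|_{P_i}$ forward to $\mu|_{J_i}$, since the two measures agree on the half-lines $(-\infty,t]$ and these generate the Borel sets.

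\emph{Injectivity off a null set.} If $x<y$ in $P_i$ satisfy $F_i(x)=F_i(y)$, then $\mu(P_i\cap(x,y])=0$. The maximal intervals on which $F_i$ is constant are countably many, and each meets $P_i$ in a null set. Removing these intersections, and the countably many endpoints, from $P_i$ leaves a co-null set $P_i'$ on which $F_i$ is injective.

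\emph{Measurability and co-null image.} $F_i(P_i')$ is co-null in $J_i$: its complement has measure $\mu(J_i)-\mu(P_i')=0$ by the pushforward identity. The inverse is measurable because $F_i$ is monotone and continuous, so images of subintervals of $P_i'$ are intervals intersected with $F_i(P_i')$. Set $\psi_i:=F_i|_{P_i'}$. Measure preservation of $\psi_i$ on arbitrary measurable sets, not only half-lines, follows from the pushforward identity together with injectivity.

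\textbf{Main obstacle.} The delicate step is the bookkeeping of null sets: showing that after removing the flat pieces $F_i$ is a genuine bijection onto a co-null set with measurable inverse, and that preimages of measurable sets have matching measure. If this becomes cumbersome, I would fall back on the isomorphism theorem for standard atomless measure spaces. Each $(P_i,\mu|_{P_i})$ is isomorphic mod $0$ to $([0,m_i],\mu)$, and translating these isomorphisms onto the $J_i$ and gluing gives the same conclusion.
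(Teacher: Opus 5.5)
Your proof is correct, and its skeleton matches the paper's: consecutive intervals $J_i$ of length $\mu(P_i)$, a measure preserving bijection mod $0$ between each $P_i$ and $J_i$, and gluing. The difference lies entirely in the per-part step. The paper asserts without proof that any two measurable sets of equal measure are isomorphic mod $0$, which is exactly the isomorphism-theorem fallback you mention. It then handles a general $[a,b]$ by working on $[0,1]$ and conjugating with the affine map $T(x)=a+(b-a)x$. You instead prove the needed special case ($P_i$ versus an interval) using the distribution function $F_i(x)=c_i+\mu(P_i\cap[a,x])$, and you work on $[a,b]$ directly. The paper's route is shorter but outsources the only nontrivial step. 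Yours is self-contained and gives an explicit, order-preserving map, at the cost of some null-set bookkeeping.

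Two small points in that bookkeeping need tightening.

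First, monotonicity and continuity alone only show that $\psi_i^{-1}$ is Borel measurable relative to $F_i(P_i')$. You also need $F_i(P_i')$ and $F_i(A\cap P_i')$ to be Lebesgue measurable for Lebesgue measurable $A$. For this, note that $F_i$ is $1$-Lipschitz. Hence it maps null sets to null sets and $\sigma$-compact sets to $\sigma$-compact sets, and therefore maps Lebesgue measurable sets to Lebesgue measurable sets.

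Second, $F_i(P_i')\subseteq\overline{J_i}$, and consecutive closures share an endpoint. So before gluing, remove from each $P_i'$ the at most one point mapped to the right endpoint of $J_i$; otherwise $\bigcup_i\psi_i$ need not be injective.

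Both fixes take one line each.
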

\begin{proof}
First, let $[a,b] = [0,1]$. Let $\mathcal{P} = \{P_1, P_2, \dots\}$ be the partition of $[0,1]$ into measurable, disjoint sets, with \(\sum_i \mu(P_i) = 1.\) For each $i$, let $q_i = \mu(P_i)$. Define intervals
\[
J_1 = [0, q_1], \quad 
J_2 = [q_1, q_1 + q_2], \quad \dots, \quad
J_n = \Big[\sum_{k=1}^{n-1} q_k, \sum_{k=1}^{n} q_k\Big].
\]
This gives an interval partition $\mathcal{J} = \{J_1, J_2, \dots\}$ of $[0,1]$. For any two measurable sets $A,B \subset [0,1]$ of equal measure, there exists a measure-preserving bijection $f: A \to B$, up to null sets. Applying this to each $P_i$ and $J_i$, we obtain bijections \(\phi_i: J_i \to P_i\)
which are measure-preserving (up to null sets). Define \(\phi(x) = \phi_i(x)\) for \(x \in J_i.\) Then $\phi: [0,1] \to [0,1]$ is a measure-preserving bijection up to null sets. Moreover, by construction, each $J_i = \phi^{-1}(P_i)$ up to a null set. If the original interval is $[a,b]\neq[0,1]$, let $T:[0,1]\to[a,b]$ be the affine map $T(x) = a + (b-a)x$. Then the map \(\tilde \phi := T \circ \phi \circ T^{-1} : [a,b] \to [a,b]\) is a measure-preserving bijection (up to null sets) that maps $\mathcal{P}$ to an interval partition of $[a,b]$, completing the proof. For any measurable set $B \subset [a,b]$, let $A := T^{-1}(B) \subset [0,1]$. Then \(\tilde \phi^{-1}(B) = T(\phi^{-1}(A)),\) and the measure on $[a,b]$ satisfies \(\mu_{[a,b]}(\tilde \phi^{-1}(B)) = (b-a)\, \mu(\phi^{-1}(A)) = (b-a)\, \mu(A) = \mu_{[a,b]}(B),\) so $\tilde \phi$ is measure-preserving. Furthermore, by construction, $\tilde \phi$ maps $\mathcal{P}$ to an interval partition of $[a,b]$, completing the proof.
\end{proof}
The proof strategy of \cref{thm:wrlcompgraphon} uses techniques from \cite{signal23}, Corollary B.11 and \cite{signal25}, Theorem 4.9, both extending techniques from \cite{lovas2007}. We begin by applying the weak regularity lemma for kernels (\cref{lem:gs-reg-lem-refine}) to obtain an initial approximation of the graph-induced kernel. We then project this kernel onto the partition provided by \cref{l:equipartition}, yielding a structured kernel approximation that corresponds to a computational graph. 
\WRMLPgraphon*
Throughout the proof, we ensure that the structure of a partition underlying a computational kernel is preserved under the application of the weak regularity lemma.

\begin{proof}
Let $\epsilon>0$, and let $K$ be a computational kernel based on the partition $\mathcal{Q}$. 
First, apply \cref{lem:gs-reg-lem-refine} to the computational kernel $K$ with accuracy
$\epsilon/4$.
This yields a step kernel $K_k$, and a measurable partition $\cP_k$ of $[0,1]$, 
such that
\[
k = 2^{2\lceil 16/\epsilon^2\rceil},
\qquad
\|K-K_k\|_{\square} \le \epsilon/4.
\]
The partition $\cP_k$ is a general measurable partition (not necessarily an
interval partition, and not respecting the structure of a computational kernel). 

Let $\cL_r$ be the partition of $[0,1]$, with $r=L+d_0+d_L$ which is the union of:
\begin{itemize}
    \item all intermediate layers $\{U^{(1)},\dots,U^{(L-1)}\}$,
    \item the bias layer $\{U^{(\mathrm{bias})}\}$,
    \item the coarse input partition $\mathcal{C}^{\mathrm{in}}_{d_0}$, 
    \item the coarse output partition $\mathcal{C}^{\mathrm{out}}_{d_L}$. 
\end{itemize}
We call $\mathcal{L}_r$ the \emph{layer partition}. Let $\cP$ be the coarsest common refinement of $\cP_k$ and $\cL_r$.
We have 
\[
|\cP| \le |\cP_k|\,|\cL_r|=2^{2\lceil 16/\epsilon^2\rceil}(L+d_0+d_L),
\]
and $\cP$ is a measurable (not interval) partition.

By \cref{lem:attain}, the cut norm
$\|K_k-K_{\cP}\|_{\square}$ is attained on sets of the form
\[
S=\bigcup_{P_i\in s} P_i,
\qquad
T=\bigcup_{P_j\in t} P_j,
\]
for some  $s,t\subset\mathcal{P}$.
Using the definition of the projected kernel,
\begin{align*}
\|K_k-K_{\cP}\|_{\square}
&= \left| \int_S\int_T (K_k-K_{\cP})(x,y)\,dx\,dy \right| \\
&= \left| \int_S\int_T (K_k-K)(x,y)\,dx\,dy \right| \\
&= \|K_k-K\|_{\square}.
\end{align*}
Therefore,
\[
\|K-K_{\cP}\|_{\square}
\le
\|K-K_k\|_{\square}
+
\|K_k-K_{\cP}\|_{\square}
\le \epsilon/2.
\]

Note that the parts of $\mathcal{P}_k$ that intersect $U^{(0)}$ constitute a refinement of $\mathcal{C}_{d_0}^{\mathrm{in}}$.  Similarly, the parts of $\mathcal{P}$ that intersect  $U^{(L)}$ constitute a refinement of $\mathcal{C}_{d_L}^{\mathrm{out}}$.
Note as well that $K$ is constant on every set of the form $C\times D$ or $D\times C$ where $C$ is a part in $\mathcal{C}_{d_0}^{\mathrm{in}}$ or $\mathcal{C}_{d_L}^{\mathrm{out}}$ and $D$ is any part of $\mathcal{Q}$ (the partition underlying $K$) that intersect (or equivalently, is a subset of) $U^{(1)}\cup\ldots\cup U^{(L-1)}\cup U^{(\mathrm{bias})}$.  

Hence, by the definition of projection and since $\mathcal{P}$ refines $\mathcal{L}_r$, the kernel  $K_{\mathcal{P}}$ has the same property: it is constant on every set of the form $C\times D$ or $D\times C$ where $C$ is a part in $\mathcal{C}_{d_0}^{\mathrm{in}}$ or $\mathcal{C}_{d_L}^{\mathrm{out}}$ and $D$ is any part of $\mathcal{P}$ that intersect $U^{(1)}\cup\ldots\cup U^{(L-1)}$.
We define the partition $\mathcal{P}'$ as the partition containing as parts the intervals of $\mathcal{C}_{d_0}^{\mathrm{in}}$ and $\mathcal{C}_{d_L}^{\mathrm{out}}$ and all parts of $\mathcal{P}$ outside $\mathcal{C}_{d_0}^{\mathrm{in}}$ and $\mathcal{C}_{d_L}^{\mathrm{out}}$. By the above analysis, $K_{\mathcal{P}}$ is a step graphon with respect to $\mathcal{P}'$.

For each  $\ell=1,\ldots,L-1$, let $\mathcal{P}^{(\ell)}$ be the set of parts of $\mathcal{P}'$ that intersect  (or equivalently are subsets of) $U^{(\ell)}$. Similarly define $\mathcal{P}^{(\mathrm{bias})}$. Note that $\mathcal{P}^{(\ell)}$ is a partition of $U^{(\ell)}$ and $\mathcal{P}^{(\mathrm{bias})}$ a partition of $U^{(\mathrm{bias})}$. Note as well that \[
|\mathcal{P}^{(\mathrm{bias})}|,|\mathcal{P}^{(\ell)}| \le 2^{2\lceil 16/\epsilon^2\rceil}(L+d_0+d_L).
\]

Let $M\in\mathbb{N}$ be the lowest common multiple of $d_0$ and $d_L$. Let $n=C(L+2)M$ for some $C\in\mathbb{N}$ to be specified later. 
Now, apply the equatizing partition lemma (Lemma \ref{l:equipartition}), to equatize   $\mathcal{P}^{(\ell)}$ into $CM$ parts $\mathcal{E}^{(\ell)}_{CM}$, each of measure $1/n$. Similarly, define $\mathcal{E}^{(\mathrm{bias})}_{CM}$. Here, we restrict the choice of $C$ to satisfy 
\[CM>\max_{\ell=1,\ldots,L-1,{(\mathrm{bias})}}|\mathcal{P}^{(\ell)}|.\] 
By the equatizing lemma, there exist numbers $h^{(\ell)}\leq |\mathcal{P}^{(\ell)}|$ such that the first $CM-h^{(\ell)}$ parts  of $\mathcal{E}^{(\ell)}_{CM}$ are the refinement parts $\mathcal{T}_{CM}^{(\ell)}$, and the last $h^{(\ell)}$ parts  of $\mathcal{E}_{CM}^{(\ell)}$ are the remainder parts $\mathcal{R}^{(\ell)}_{CM}$. Similarly, we consider $h^{(\mathrm{bias})}<|\mathcal{P}^{(\mathrm{bias})}|$, $\mathcal{T}_{CM}^{(\mathrm{bias})}$ and $\mathcal{R}_{CM}^{(\mathrm{bias})}$.

Denote $\mathcal{E}_{j}=\cup_{\ell}\mathcal{E}^{(\ell)}_{CM}\cup \mathcal{E}^{(\mathrm{bias})}_{CM}\cup \mathcal{C}_{d_0}^{\mathrm{in}} \cup \mathcal{C}_{d_L}^{\mathrm{out}}$, where $j=|\mathcal{E}_{j}| = CML+d_0+d_L$. Denote $\mathcal{R}=\cup_{\ell}\mathcal{R}_{CM}^{(\ell)}\cup\mathcal{R}_{CM}^{(\mathrm{bias})}$ and $R=\cup\mathcal{R}$.
By construction, the projection $K_{\mathcal{E}_j}$ of $K$ upon $\mathcal{E}_j$ is equal to $K_{\mathcal{P}}$ for every $(x,y)$ outside the set $([0,1]\times R)\cup (R\times[0,1])$. Note that 
\begin{align*}
  \mu(([0,1]\times R)\cup (R\times[0,1])) & \leq 2 \big(h^{(\mathrm{bias})}+\sum_{\ell} h^{(\ell)}\big)/n \leq 2L|\mathcal{P}'|/n.
\end{align*}
Moreover, the pointwise distance between $K_{\mathcal{P}}$ and $K_{\mathcal{E}_j}$  for any $(x,y)\in ([0,1]\times R)\cup (R\times[0,1])$ is bounded by 2. Hence,
\[\|K_{\mathcal{P}}-K_{\mathcal{E}_j}\|_{\square} \leq \|K_{\mathcal{P}}-K_{\mathcal{E}_j}\|_{1} \leq 4L|\mathcal{P}'|/n.\]

Hence, by the triangle inequality
\[\|K-K_{\mathcal{E}_j}\|_{\square} \leq \epsilon/2 + 4L|\mathcal{P}'|/n.\]
We retroactively choose $n$ such that $4L|\mathcal{P}'|/n\leq\epsilon/2$. For this, we choose
\[n = 8ML(L+2)\left\lceil \frac{2^{2\lceil 16/\epsilon^2\rceil}}{\epsilon} \right\rceil,\]
which leads to 
\[4L|\mathcal{P}'|/n\le 4L2^{2\lceil 16/\epsilon^2\rceil}(L+d_0+d_L)/n \leq   4L2^{2\lceil 16/\epsilon^2\rceil}(L+2M)/n \]
\[\leq (L+2M)\epsilon/2(ML+2M)\leq \epsilon/2,\]
while still $n$ being of the form $C(L+2)M$ for some $C\in\mathbb{N}$

Next, we map the equipartition $\mathcal{E}_j$ into an interval equipartition which underlies a computational kernel, i.e., which refines $\mathcal{L}_r$.
For this, using Lemma \ref{lem:bijection2interval}, for each $\ell\in\{1,\ldots, L-1,{(\mathrm{bias})}\}$ apply a measure preserving bijection $\phi^{(\ell)}$ mapping $\mathcal{E}^{(\ell)}_{CM}$ into the interval equipartition $\mathcal{I}^{(\ell)}_{CM}$ of $U^{(\ell)}$. 
Denote 
\[ \mathcal{J}=\mathcal{C}_{d_0}^{\mathrm{in}}\cup \mathcal{I}^{(1)}_{CM}\cup \ldots \cup \mathcal{I}^{(L-1)}_{CM}\cup \mathcal{I}^{(\mathrm{bias})}_{CM}\cup \mathcal{C}_{d_L}^{\mathrm{out}},\]
and note that $\mathcal{J}$ is an interval equipartition of $[0,1]$ into $n$ parts underlying a computational kernel, i.e., refines $\mathcal{L}_r$. 
Consider the corresponding measure preserving bijection $\phi$ over $[0,1]$, defined by
\[\phi(x)=\begin{cases}
   \phi^{(\ell)}(x), &  x\in U^{(\ell)},\ \text{for some}\ \ell=1,\ldots,L-1,{(\mathrm{bias})}\\
   x & \text{otherwise}.
\end{cases}\]
Consider the kernel $K^{\phi}(x,y)=K(\phi(x),\phi(y))$, and note that $(K^{\phi})_{\mathcal{J}}$ is a computational kernel in $\mathcal{CK}(B,L,d_0,d_L)$ with hidden dimension
\[d=8ML\left\lceil \frac{2^{2\lceil 16/\epsilon^2\rceil}}{\epsilon} \right\rceil.\]
By the above construction, this kernel satisfies the commutation property between the measure-preserving bijection and the projection
\[(K^{\phi})_{\mathcal{J}} = (K_{\mathcal{E}_j})^{\phi}.
\]
This directly leads to
\[\delta^{\mathrm{comp}}_{\square}(K,K_{\mathcal{J}}) \leq \epsilon.\]
\end{proof}
\paragraph{Network Architecture Approximation.} We now use \cref{th:lipwithconstcomp} and \cref{thm:wrlcompgraphon} to prove \cref{thm:arch_approx}. The proof proceeds by first enlarging the computational graph output dimension by adding unconnected computational units to the output layer and inducing it into a kernel. We then establish a computational kernel approximation via \cref{thm:wrlcompgraphon}. Lastly, we use the Lipschitz continuity of graph network of \cref{th:lipwithconstcomp}, to bound the output.
\ArchApproxTwo*
\begin{proof}
Let $\mathbf{x} \in [0,1]^{d_0}$ be any input vector and let \(f_\mathbf{x}\in\mathcal{CS}(L,d_0,d_L)\) be its induced input signal given the parameters $L,d_0$, and $d_L$. Additionally, let $K\in\mathcal{CK}(B,L,d_0,d_L)$ denote the computational kernel of $\Theta$.

Let \(U^{(L)}\) be the output layer of $K$ (of length $1/(L+2)$) and $\mathcal{C}^{\mathrm{out}}_{d_L}$ be the coarse output partition of $K$, which partitions the interval $U^{(L)}$ into $d_L$ intervals of length $1/(L+2)d_L$, i.e. the Lebesgue measure $\mu$ of each of the intervals is $1/(L+2)d_L$.

By \cref{thm:wrlcompgraphon}, for any $\epsilon>0$, there exists a computational kernel $K'$ with hidden dimension
 \begin{align}
 \label{eq:inProofd}
&d=8ML\left\lceil \frac{2^{2\lceil (16(L+2)^2d_L^2(2B)^{2L})/\epsilon^2\rceil}(L+2)d_L(2B)^L}{\epsilon} \right\rceil.
 \end{align}
 approximating $K$ in cut norm with error
\[
\delta_{\square}^{\mathrm{comp}}(K, K') < \frac{\epsilon}{(L+2)d_L(2B)^L}.
\]
By Remark \ref{rem:inducedGeneral} (for any computational kernel there is a dense neural network that induces it) there exists a dense network $\Theta'\in\mathcal{NN}_{\mathrm{dense}}(B,L,d_0,d_L)$, with hidden dimension given by (\ref{eq:inProofd}), that induces $K'$.

Let $j\in[d_L]$ be any output channel and denote the $j$th interval of $\mathcal{C}^{\mathrm{out}}_{d_L}$ by $I^{\mathrm{out}}_j$. By \cref{lem:ker_equi}, we have
\[
\Theta
(\mathbf{x})_j = \Phi_{B,L}\big(K
,f_{\mathbf{x}}\big)\big(v\big), \quad \Theta'
(\mathbf{x})_j = \Phi_{B,L}\big(K'
,f_{\mathbf{x}}\big)\big(v\big),
\]
whenever $v\in I^{\mathrm{out}}_j$. Thus,
\begin{align*}
\big|&\Theta(\mathbf{x})_j - \Theta'(\mathbf{x})_j\big| 
= \big|\Phi_{B,L}(K, f_{\mathbf{x}})(v) - \Phi_{B,L}(K', f_{\mathbf{x}})(v)\big| 
\end{align*}
whenever $v\in I^{\mathrm{out}}_j$. Therefore, for any $v\in I^{\mathrm{out}}_j$:
\begin{align*}
\big|\Phi_{B,L}(K, f_{\mathbf{x}})(v) - \Phi_{B,L}(K', f_{\mathbf{x}})(v)\big|&=\frac{1}{\mu\left(I^{\mathrm{out}}_j\right)}\abs{\mu\left(I^{\mathrm{out}}_j\right) \left(\Phi_{B,L}(K, f_{\mathbf{x}})(v)-\Phi_{B,L}(K', f_{\mathbf{x}})(v)\right)}
\\&
=(L+2)d_L \abs{ \int_{I^{\mathrm{out}}_j}\left(\Phi_{B,L}(K, f_{\mathbf{x}})(u)-\Phi_{B,L}(K', f_{\mathbf{x}})(u)\right)du}
\\&\le(L+2)d_L\norm{ {\rm Out}_{K,f_{\mathbf{x}}}-{\rm Out}_{K',f_{\mathbf{x}}}}_{\square},
\end{align*}
where
 \[{\rm Out}_{K,f_{\mathbf{x}}}:=\mathbbm{1}_{U^{(L)}}\Phi_{B,L}(K,f_{\mathbf{x}}), \quad {\rm Out}_{K',f_{\mathbf{x}}}:=\mathbbm{1}_{U^{(L)}}\Phi_{B,L}(K',f_{\mathbf{x}}).\]
Lastly, by the Lipschitz property of the $B$-IR-MPNN (\cref{th:lipwithconstcomp})
\begin{align*}
(L+2)d_L\norm{ {\rm Out}_{K,f_{\mathbf{x}}}-{\rm Out}_{K',f_{\mathbf{x}}}}_{\square} \le(L+2)d_L (2B)^L \, \delta_{\square}^{\mathrm{comp}}(K, K') < \epsilon.
\end{align*}
Hence,
\begin{align*}
\big|&\Theta({\mathbf{x}})_j - \Theta'({\mathbf{x}})_j\big| < \epsilon.
\end{align*}
\end{proof}
\section{Implications for Approximation}
In this section, we prove our main results, providing explicit conditions under which dense deep ReLU networks fail to be universal. In other words, for sufficiently large input dimension $d_0$, \emph{any dense ReLU network architecture is not dense in ${\mathrm{Lip}}(d_0,d_L)$} (the space of Lipschitz continuous functions with Lipschitz constant bounded by $1$). First, we prove a variant of \cite{Yarotsky2017approximations}, Theorem 4., part (a), which depends only on one implicit parameter (in contrast to two in the original theorem). \cite{Yarotsky2017approximations}, Theorem 4. provides lower bounds on the number of computational units a network architecture must have in order to be able to approximate any function in ${\mathrm{Lip}}(d_0,d_L)$. Then, we use these two results to show that for a large enough input dimension, no ReLU architecture can approximate all functions with a small $\epsilon$ error. This highlights an inherent limitation of dense networks. 
\subsection{Lower Bounds Based on VC-Dimension}
We establish the following theorem  by following the steps of the proof of \cite{Yarotsky2017approximations}, Theorem 4. \cref{thm:VCbounds} gives a result close to the first part of \cite{Yarotsky2017approximations}, Theorem 4, but with explicit constants restricted to Lipschitz continuous functions.

Recall that for a hypothesis class $\mathcal{H}$ of Boolean functions on $[0,1]^d$,
the \emph{VC-dimension} is the size of the largest subset
$S \subset [0,1]^d$ that can be shattered by $\mathcal{H}$,
i.e., on which $\mathcal{H}$ can realize all dichotomies
(see \cref{sec:background}). 

Of particular interest is the case where $\mathcal{H}$ consists of Boolean
functions obtained by thresholding the scalar output of a ReLU network with
fixed architecture and variable weights and biases. Concretely, let
$\Theta_{(\weights,\bias)}: [0,1]^d \to \mathbb{R}$ denote the function computed by such a
network with parameters ${(\weights,\bias)}$, and consider the class
\[
\mathcal{H} = \bigl\{\, x \mapsto \mathds{1}_{\{f_\theta(x) > a\}} \;\big|\;(\weights,\bias) \hspace{0.1cm}{\rm network}\hspace{0.1cm}{\rm parameters}\hspace{0.1cm},\ a \in \mathbb{R} \bigr\}.
\]
Here, the parameter $a \in \mathbb{R}$ represents a threshold applied to the
network output. When the network parameters are unconstrained, this threshold
can be fixed to $a=0$ without loss of generality, since it can be absorbed into
the bias of the last layer. We emphasize that no margin assumption is
imposed here; the thresholding is purely combinatorial and used only to define
a Boolean-valued hypothesis class.

In this setting, \cite{Bartlett2009}, Theorem 8.7 shows that the VC-dimension satisfies 
\begin{align}
\mathrm{VCdim}(\mathcal{H}) \leq c \cdot W^2,\label{eq:VCdimLW}
\end{align}
where $W$ is the total number of weights in the network and $c$ is a universal constant independent of the architecture. Recall that ${\mathrm{Lip}}(d_0,d_L)$ denotes the space of $1$-Lipschitz continuous functions from $[0,1]^{d_0}$ to $\mathbb{R}^{d_L}$ bounded by $1$. Theorem 7 of \cite{tightVC2017} presents an explicit bound that improves upon the classical 
VC-dimension bound in \cref{eq:VCdimLW}, showing that the universal constant $c$ is in fact small.
\Yarotsky*
\begin{proof}
We first reduce to the case of a scalar output. Since the Lipschitz constant and the approximation error are defined using the infinity norm over the output, every output channel can be treated independently. Hence, it suffices to prove the result for $d_L = 1$.

Given a positive integer $N$ to be chosen later, choose $S$ as a set of $N^{d_0}$ points $\mathbf{x}_1, \dots, \mathbf{x}_{N^{d_0}}$ in the cube $[0,1]^{d_0}$ such that the Euclidean distance between any pair of them is not less than $\tfrac{1}{N}$. Given any assignment of values $y_1, \dots, y_{N^{d_0}} \in \mathbb{R}$, we can construct a Lipschitz continuous function $f$ satisfying $f(\mathbf{x}_m) = y_m$ for all $m$ by setting
\begin{equation}
    f(\mathbf{x}) = \sum_{m=1}^{N^{d_0}} y_m \phi\bigl(N(\mathbf{x} - \mathbf{x}_m)\bigr),
    \label{eq:fDef}
\end{equation}
with $\phi : \mathbb{R}^{d_0} \to \mathbb{R}$ such that $\phi(\mathbf{x}) = 1-2\|\mathbf{x}\|_2$ if $0\leq\|x\|_2 < \tfrac{1}{2}$ and $\phi(\mathbf{x}) = 0$ if $\|\mathbf{x}\| \geq \tfrac{1}{2}$. Notice that $f$ is continuous and  differentiable almost everywhere. Let us obtain a condition ensuring that such $f \in {\mathrm{Lip}}(d_0,d_L)$. It is easy to see that
\[
    \max_\mathbf{x} |  f(\mathbf{x})| \le \max_m |y_m|\quad{\rm and}\quad\max_\mathbf{x} |\nabla  f(\mathbf{x})| = N\max_m |y_m| \max_\mathbf{x} |\nabla  \phi(\mathbf{x})|.
\]
Thus, we obtain 
\begin{equation}
    \max_x |\nabla  f(\mathbf{x})| \leq 2N \max_m|y_m|.
\label{eq:DnfBound}
\end{equation}
Therefore, if $\max_m|y_m|\le\frac{1}{2N}$, then  is $1$-Lipschitz.  
Now, set
\begin{equation}
    N = \frac{1}{6\epsilon}.
    \label{eq:epsDef}
\end{equation}
Then $\max_\mathbf{x}|f(\mathbf{x})|\le3\epsilon$. In particular, when $\epsilon\le 1/3$, $\max_\mathbf{x}|f(\mathbf{x})|\le1$. Therefore, $f \in {\mathrm{Lip}}(d_0,d_L)$. 
Suppose that there is a ReLU network architecture $\eta$ that can approximate, by adjusting its weights, any $f \in {\mathrm{Lip}}(d_0,d_L)$ with error less than $\epsilon$. By $\Theta_{(\weights,\bias)}(\mathbf{x})$, we denote the output of the network with parameters $(\weights,\bias)$.   

Consider any assignment $\mathbf{z}$ of Boolean values $z_1, \ldots, z_{N^{d_0}} \in \{0,1\}$. 
Set
\[
    y_m = \frac{z_m}{2N}, \quad m = 1, \ldots, N^{d_0},
\]
and let $f$ be given by \cref{eq:fDef}; then \cref{eq:DnfBound} holds and hence $f \in {\mathrm{Lip}}(d_0,d_L)$. 
By assumption, there exist fixed parameters $(\weights_z,\bias_z)$, such that for all $m$ we have 
$$|\Theta_{(\weights_z,\bias_z)}(\mathbf{x}_m) - y_m| \leq \epsilon,$$ and in particular
\[
\Theta_{(\weights_z,\bias_z)}(\mathbf{x}_m)= \begin{cases}
\geq  \frac{1}{2N} - \epsilon >  \frac{1}{4N}, & \text{if } z_m = 1, \\
\leq \epsilon <  \frac{1}{4N}, & \text{if } z_m = 0,
\end{cases}
\]
so the thresholded network ${\hat{\Theta}}_{(\weights_z,\bias_z)} = \mathbf{1}(\Theta_{(\weights_z,\bias_z)} > \frac{1}{4N})$ has outputs
\[
{\hat{\Theta}}_{(\weights_z,\bias_z)}(\mathbf{x}_m) = z_m, \quad m = 1, \ldots, N^{d_0}.
\]
Since the Boolean values $z_m$ were arbitrary, we conclude that the subset $S$ is shattered and hence
\[
\mathrm{VCdim}(\hat{\Theta}) \geq N^{d_0}.
\]
Expressing $N$ through $\epsilon$ with \cref{eq:epsDef}, we obtain
\begin{equation}
\mathrm{VCdim}(\hat{\Theta}) \geq \left(6 \epsilon\right)^{-{d_0}}.
\label{eq:vclower1}
\end{equation}
To establish the Theorem, we apply the inequality in \cref{eq:VCdimLW} to the network $\hat{\Theta}$:
\begin{equation}
\mathrm{VCdim}(\hat{\Theta}) \leq c W^2,
\label{eq:vcupper1}
\end{equation}
where $W$ is the number of weights in $\eta_1$, which is the same as in $\eta$ if we do not count the threshold parameter. Combining \cref{eq:vclower1} with \cref{eq:vcupper1}, we obtain the desired lower bound
\[
W \geq c^{-1/2} \, (6\epsilon)^{-{d_0}/2}.
\]
\end{proof}

Next, we relate the number of parameters of a neural network in $\mathcal{NN}_{\mathrm{dense}}(B,L,d_0,d_L)$ to the hidden and input dimensions, using formulae compatible with Corollary \ref{thm:arch_approx} and Theorem \ref{thm:VCbounds} with the choice of $\epsilon=1/8$. This correspondence will be used to derive the expressivity bound on strongly dense neural networks.

\begin{lemma}\label{lem:function_family}
Let $\epsilon_0=1/8$. Let $B\ge1$, $L,d_L\in\mathbb{N}$, and let $c$ be the constant from \cref{thm:VCbounds}. Let $d_0$ satisfy
\begin{equation}\label{eq:exp_bound}
d_0 \geq 17\log_2\!\bigl(
c^{1/2}L^3(L+2)^2d_L^4(2B)^{2L}
\bigr)
+ 17\cdot 2^{14}(L+2)^2d_L^2(2B)^{2L} + 306.
\end{equation}
Consider the family $\mathcal{H}_{\epsilon_0}$ of ReLU networks with strictly less than $c^{-1/2}(6\epsilon_0)^{-d_0/2}$ parameters. Then, the space of dense neural networks in $\mathcal{NN}_{\mathrm{dense}}(B,L,d_0,d_L)$ with hidden dimension $$d=8ML\left\lceil \frac{2^{2\lceil (16(L+2)^2d_L^2(2B)^{2L})/(\epsilon_0/2)^2\rceil}(L+2)d_L(2B)^L}{(\epsilon_0/2)} \right\rceil$$
is a subset of $\mathcal{H}_{\epsilon_0}$.    
\end{lemma}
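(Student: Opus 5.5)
The plan is a direct counting argument. I compare an explicit upper bound on the parameter count $W$ of a network in $\mathcal{NN}_{\mathrm{dense}}(B,L,d_0,d_L)$ with the given hidden dimension $d$ against the threshold $c^{-1/2}(6\epsilon_0)^{-d_0/2}=c^{-1/2}(4/3)^{d_0/2}$. Since these networks have fixed width, their parameter count is
\[
W=(L-1)d^2+(d_0+d_L+L-1)d+d_L .
\]
Because $d\ge 1$ and $d_0\ge 1$, we get $W\le 4L d_0 d_L d^2$ (crudely), and it suffices to show
\[
\log_2 W<\tfrac{d_0}{2}\log_2\tfrac43-\tfrac12\log_2 c .
\]

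First I bound $d$. With $\epsilon_0/2=1/16$ we have $16/(\epsilon_0/2)^2=2^{12}$. Writing $A:=(L+2)^2d_L^2(2B)^{2L}$, the inner exponent is $2\lceil 2^{12}A\rceil\le 2^{13}A+2$. Using also $M\le d_0d_L$ and $\lceil t\rceil\le 2t$ for $t\ge1$, this gives
\[
d\le 8ML\cdot 2\cdot 16\,(L+2)d_L(2B)^L\, 2^{2^{13}A+2}\le 2^{9}\,d_0\,L\,(L+2)\,d_L^2\,(2B)^L\,2^{2^{13}A}.
\]
Squaring and inserting into the bound on $W$, I expect
\[
\log_2 W\le 2^{14}A+\log_2\!\bigl(L^3(L+2)^2d_L^4(2B)^{2L}\bigr)+3\log_2 d_0+C_0
\]
for an explicit absolute constant $C_0$ (around $20$). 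The point of this arrangement is that the logarithmic factor in $L,d_L,B$ is exactly the one appearing in \eqref{eq:exp_bound}.

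Next I compare with the right-hand side. Since $\tfrac12\log_2\tfrac43\approx0.2075>3/17$, it suffices to show
\[
\log_2 W+\tfrac12\log_2 c<\tfrac{3}{17}d_0 .
\]
I split $\tfrac{3}{17}d_0$ into three copies of $d_0/17$ and use \eqref{eq:exp_bound} for each copy. The first copy absorbs $\log_2\bigl(c^{1/2}L^3(L+2)^2d_L^4(2B)^{2L}\bigr)$. The second absorbs $2^{14}A$, which needs $d_0/17\ge 2^{14}A$ and this follows from the second summand of \eqref{eq:exp_bound}. The third, together with the remaining slack $(0.2075-3/17)d_0\approx0.031\,d_0$, must absorb $3\log_2 d_0+C_0$.

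The main obstacle is this last absorption, where the polynomial factors $d_0^3$ coming from $M\le d_0d_L$ and from $W$'s linear-in-$d_0$ term must be beaten by a linear function of $d_0$. Here I use that \eqref{eq:exp_bound} forces $d_0\ge 306+17\cdot2^{14}\gg 10^5$. At this size $3\log_2 d_0\le 60$, far below $d_0/17$, so the inequality holds with large margin. The residual bookkeeping is checking that $C_0$ together with the ceiling effects stays below the additive term $306/17=18$ plus this slack. I also need to handle the case $c<1$, where $\log_2 c^{1/2}$ is negative; there the first summand of \eqref{eq:exp_bound} could be small, but the inequality to prove only becomes easier. Once $W<c^{-1/2}(6\epsilon_0)^{-d_0/2}$ is established, every such network lies in $\mathcal{H}_{\epsilon_0}$, which is the claim.
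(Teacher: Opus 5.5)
This is essentially the paper's argument, and it is correct: bound $d$ via $M\le d_0d_L$ and the ceilings, bound $W$ polynomially in $d$, take $\log_2$, let the hypothesis on $d_0$ absorb the $(L,d_L,B)$-terms, and let the linear slack beat the $\log_2 d_0$ terms. The paper uses the tighter $W\le (L-1)d^2+d_0d_LLd$ (yielding $d_0^2d_L^4$) and a single copy of $d_0/17$; your one-line bound $W\le 4Ld_0d_Ld^2$ is a legitimate simplification whose cost is paid out of the slack.

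The bookkeeping needs fixing, though:
\begin{itemize}
\item The constant is $2^{10}$, not $2^9$.
\item Your bound gives $d_L^5$, not $d_L^4$, inside the logarithm. This is harmless, since the hypothesis forces $d_0\ge 17\cdot2^{14}\cdot36\,d_L^2$.
\item The claim ``$3\log_2 d_0\le 60$'' fails for large $d_0$. Replace it by the monotonicity of $d_0/17-3\log_2 d_0$ on $d_0\ge 10^5$.
\item Small $c$ does not make things easier, because it also weakens the hypothesis on $d_0$. Instead note that $c\ge 1/2$, since a one-dimensional affine architecture with $W=2$ shatters two points. This makes all three summands nonnegative, which the paper also assumes tacitly.
\end{itemize}
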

\begin{proof}
We now show that any network
$\Theta\in\mathcal{NN}_{\mathrm{dense}}(B,L,d_0,d_L)$ with hidden dimension $$d=8ML\left\lceil \frac{2^{2\lceil (16(L+2)^2d_L^2(2B)^{2L})/(\epsilon_0/2)^2\rceil}(L+2)d_L(2B)^L}{(\epsilon_0/2)} \right\rceil$$ has fewer than
$c^{-1/2}(6\epsilon_0)^{-d_0/2}$ parameters. 
By substituting $\epsilon_0 = 1/8$ and collecting powers of $2$, we obtain
\begin{align*}
\left\lceil
\frac{2^{2\lceil (16(L+2)^2d_L^2(2B)^{2L})/(\epsilon_0/2)^2\rceil}
(L+2)d_L(2B)^L}{(\epsilon_0/2)}\right\rceil= \left\lceil
2^{2\lceil 2^{12}(L+2)^2d_L^2(2B)^{2L}\rceil+4}
(L+2)d_L(2B)^L
\right\rceil \\[0.3em]
\end{align*}
Notice that 
\begin{align*}
\left\lceil
2^{2\lceil 2^{12}(L+2)^2d_L^2(2B)^{2L}\rceil+4}
(L+2)d_L(2B)^L
\right\rceil&\le\left(
2^{ 2(2^{12}(L+2)^2d_L^2(2B)^{2L}+1)+4}
(L+2)d_L(2B)^L+1\right)\\
&=\left(
2^{ 2^{13}(L+2)^2d_L^2(2B)^{2L}+6}
(L+2)d_L(2B)^L+1\right)  
\end{align*}
Therefore,
\begin{align*}
d&\le 8ML\cdot
2^{ 2^{13}(L+2)^2d_L^2(2B)^{2L}+6}
(L+2)d_L(2B)^{L}+8ML
\\&= 2^3ML\cdot
2^{ 2^{13}(L+2)^2d_L^2(2B)^{2L}+6}
(L+2)d_L(2B)^{L}+2^3ML
\\&= 
(ML(L+2)d_L(2B)^L)
2^{2^{13}(L+2)^2d_L^2(2B)^{2L}+9}
+ 2^3ML.
\end{align*}
Since $M\le d_0 d_L$, we have
\begin{align*}
d&\le 
(d_0d_L^2L(L+2)(2B)^L)
2^{2^{13}(L+2)^2(2B)^{2L}+9}
+ 2^3d_0d_LL.
\end{align*}
We now calculate $W$, the number of parameters (including both the weights and the biases) of $\Theta$. We have
\begin{align*}
W
&= (L-1)d^2 + (d_0 + d_L + L - 1)d + d_L,
\end{align*}
since there are $(L-1)d^2+(d_0 + d_L)d$ weights and $(L - 1)d + d_L$ biases in a fixed-width network. 

Since $d\ge d_L$, we have
\begin{align*}
(d_0 + d_L + L - 1)d + d_L \le (d_0 + d_L + L)d.
\end{align*}
Since $L\ge 2$, we have
\begin{align*}
(d_0 + d_L + L)d\le d_0d_LLd.
\end{align*}
Thus
\begin{align*}
W=(L-1)d^2 + (d_0 + d_L + L - 1)d + d_L\le (L-1)d^2 + d_0d_LLd.
\end{align*}
The bound $d\le(d_0d^2_LL(L+2)(2B)^L)
2^{2^{13}(L+2)^2d_L^2(2B)^{2L}+9}
+ 2^3d_0d_LL$ yields
\begin{align*}
W&\le (L-1)
\left(\left(d_0d^2_LL(L+2)(2B)^L\right)2^{2^{13}(L+2)^2d_L^2(2B)^{2L}+9}+  2^3d_0d_LL\right)^2
\\&\quad + d_0d_LL\left(\left(d_0d^2_LL(L+2)(2B)^L\right)
2^{2^{13}(L+2)^2d_L^2(2B)^{2L}+9}
+ 2^3d_0d_LL\right)
\\&
=(L-1)
\left(d_0d_L^2L(L+2)(2B)^L\right)^2\left(2^{2^{13}(L+2)^2d_L^2(2B)^{2L}+9}\right)^2
\\&\quad+(L-1)2^3d_0d_LL\left(d_0d_L^2L(L+2)(2B)^L\right)
2^{2^{13}(L+2)^2d_L^2(2B)^{2L}+9}
\\&\quad+(L-1)\left(2^3d_0d_LL\right)^2
\\&\quad + d_0d_LL\left((d_0d^2_LL(L+2)(2B)^L)
2^{2^{13}(L+2)^2d_L^2(2B)^{2L}+9}\right)
\\&\quad+ d_0d_LL\left(2^3d_0d_LL\right)
\\&
= (L-1)d_0^2d_L^4L^2(L+2)^2(2B)^{2L}
2^{2^{14}(L+2)^2d_L^2(2B)^{2L}+18} \\
&\quad + (L-1)d^2_0d^3_LL^2(L+2)(2B)^L
2^{2^{13}(L+2)^2d_L^2(2B)^{2L}+12} 
\\&\quad + (L-1)2^6d_0^2d_L^2L^2
\\
&\quad  
+ d_0^2d_L^3L^2(L+2)(2B)^L
2^{2^{13}(L+2)^2d_L^2(2B)^{2L}+9}
\\&\quad+ 2^3d_0^2d_L^3L^2 =:(*)
\end{align*}
Notice that
\begin{align*}
&(L-1)2^6d_0^2d_L^2L^2  
+ d_0^2d_L^3L^2(L+2)(2B)^L
2^{2^{13}(L+2)^2d_L^2(2B)^{2L}+9}
+ 2^3d_0^2d_L^3L^2 
\\&\qquad\qquad\qquad\le 2\cdot d_0^2d_L^3L^2(L+2)d_L(2B)^L
2^{2^{13}(L+2)^2d_L^2(2B)^{2L}+9}
\\&\qquad\qquad\qquad= d_0^2d_L^3L^2(L+2)d_L(2B)^L
2^{2^{13}(L+2)^2d_L^2(2B)^{2L}+10}
\\&\qquad\qquad\qquad\le d_0^2d_L^3L^2(L+2)d_L(2B)^L
2^{2^{13}(L+2)^2d_L^2(2B)^{2L}+12}
\end{align*}
Thus
\begin{align*}
& (L-1)d^2_0d^3_LL^2(L+2)(2B)^L
2^{2^{13}(L+2)^2d_L^2(2B)^{2L}+12} 
\\&\quad + (L-1)2^6d_0^2d_L^2L^2
\\
&\quad  
+ d_0^2d_L^3L^2(L+2)(2B)^L
2^{2^{13}(L+2)^2d_L^2(2B)^{2L}+9}
\\&\quad+ 2^3d_0^2d_L^3L^2
\\&\le (L+2)Md_0^2d_L^4L^2(L+2)(2B)^{2L}
2^{2^{14}(L+2)^2d_L^2(2B)^{2L}+18}
\\&=d_0^2d_L^4L^2(L+2)^2(2B)^{2L}
2^{2^{14}(L+2)^2d_L^2(2B)^{2L}+18}
\end{align*}
All in all, we get
\begin{align*}
(*)&\le d_0^2d_L^4L^3(L+2)^2(2B)^{2L}
2^{2^{14}(L+2)^2d_L^2(2B)^{2L}+18}
\\&\le d_0^2d_L^4L^3(L+2)^2(2B)^{2L}
2^{2^{14}(L+2)^2d_L^2(2B)^{2L}+18}
:=\widetilde{W}
\end{align*}
That is, $W\le \widetilde{W}.$
We seek a condition on $d_0$ such that
\[
\widetilde{W}
\le c^{-1/2}\left(\frac{8}{6}\right)^{d_0/2},
\]
which implies
$W \le c^{-1/2}(6\epsilon_0)^{-d_0/2}$. Taking logarithms, this condition is equivalent to
\[
\log\!\bigl(c^{1/2}\widetilde{W}\bigr)
\le \frac{d_0}{2}\log\!\left(\frac{4}{3}\right).
\]
Substituting the expression for $\widetilde{W}$ yields
\begin{align*}
\log_2\!\bigl(
c^{1/2}L^3(L+2)^2d_L^4(2B)^{2L}
\bigr)
+ 2^{14}(L+2)^2d_L^2(2B)^{2L} + 18
\le \frac{d_0}{2}\log\!\left(\frac{4}{3}\right) - 2\log(d_0).
\end{align*}

A direct calculation shows that for all $d_0 > 1800$,
\[
\frac{1}{17}d_0
\le \frac{d_0}{2}\log\!\left(\frac{4}{3}\right) - 2\log(d_0).
\]
Therefore, it suffices to require
\begin{align*}
17\log_2\!\bigl(
c^{1/2}L^3(L+2)^2d_L^4(2B)^{2L}
\bigr)
+ 17\cdot 2^{14}(L+2)^2d_L^2(2B)^{2L} + 306
\le d_0,
\end{align*}
which is precisely \cref{eq:exp_bound}.
\end{proof}
From the above theorem, we get the following.
\ExpressivityBound*
\begin{proof} 
Fix $\epsilon_0 := 1/8$. Consider the family $\mathcal{H}_{\epsilon_0}$ of ReLU networks with at most $c^{-1/2}(6\epsilon_0)^{-d_0/2}$ parameters. By \cref{thm:VCbounds}, there exists $f\in \mathrm{Lip}(d_0,d_L)$ such that no function from $\mathcal{H}_{\epsilon_0}$ can approximate it with error less than $\epsilon_0$. On the other hand, by \cref{lem:function_family} the space of dense neural networks $\mathcal{NN}_{\mathrm{dense}}(B,L,d_0,d_L)$ with hidden dimension $$d=8ML\left\lceil \frac{2^{2\lceil (16(L+2)^2d_L^2(2B)^{2L})/(\epsilon_0/2)^2\rceil}(L+2)d_L(2B)^L}{(\epsilon_0/2)} \right\rceil$$
is a subset of $\mathcal{H}_{\epsilon_0}$. 

Suppose, for the sake of contradiction, that
$\mathcal{NN}_{\mathrm{dense}}(B,L,d_0,d_L)$ is a universal approximator
of ${\mathrm{Lip}}(d_0,d_L)$. Then there exists
$\Theta \in \mathcal{NN}_{\mathrm{dense}}(B,L,d_0,d_L)$ such that
\[
\|f - \Theta\|_\infty \le \epsilon_0/2.
\]

By \cref{thm:arch_approx}, any dense ReLU network
$\Theta \in \mathcal{NN}_{\mathrm{dense}}(B,L,d_0,d_L)$ admits an
equivalent dense ReLU network
$\Theta' \in \mathcal{NN}_{\mathrm{dense}}(B,L,d_0,d_L)$ with controlled
hidden dimension
\begin{align*}
d
&= 8ML\left\lceil
\frac{2^{2\lceil (16(L+2)^2d_L^2(2B)^{2L})/\epsilon_0^2\rceil}
(L+2)d_L(2B)^L}{\epsilon_0}
\right\rceil 
\end{align*}
such that
\[
\|\Theta - \Theta'\|_\infty \le \epsilon_0/2.
\]
By the triangle inequality,
\[
\|f - \Theta'\|_\infty
\le \|f - \Theta\|_\infty + \|\Theta - \Theta'\|_\infty
\le \epsilon_0.
\]
This contradiction completes the proof.
\end{proof}
\section{Experiments}\label{appendix:experiments}
We empirically test the theoretical prediction that increasing width in dense networks
leads to early saturation in performance. We use fully connected one-hidden-layer ReLU networks in PyTorch, trained on MNIST
with the Adam optimizer (learning rate $10^{-3}$, batch size $128$, 20 epochs). 
We consider two training modes: A \emph{Standard} mode, where networks are trained without constraints and a \emph{Dense} mode, where weights are clamped after each optimizer step to the interval \(\left[-10/d_{i-1},\,10/d_{i-1}\right]\), where $d_{i-1}$ is the input dimension of the layer. We vary the hidden-layer width from small to highly overparameterized regimes,
recording final training and test accuracy for each setting. Experiments were run on a single NVIDIA GeForce RTX 
    3050 6GB GPU. 
    The code is available at \url{https://github.com/levi776/Dense-Neural-Networks-are-not-Universal-Approximators}.

\paragraph{Results.}
Standard networks improve rapidly in the underparameterized regime and reach almost perfect performance, with training accuracy stabilizing around $98$--$99\%$. Dense networks plateau earlier, with training accuracy around $90$--$91\%$ (\cref{fig:exp,tab:mnist}), and achieve lower test accuracy across all widths (\cref{tab:mnist}), creating a persistent performance gap. This supports the theoretical
prediction that width beyond a certain scale does not further increase the effective expressive power
of dense networks.
\begin{table}[ht!]
\centering
\small
\setlength{\tabcolsep}{6pt}
\begin{tabular}{rcccc}
\toprule
Width & Standard Train Acc (\%) & Dense Train Acc (\%)
      & Standard Test Acc (\%) & Dense Test Acc (\%) \\
\midrule
1     & $29.5 \pm 6.8$  & $27.8 \pm 6.2$  & $29.5 \pm 6.8$  & $27.9 \pm 6.2$ \\
2     & $50.5 \pm 11.6$ & $43.3 \pm 12.1$ & $50.5 \pm 11.5$ & $43.6 \pm 11.7$ \\
4     & $81.0 \pm 2.9$  & $77.0 \pm 4.8$  & $80.7 \pm 3.1$  & $77.5 \pm 5.1$ \\
8     & $90.0 \pm 1.4$  & $86.5 \pm 1.0$  & $89.5 \pm 1.1$  & $86.8 \pm 0.8$ \\
16    & $94.5 \pm 1.9$  & $89.2 \pm 0.2$  & $93.8 \pm 1.6$  & $89.6 \pm 0.2$ \\
32    & $96.9 \pm 2.6$  & $89.9 \pm 0.4$  & $95.8 \pm 2.1$  & $90.2 \pm 0.4$ \\
64    & $98.2 \pm 3.1$  & $90.5 \pm 0.5$  & $96.8 \pm 2.5$  & $90.8 \pm 0.5$ \\
128   & $98.9 \pm 2.8$  & $90.8 \pm 0.3$  & $97.3 \pm 2.1$  & $91.1 \pm 0.4$ \\
256   & $98.9 \pm 2.8$  & $90.6 \pm 0.4$  & $97.5 \pm 2.2$  & $90.8 \pm 0.4$ \\
512   & $98.9 \pm 3.0$  & $90.8 \pm 0.3$  & $97.5 \pm 2.5$  & $91.1 \pm 0.4$ \\
1024  & $98.9 \pm 2.9$  & $90.9 \pm 0.4$  & $97.5 \pm 2.2$  & $91.2 \pm 0.4$ \\
2048  & $98.7 \pm 3.1$  & $90.8 \pm 0.4$  & $97.3 \pm 2.5$  & $91.0 \pm 0.4$ \\
\bottomrule
\end{tabular}
\caption{
Training and test accuracy on MNIST for standard and dense one-hidden-layer ReLU networks
as a function of width. Dense networks enforce per-layer weight constraints
$[-10/d_{i-1},\,10/d_{i-1}]$.
Values denote mean $\pm$ standard deviation over random seeds.
}
\label{tab:mnist}
\end{table}
\begin{figure}
\centering
\includegraphics[width=0.5\textwidth]{experiments_plot.pdf}
\caption{Train accuracy on MNIST vs.\ width. Standard networks reach $\sim$99\%; dense networks plateau near 91\%.}
\label{fig:exp}
\end{figure}
\section{Deep Neural Networks as Graph-Based Models}\label{app:graph_ect}
In this section, we define the computational graph of a feedforward network and formalize, in \cref{lem:graf_equi}, the connection between graph-based neural computations and ReLU networks. We then present \cref{lem:MPL-graph-graphon}, which is a direct consequence of the analysis in Appendix~E of \cite{signal23} and \cite{signal25}, connecting graph-based neural computations to kernel-based neural computations.

\emph{Graph Neural Networks} and specifically \textit{message passing networks (MPNNs)}, form a class of neural networks designed to process graph-structured data \citep{xu2018powerful, Rossi23} by iteratively updating node embeddings through the exchange of messages between nodes. Unlike general trainable MPNNs, we consider the following special predefined network.

\paragraph{SR-MPNN.} Let $L \in \mathbb{N}$; the $L$-layer \emph{sum-ReLU MPNN (SR-MPNN)} processes attributed graphs into output vertex attributes as follows. The SR-MPNN is applied on an attributed graph $(G,\f)$ with adjacency matrix  $\mathbf{A}=(A_{i,j})_{i,j\in[n]}$ as the function defined by $\Psi_L(G,\mathbf{f}) := \mathbf{f}^{(L)}$, where $\mathbf{f}^{(0)} := \mathbf{f}$ and for each layer $\ell = 1,\ldots,L-1$ the update rule is:
\begin{align}
\mathbf{f}^{(\ell)}(i) := {\mathrm{ReLU}}\!\left(\frac{1}{n}\sum_{j \in [n]} A_{i,j}\mathbf{f}^{(\ell-1)}(j)\right).
\label{eq:mpnns2}
\end{align}
The output is computed like in \cref{eq:mpnns2} with $\ell = L$, without ReLU.

\paragraph{The computational graph of a neural network.} To represent a feedforward neural network within the SR-MPNN framework, we associate it with a directed computational graph in which vertices may be repeated.
These repetitions are a technical but essential feature of the construction. 

Intuitively, when a feedforward network is implemented as an SR-MPNN on its computational graph, message passing uses \emph{normalized sum aggregation}.
If each neuron were represented by a single vertex, this normalization would excessively downscale signals as they propagate through the layers, so that the resulting SR-MPNN would no longer implements the intended feedforward computation.
To balance the contributions of different neurons and ensure that signal magnitudes are preserved across layers, we therefore repeat each input, bias, and output neuron several times; the precise reason for this repetition will become clear later in the analysis.

A \emph{partition} of the vertex set $V$ of a graph is a sequence of subsets $(V_j\subset V)_{j=1}^J$ such that $\cup_j V_j=V$ and $V_j\cap V_i=\emptyset$ for every $i\neq j$. The partition is called \emph{balanced} if $|V_i|=|V_j|$ for every $i,j$, where $|V_i|$ is the number of vertices in $V_i$.

We call a graph $G=(V,E)$, with an adjacency matrix $\mathbf{A}=(A_{v,u})_{v,u\in[n]}$, a depth-$L$ \emph{computational graph} with respect to the parameters $L,d_0,d_L\in\mathbb{N}$, if $G$ satisfies the following four conditions.
\begin{itemize}[leftmargin=0.35cm]\itemsep-0.4em
\item \emph{Condition 1.}
The graph is a weighted graph. The size of the graph $n$ is divisible by $M(L+2)$, where $M$ is the least common multiple of $d_0$ and $d_L$. 
\item[ ]
\emph{Notation:} The vertex set $V$ is endowed with a balanced partition into $L+2$ sets of size $d:=n/(L+2)$: \[
V = V^{(0)} \cup V^{(1)} \cup \cdots \cup V^{(L)} \cup V^{({(\mathrm{bias})})},
\]
where each \(V^{(\ell)}\), $\ell=0,\ldots,L$, is called \emph{layers}, and $V^{({(\mathrm{bias})})}$ is called the \emph{bias}. Specifically, we call \(V^{(0)}\) the  \emph{input layer} and \(V^{(L)}\) the \emph{output layer}. The layers \(V^{(0)}\) and \(V^{(L)}\) are endowed with balanced partitions 
\((V^{(\mathrm{in})}_i)^{d_0}_{i=1}\) and \((V^{(\mathrm{out})}_i)^{d_L}_{i=1}\) into \(d_0\) and \(d_L\) cells, respectively,
each part having size \(d/d_0\) and \(d/d_L\). We call \((V^{(\mathrm{in})}_i)^{d_0}_{i=1}\) \emph{input partition} and \((V^{(\mathrm{out})}_i)^{d_L}_{i=1}\) the \emph{output partition}.

\item \emph{Condition 2.}
 The graph satisfies $A_{v,u}=A_{v,u'}$ whenever $u$ and $u'$ belong to the same subset in \((V^{(\mathrm{in})}_i)^{d_0}_{i=1}\). Moreover,  $A_{v,u}=A_{v',u}$ whenever $v$ and $v'$ belong to the same interval of \((V^{(\mathrm{out})}_i)^{d_L}_{i=1}\). In addition, $A_{v,u}=A_{v,u'}$ whenever $u$ and $u'$ belong to $V^{(\mathrm{bias})}$.

\item  \emph{Condition 3.}
The graph satisfies \(A_{v,u}=0\) whenever either
\(u\in V^{(\ell)}\) and \(v\notin V^{(\ell+1)}\) for some
\(\ell\in\{0,\dots,L-1\}\), or \(u\in V^{(\mathrm{bias})}\) and
\(v\in V^{(0)}\), or \(v\in V^{(\mathrm{bias})}\) and \(u\notin V^{(\mathrm{bias})}\). 
\item  \emph{Condition 4.}
For any $v,u\in V^{(\mathrm{bias})}$, $A_{v,u}=(L+2)$.
\end{itemize}
We call $d_0$ the \emph{input dimension}, $d_L$ the \emph{output dimension}, and refer to $d$ as the \emph{hidden dimension} for $\ell=1,\dots,L-1$. By $\mathcal{CG}(L,d_0,d_L)$, we denote the collection of all depth-$L$ computational graphs 
 as defined above. 
\paragraph{Computational Node Feature Vectors.}
Given parameters \(L,d_0,d_L\), a \emph{computational node feature vector} is any node feature vector   which is constant on each part in $C^{\mathrm{in}}_{d_0}$ and $C^{\mathrm{out}}_{d_L}$, and constant 
on $V^{(\mathrm{bias})}$.
%
%
We define a \emph{computational input node feature
vector} as any computational node feature vector satisfying   \(\mathbf{f}(v)=0\) for all \(v\notin V^{(0)}\cup V^{(\mathrm{bias})}\). 
The values of \(\mathbf{f}\) on the sets of \(C^{\mathrm{in}}_{d_0}\) are interpreted as the input vector to the network, while the constant value on \(V^{(\mathrm{bias})}\) represents a bias signal that is present at every layer. \emph{Condition 4} of computational graph, i.e., \(A_{v,u}=(L+2)\) for all \(v,u\in V^{(\mathrm{bias})}\), ensures that this bias signal is unchanged during propagation, i.e., under successive applications of the computational kernel on the signal. 


\paragraph{Computational Graph Induced by Network Parameters.} Given a neural network, one can induce a computational graph on  which the SR-MPNN implements the forward propagation of the neural network. 
Let $(\weights,\bias)$ be the parameters of the neural network $\Theta_{(\weights,\bias)}\in\mathcal{NN}(L,d_0,d_L)$, with hidden dimension $d$ divisible by $d_0$ and $d_L$. Denote $n=d(L+2)$. We define the computational graph $G=G_{(\weights,\bias)}\in\mathcal{CG}(L,d_0,d_L)$ \emph{induced} by $\Theta_{(\weights,\bias)}$ as follows. For every $(v,u)\in[n]^2$:
\begin{itemize}[leftmargin=0.35cm]\itemsep-0.4em
    \item $A_{v,u}=W^{(1)}_{i,j}$ if $u$ is in \(V^{(\mathrm{in})}_j\) and $v$ is the $i$th vertex of $V^{(1)}$. 
    \item   
    $A_{v,u}=W^{(L)}_{i,j}$ if $u$ is the $j$th vertex of $V^{(L-1)}$ and $v$ is in \(V^{(\mathrm{out})}_i\). 
    \item   
    For $\ell=2,\ldots,L-1$, $K(x,y)=W^{(\ell)}_{i,j}$ if $y$ is the $j$th vertex of $V^{(\ell-1)}$ and $x$ is the $i$th vertex of $V^{(\ell)}$. 
    \item For $\ell=2,\ldots,L-1$, $A_{v,u}=b^{(\ell)}_i$ if $u$ is in $V^{(\mathrm{bias})}$ and $v$ is the $i$th interval of $V^{(\ell)}$. Moreover, $K(x,y)=b^{(L)}_i$ if $y$ is in $U^{(\mathrm{bias})}$ and $x$ is in \(V^{(\mathrm{out})}_i\).
\end{itemize}
We encode an input vector $\mathbf{x} \in \mathbb{R}^{d_0}$ as node attributes \(\mathbf{f}_{\mathbf{x}} : V \to \mathbb{R}\)
defined by
\[
\mathbf{f}_{\mathbf{x}}(v) :=
\begin{cases}
x_i, & \text{if } v \text{ is in }V^{(\mathrm{in})}_i, \\
1, & \text{if } v\in V_{(\mathrm{bias})}, 
\\
0, & \text{otherwise}.
\end{cases}
\]

In the above construction, when writing ``the $j$th cell of the partition...'' we implicitly assume that the cells are always sorted in increasing order.
%

\paragraph{Networks as SR-MPNNs.} Lemma \ref{lem:graf_equi} shows that the output of a $B$-strongly dense neural network can be expressed in terms of the
$L$-layer SR-MPNN applied to its induced computational kernel.  
\begin{restatable}[]{lemma} {MLPasgraphMPNN}~\label{lem:graf_equi}
Let $(\weights,\bias)$ be the parameters of $\Theta_{(\weights,\bias)}\in\mathcal{NN}(L,d_0,d_L)$. Then, for any input \(\mathbf{x} \in \mathbb{R}^{d_0}\) and output channel $i\in[d_L]$, we have
\[
\Theta_{(\mathbf{W},\mathbf{b})}
(\mathbf{x})_i = \Psi_{L}\big(G_{(\mathbf{W},\mathbf{b})}
,
f_{\mathbf{x}}\big)\big(v\big),
\]
whenever $v$ is in \(V^{(\mathrm{out})}_i\).
\end{restatable}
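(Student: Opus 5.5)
We prove \cref{lem:graf_equi} by induction over the layers, comparing the SR-MPNN hidden representations $\mathbf{f}^{(\ell)}$ on the computational graph $G=G_{(\mathbf{W},\mathbf{b})}$ with the network activations $\mathbf{h}^{(\ell)}$. We mirror the kernel argument of \cref{lem:ker_equi} and \cref{lem:const_bias_sig}, using exact counting of vertices in place of Lebesgue measures. For $\ell=0,\dots,L$ we set $\mathcal{V}^{(\ell)}$ to be the partition $(V^{(\mathrm{in})}_i)_{i=1}^{d_0}$ when $\ell=0$, the singletons of $V^{(\ell)}$ when $0<\ell<L$, and $(V^{(\mathrm{out})}_i)_{i=1}^{d_L}$ when $\ell=L$. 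The claim to prove is that $\mathbf{f}^{(\ell)}(v)=\mathbf{h}^{(\ell)}_i$ for every $v$ in the $i$th cell of $\mathcal{V}^{(\ell)}$. Evaluating this at $\ell=L$ gives the lemma.

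\textbf{Bias invariance.} We first show by induction that $\mathbf{f}^{(\ell)}|_{V^{(\mathrm{bias})}}\equiv 1$ for all $\ell$. Take $v\in V^{(\mathrm{bias})}$. Condition 3 says only bias vertices send messages to $v$, and Condition 4 fixes those weights to $A_{v,u}=L+2$. Since $|V^{(\mathrm{bias})}|=n/(L+2)$, the induction hypothesis gives
\[
\mathbf{f}^{(\ell)}(v)=\mathrm{ReLU}\Bigl(\tfrac1n\cdot\tfrac{n}{L+2}\cdot(L+2)\cdot 1\Bigr)=1 .
\]
The base case $\ell=0$ holds by the definition of $\mathbf{f}_{\mathbf{x}}$.

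\textbf{Main induction.} The base case $\ell=0$ holds by the definition of $\mathbf{f}_{\mathbf{x}}$. For the step, fix $v$ in the $i$th cell of $\mathcal{V}^{(\ell)}$. By Condition 3, the sum $\frac1n\sum_u A_{v,u}\mathbf{f}^{(\ell-1)}(u)$ only has contributions from $u\in V^{(\ell-1)}\cup V^{(\mathrm{bias})}$. We group these $u$ by the cells $C_j$ of $\mathcal{V}^{(\ell-1)}$. On each $C_j$, the weight $A_{v,u}$ equals $W^{(\ell)}_{i,j}$ and the feature equals $\mathbf{h}^{(\ell-1)}_j$, using Condition 2 and the induction hypothesis. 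Each cell has $|C_j|=n/((L+2)d_{\ell-1})$ vertices, and this holds in all three cases ($d/d_0$, $1$, or hidden singletons with $d_{\ell-1}=d$). The bias block contributes $\frac1n\cdot\frac{n}{L+2}\, b^{(\ell)}_i$. Hence
\[
\mathbf{f}^{(\ell)}(v)=\mathrm{ReLU}\Bigl(\tfrac{1}{(L+2)d_{\ell-1}}\sum_j W^{(\ell)}_{ij}\mathbf{h}^{(\ell-1)}_j+\tfrac{1}{L+2}b^{(\ell)}_i\Bigr),
\]
with the ReLU dropped at $\ell=L$. This is the forward rule for layer $\ell$.

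\textbf{Main obstacle.} The delicate step is reconciling the normalizations. The vertex-repetition counts must exactly cancel the global $1/n$ factor, so that each term acquires the weight $1/((L+2)d_{\ell-1})$. The bias term also has to match the paper's convention, in which the bias sits inside the sum, $\frac{1}{d_{\ell-1}(L+2)}\sum_j(\cdots+b_i)$, and therefore also totals $b_i/(L+2)$. These checks are the same ones that appear in the kernel proof, with $\mu(I_j)$ replaced by $|C_j|/n$.

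One point needs separate care. The defining list for $G_{(\mathbf{W},\mathbf{b})}$ assigns biases only for $\ell\ge2$, so the first hidden layer's bias has to be placed in the bias-to-$V^{(1)}$ block. We would read the construction so that this block holds $b^{(1)}$, consistent with Condition 3, which forbids only bias-to-$V^{(0)}$ edges.
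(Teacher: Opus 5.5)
Your proposal is correct and follows the paper's route: the paper proves this lemma by repeating the kernel argument (the bias-invariance lemma followed by a layerwise induction), and your version simply replaces $\mu(I_j)$ by the vertex count, $|C_j|/n = 1/((L+2)d_{\ell-1})$. Your choice to place $b^{(1)}$ in the bias-to-$V^{(1)}$ block matches what the paper's kernel proof implicitly assumes. One small correction: Condition 3 does not rule out edges leaving $V^{(L)}$; the fact that only $V^{(\ell-1)}\cup V^{(\mathrm{bias})}$ feeds into $V^{(\ell)}$ comes from the construction of $G_{(\mathbf{W},\mathbf{b})}$, where all unlisted entries are zero.
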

The proof of \cref{lem:graf_equi} follows the same steps as the proof of \cref{lem:ker_equi}.

\paragraph{Computational Kernel as An Induced Computational Graph}  We call any depth-$L$ computational graph with respect to the parameters $L,d_0,d_L\in\mathbb{N}$ a
\emph{dense computational graph with bound \(B\)} if the graph is \([-B,B]\)-weighted for some \(B\geq L+2\). In this case, we can induce the computational graph into a computational kernel using the method introduced in \cref{sec:background}.

\cite{signal23,signal25} show that applying an MPNN to an attributed graph and then inducing an attributed kernel yields the same representation as first inducing the attributed kernel and then applying the MPNN (see e.g. Appendix E in \cite{signal25}). A direct result of this theorem in our setting is stated as follows.
\begin{restatable}[]{lemma}{graphgraphon}~\label{lem:MPL-graph-graphon}
Let $L \in \mathbb{N}$ and $B \in \mathbb{R}$.  
For any $[-B,B]$-weighted attributed graph $(G,\mathbf{f})$ with node set $[n]$, applying an $L$-layer $B$-IR-MPNN to its induced attributed kernel $(K_G,f_\f)$ corresponds to applying the standard $L$-layer ReLU graph network to $G$ in the following way: for every $i \in [n]$ and every $x$ in the interval $I_i$:
\[ 
\Phi(G, \mathbf{f})(i)=\Phi_{L,B}(K_G, f_\mathbf{f})(x).
\]
\end{restatable}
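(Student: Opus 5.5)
The plan is to prove Lemma~\ref{lem:MPL-graph-graphon} by induction on the layer index $\ell$. The claim to establish at each layer is that the $\ell$-th hidden representation of the graph network, lifted to a step signal on $\mathcal{I}_n$, coincides with the $\ell$-th representation of the $B$-IR-MPNN on $(K_G,f_\mathbf{f})$. Write $\mathbf{f}^{(\ell)}$ for the graph representations, i.e.\ the update \eqref{eq:mpnns2} with adjacency $\mathbf{A}$, and $f^{(\ell)}$ for the kernel representations \eqref{eq:mpnns}. The inductive statement is
\[
f^{(\ell)}(x)=\mathbf{f}^{(\ell)}(i)\quad \text{for all } x\in I_i,\ i\in[n].
\]
The base case $\ell=0$ holds by definition of the induced signal, $f_\mathbf{f}=\sum_i \mathbf{f}_i\mathds{1}_{I_i}$.

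For the inductive step, fix $x\in I_i$. Since $K_G$ is a step kernel on $\mathcal{I}_n$ with value $A_{i,j}/B$ on $I_i\times I_j$, and $f^{(\ell-1)}$ is by hypothesis constant equal to $\mathbf{f}^{(\ell-1)}(j)$ on $I_j$, the integral splits over the intervals:
\[
\int_{[0,1]}K_G(x,y)f^{(\ell-1)}(y)\,dy=\sum_{j\in[n]}\frac{A_{i,j}}{B}\,\mathbf{f}^{(\ell-1)}(j)\,\mu(I_j)=\frac{1}{B}\cdot\frac1n\sum_{j\in[n]}A_{i,j}\mathbf{f}^{(\ell-1)}(j).
\]
Multiplying by $B$ and using positive homogeneity of ReLU, $B\cdot\mathrm{ReLU}(t/B)=\mathrm{ReLU}(t)$ for $B>0$, the right-hand side becomes exactly the graph update \eqref{eq:mpnns2}. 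This gives $f^{(\ell)}(x)=\mathbf{f}^{(\ell)}(i)$, and in particular shows that $f^{(\ell)}$ is again a step signal on $\mathcal{I}_n$, which closes the induction. The final layer has no ReLU, and there the identity $B\cdot(t/B)=t$ gives the same conclusion directly.

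The only real subtlety is bookkeeping of normalizations. The $1/B$ in $K_G$ is needed so that $K_G$ takes values in $[-1,1]$, and the $B$ prefactor in $\Phi_{B,L}$ cancels it exactly; this is where $B>0$ and homogeneity of ReLU are used. The $1/n$ in the graph aggregation matches $\mu(I_j)=1/n$. I would also remark that $\Phi(G,\mathbf{f})$ in the statement should be read as the SR-MPNN $\Psi_L$. Finally, everything holds pointwise for $x$ in the interior of each $I_i$, hence almost everywhere, which suffices since signals are defined up to null sets.
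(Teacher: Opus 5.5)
Your proposal is correct and follows essentially the route the paper relies on. The paper gives no proof of its own; it invokes the graph--kernel equivalence of \cite{signal23,signal25}, which is this same layer-by-layer induction (the structure also used in its proof of \cref{lem:ker_equi}). Your explicit bookkeeping fills in what the citation leaves implicit: $\mu(I_j)=1/n$ matching the $1/n$ aggregation, $B\cdot\mathrm{ReLU}(t/B)=\mathrm{ReLU}(t)$ for $B>0$ cancelling the $1/B$ in $K_G$, and reading $\Phi(G,\mathbf{f})$ as $\Psi_L$.
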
 
\end{document}